\documentclass{article}

\usepackage{iclr2027_conference,times}

\usepackage{amsmath,amssymb,amsfonts,amsthm,mathtools}
\usepackage[utf8]{inputenc}
\usepackage[T1]{fontenc}
\usepackage[colorlinks=true,linkcolor=blue!45!black,citecolor=blue!45!black,urlcolor=blue!45!black]{hyperref}
\usepackage{url}
\usepackage{booktabs}
\usepackage{nicefrac}
\usepackage{microtype}
\usepackage{xcolor}
\usepackage{graphicx}
\usepackage{tabularx}
\usepackage{array}
\usepackage{multirow}
\usepackage{placeins}
\usepackage[capitalize,noabbrev]{cleveref}
\usepackage{tcolorbox}

\theoremstyle{plain}
\newtheorem{theorem}{Theorem}[section]
\newtheorem{proposition}[theorem]{Proposition}
\newtheorem{lemma}[theorem]{Lemma}
\newtheorem{corollary}[theorem]{Corollary}
\theoremstyle{definition}
\newtheorem{definition}[theorem]{Definition}

\theoremstyle{remark}
\newtheorem{remark}[theorem]{Remark}

\iclrfinalcopy

\title{From Discovery to Decision: \\Finite-Budget Recoverability in LLM Voting}

\author{Shaoang Li \\
Stony Brook University \\
\texttt{shaoang.li@stonybrook.edu}
\And
Jian Li \\
Stony Brook University \\
\texttt{jian.li.3@stonybrook.edu}}

\begin{document}

\maketitle
\lhead{Preprint}

\begin{abstract}

Voting over multiple LLM responses is a common primitive in test-time scaling and ensemble inference. Collecting more responses can expand the candidate pool and increase the chance that a correct answer is discovered. Under a fixed call budget, a discovered answer still needs to accumulate enough support within the remaining calls to become the final plurality winner, creating a discovery-to-decision gap. In this work, we characterize this gap through the realized vote state and remaining call budget. We derive a sharp recoverability threshold and show that, as sampling proceeds, the observed candidate set can only expand while the set of reachable endpoint winners can only contract, inducing a candidate-level conversion window. 
Under a specified iid response law, the same state yields exact finite-horizon endpoint probabilities. 
We further show that merging wrong-answer identities preserves single-call correctness and cannot improve plurality accuracy, and that the effect of redistributing wrong-answer probability depends on the realized vote state. Singleton reachability yields a gold-free exact locking certificate. For a known answer universe, its first trigger is the earliest prefix at which all admissible continuations yield the same fixed-budget output. Empirically, most discovered-but-unselected correct answers lose reachability only after discovery. In a controlled Word16 study, input permutation improves raw-plurality accuracy by 21.1 points with essentially unchanged single-call correctness. Exact locking saves 28--30\% of calls at a 16-call budget while preserving every fixed-budget output.

\end{abstract}

\section{Introduction}
\label{sec:intro}

Repeated sampling is a standard way to spend additional inference-time compute with language models.
Voting aggregates sampled responses into a single prediction and is widely used in self-consistency, test-time scaling and multi-agent inference \citep{wang2023selfconsistency,brown2024largelanguagemonkeys}.
Collecting more responses can expand the candidate pool and increase the chance of discovering a correct answer, but discovery does not guarantee that the answer will be selected \citep{stroebl2024flaws,bay2026modalceiling}.
Under a fixed call budget, a newly discovered answer must accumulate enough support within the remaining calls to become the final plurality winner. 

For example, if a correct answer first appears with one vote while a wrong answer already has five and only three calls remain, even an all-correct suffix cannot make the correct answer win. 
An early discovery can also fail when later votes strengthen a rival, while a late discovery can still recover if its deficit is small enough. A correct answer may also become the current plurality winner and subsequently lose at the endpoint. Thus, the opportunity to win depends on the \emph{current vote state}: the accumulated vote counts together with the remaining call budget. We study this \emph{discovery-to-decision gap}: when a discovered answer can still become the fixed-budget winner and when that opportunity has disappeared. These possibilities motivate distinguishing five events along a trajectory: \emph{discovery}, \emph{recoverability}, \emph{current selection}, \emph{endpoint selection}, and \emph{decision locking}. \Cref{fig:trajectories} illustrates why these events need not coincide. This state view leads to our central question:
\vspace{-0.09in}
\begin{tcolorbox}[colback=white!5!white,colframe=white!75!white]
\emph{Given the realized vote state and remaining budget, what can additional inference still change: which answers can still win, how likely they are to do so,
and when the fixed-budget output is already determined?}
\end{tcolorbox}
\vspace{-0.1in}

We answer this question with a \textbf{finite-budget recoverability} framework for plurality voting with retained unit votes and fixed tie breaking. Given the realized vote state and remaining budget, each candidate has a sharp minimum-support threshold for becoming the endpoint winner. These thresholds define a reachable-winner set that can only contract as the budget is spent, while the observed candidate set can only expand. The prefixes at which a candidate is both observed and reachable form its possibly empty, contiguous \emph{conversion window}. Under a specified iid response law, the same state also determines exact finite-horizon endpoint probabilities.

\begin{figure}[t]
\centering
\includegraphics[width=\linewidth]{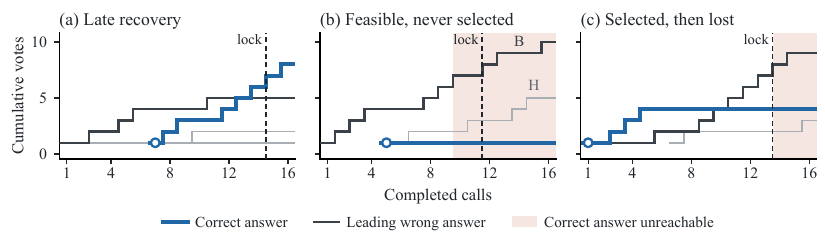}
\par\vspace{-0.2em}
\vspace{-0.1in}
\caption{Three 16-call trajectories (Qwen3.5-2B on forced-choice MMLU-Pro, chosen retrospectively with gold labels). Circles mark discovery, thin gray lines show other wrong answers, and dashed lines mark decision locking. In (b), the correct answer is unreachable from call 10, but H can still overtake B, so the output locks only at call 12.}
\label{fig:trajectories}
\vspace{-0.2in}
\end{figure}

Empirically, these distinctions matter. Across three response protocols, most correct answers that are discovered yet absent from the endpoint decision remain reachable when first discovered and lose reachability only later. We evaluate the probabilistic layer prospectively using plug-in response laws estimated separately from the suffixes they predict. The resulting forecasts distinguish which unresolved states are more likely to recover, while short response-law estimates limit calibration near the extremes.

Beyond reachability, endpoint success also depends on which wrong-answer identities repeatedly accumulate support. We prove that merging wrong-answer identities weakly decreases plurality accuracy while preserving single-call correctness, and derive a state-dependent pivotality formula for how transfers of wrong-answer probability mass change endpoint success. Prospective effect forecasts and oracle recoding support this state dependence. These results motivate independently permuting input representations across calls to diversify recurring errors. On Word16, permutation raises Qwen3.5-4B raw-plurality accuracy by 21.1 percentage points while leaving single-call correctness essentially unchanged, demonstrating that substantial aggregation gains can arise from changes in the structure of repeated errors. The gain attenuates under selectors that already remove or fuse repeated wrong answers, and this generator--selector interaction replicates on fresh inputs.

Finally, singleton reachability yields a gold-free certificate that the fixed-budget plurality output is already determined for every admissible continuation. For a known answer universe, its first trigger is the earliest universally valid exact stopping time among prefix-only policies that must preserve the fixed-budget output. The certificate remains exact with budget-consuming abstentions and adaptive proposal policies, and a conservative extension covers unseen competitors. We further derive a lock-time law: under iid calls, the asymptotic saving fraction is governed by the modal margin, with a central limit theorem describing its fluctuations. This exact-locking structure is closely related to classical deterministic curtailment in sequential analysis \citep{alling1963early,bechhofer1959multinomial}. The predicted savings match large-budget public data. At a 16-call budget, replay saves 28--30\% of calls while preserving every fixed-budget output, and physical execution confirms request-time savings.

Our main contributions are summarized as follows:

\begin{itemize}
    \item \textbf{Finite-budget reachability and conversion windows.} We formulate repeated LLM sampling through the evolving vote state and derive a sharp threshold characterizing which candidates can still become the endpoint winner. The observed candidate set expands monotonically while the reachable-winner set contracts, inducing a possibly empty, contiguous conversion window for each candidate. Under a specified iid response law, the same state yields exact finite-horizon endpoint probabilities.

    \item \textbf{Error competition and representation diversity.}
    We prove that merging wrong-answer identities preserves single-call correctness and cannot improve plurality accuracy or lower recovery thresholds, and derive a state-dependent pivotality formula for probability transfers across wrong identities. A pre-specified Word16 study shows that representation diversification raises raw-plurality accuracy with essentially unchanged single-call correctness, and that the gain depends on the endpoint selector.

    \item \textbf{Exact decision locking.} We show that singleton reachability yields a gold-free certificate that exactly preserves the fixed-budget plurality output for every admissible continuation, including dependent responses and abstentions. For a known answer universe, its first trigger is the earliest universally valid exact stopping time among prefix-only policies that must preserve the fixed-budget output. We characterize its savings with an asymptotic limit and a central limit theorem. We validate the predicted savings in replay and on large-budget public data, and we measure request-time savings in physical execution, with every fixed-budget output preserved.
\end{itemize}

\begin{samepage}
\section{A State-Based Theory of Finite-Budget Voting}
\label{sec:structure}\label{sec:laws}

\paragraph{Five distinct events.}
We distinguish five events along a finite-budget voting trajectory: \emph{discovery} (the correct answer has appeared), \emph{recoverability} (some continuation within the remaining call budget makes a discovered answer the endpoint winner), \emph{current selection} (a discovered answer is the plurality winner at the observed prefix), \emph{endpoint selection} (a discovered answer is the winner after the full budget is spent), and \emph{decision locking} (every admissible continuation produces the same endpoint answer, which may be wrong).

\end{samepage}

\subsection{Voting States and Remaining Budget}

Fix an item $x$ with correct (gold) answer $y^\star\in\mathcal{U}_x$, where $\mathcal{U}_x$ is a finite answer universe, and let $B$ be the total call budget. A fixed interpretation rule maps each response to an answer $y\in\mathcal{U}_x$ or to an abstention $\bot$. Let $\kappa_x$ be an injective tie priority, with larger values winning ties. Each call consumes one unit of budget and appends at most one unit vote without changing earlier votes.

After $N$ calls, let $r=B-N$ be the remaining budget and let $v_N=(v_N(y))_{y\in\mathcal{U}_x}$ be the accumulated count state. The observed candidate set is $\mathcal{C}_N=\{y:v_N(y)>0\}$. Plurality maximizes $(v_N(y),\kappa_x(y))$ over candidates and returns $\bot$ when the pool is empty. All deterministic results below allow dependent or adaptive responses, and probability calculations additionally specify an iid law for future calls. We call any length-$m$ future sequence in $(\mathcal{U}_x\cup\{\bot\})^m$ an \emph{admissible continuation} when it obeys this fixed update rule: each call appends one unit vote to its interpreted answer, or appends no vote under $\bot$, while earlier votes and the tie priority remain unchanged.

Let $A(v)$ indicate that plurality selects gold from state $v$. Coverage after $N$ calls is the event $y^\star\in\mathcal{C}_N$, and accuracy is $\mathbb{E}[A(v_N)]$.

\subsection{Reachability and the Conversion Window}

Reachability gives the first answer to our central question: \textit{which endpoint decisions remain attainable from the realized prefix?} Consider a count state $v$ with $r$ calls remaining. Suppose gold has one vote and the leading wrong answer has five. Gold needs four further votes if it wins their count tie and five if it loses the tie, assuming the rival receives no additional votes. In general, write $v_\star=v(y^\star)$ and $\delta_a=\mathbf{1}\{\kappa_x(a)>\kappa_x(y^\star)\}$, and define
\begin{equation}
 L(v)=\max\left\{0,\ 1-v_\star,\ \max_{a\ne y^\star}\bigl(v(a)-v_\star+\delta_a\bigr)\right\}.
 \label{eq:finite-budget-required}
\end{equation}
The presence term $1-v_\star$ requires at least one gold vote when gold has not yet appeared.
Replacing gold by an arbitrary answer $y\in\mathcal{U}_x$ in \cref{eq:finite-budget-required} gives $L_y(v)$, and we write
\begin{equation}
 \mathcal{R}_r(v)=\{y\in\mathcal{U}_x:L_y(v)\leq r\}.
 \label{eq:reachable-winners}
\end{equation}
Reachability is combinatorial: it ranges over all answer continuations allowed by the remaining call budget, including continuations that a particular response law may assign zero probability. It concerns answer candidates in $\mathcal{U}_x$ and excludes $\bot$, which can arise only from an empty state. At a fixed total budget,
\begin{equation}
 \mathcal{C}_N\subseteq\mathcal{C}_{N+1},\qquad
 \mathcal{R}_{B-N-1}(v_{N+1})\subseteq\mathcal{R}_{B-N}(v_N).
 \label{eq:shrinking-window}
\end{equation}

\begin{proposition}[Reachable winners]
\label[proposition]{prop:reachability}
(i) From count state $v$, no continuation of fewer than $L_y(v)$ calls makes $y$ the plurality winner, even at an intermediate prefix, and for every $m\geq L_y(v)$ the length-$m$ continuation that votes only for $y$ ends with $y$ as the winner. Hence $\mathcal{R}_r(v)$ is exactly the set of answers that some continuation of the remaining $r$ calls makes the endpoint winner. In particular, gold is recoverable if and only if $r\geq L(v)$. (ii) At a fixed total budget $B$, every admissible update satisfies \cref{eq:shrinking-window}: observed candidate sets expand, reachable-winner sets contract, and an answer that leaves the reachable set cannot return within the same budget.
\end{proposition}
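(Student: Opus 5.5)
Part (i) is a direct lower-bound/upper-bound pair for the quantity $L_y(v)$, and part (ii) then follows by checking how $L_y$ changes under a single call.

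\textbf{Lower bound in (i).} Fix $y$ and a continuation of length $m$. Let $g$ be the number of votes it gives to $y$ and $h_a$ the number it gives to $a\neq y$, so $g+\sum_a h_a\le m$ (abstentions add nothing). If $y$ is the plurality winner after these $m$ calls, then two things hold. First, $y$ must be a candidate, so $v(y)+g\ge1$, i.e.\ $g\ge 1-v(y)$. Second, for every $a\ne y$ we need $(v(y)+g,\kappa_x(y))>(v(a)+h_a,\kappa_x(a))$ lexicographically. This is equivalent to $v(y)+g\ge v(a)+h_a+\delta^y_a$, where $\delta^y_a=\mathbf 1\{\kappa_x(a)>\kappa_x(y)\}$. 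Using $h_a\ge0$, this gives $g\ge v(a)-v(y)+\delta^y_a$. Together with $g\ge0$, we get $m\ge g\ge L_y(v)$. The argument applies to any prefix of the continuation, so no prefix of length $<L_y(v)$ makes $y$ the winner either.

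\textbf{Upper bound in (i).} Take $m\ge L_y(v)$ and the continuation that votes only for $y$. Then $g=m$ and $h_a=0$. Each inequality above holds because $m\ge L_y(v)$, and $v(y)+m\ge1$ makes $y$ a candidate. Hence $y$ wins.

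\textbf{Consequences of (i).} The characterization of $\mathcal R_r(v)$ follows immediately: reaching the endpoint requires exactly $r$ further calls, and $L_y(v)\le r$ is both necessary and sufficient. The gold case is $y=y^\star$, where $L_{y^\star}=L$.

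\textbf{Part (ii).} Candidate expansion is immediate, since counts never decrease. For contraction, one step moves from $(v,r)$ to $(v',r-1)$, where $v'=v+e_z$ for some answer $z$, or $v'=v$ under $\bot$. It suffices to show $L_y(v')\ge L_y(v)-1$ for every $y$. Then $L_y(v')\le r-1$ implies $L_y(v)\le r$, which is exactly the inclusion in \cref{eq:shrinking-window}. Equivalently, any winning $(r-1)$-continuation from $v'$, prepended with the realized call, is a winning $r$-continuation from $v$. This follows directly from the sharpness in (i) and avoids case analysis. I would present it this way.

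The direct inequality is also easy to check by cases. If $z=y$, every term in the max defining $L_y$ drops by exactly $1$, except the $0$ floor. If $z\ne y$ or the call is $\bot$, no term decreases.

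Iterating the one-step inclusion shows $\mathcal R_{B-N'}(v_{N'})\subseteq\mathcal R_{B-N}(v_N)$ for all $N'\ge N$. So an answer that leaves the reachable set cannot return within the same budget.

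The only delicate point is the tie-breaking bookkeeping: translating the lexicographic comparison into the integer inequality with $\delta^y_a$, and handling the presence term for unobserved $y$ and the empty pool. I would verify these carefully; the rest is routine.
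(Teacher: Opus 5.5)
Your proposal is correct and matches the paper's proof: in (i), the same count-increment lower bound $X_y\ge L_y(v)$ combined with the all-$y$ continuation for sharpness; in (ii), the same prepend-the-realized-call argument, abstentions included, for $\mathcal{R}_{r-1}(v')\subseteq\mathcal{R}_r(v)$. Your direct case check that $L_y(v')\ge L_y(v)-1$ is a valid but redundant alternative to that argument.
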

Both parts specialize possible-winner reasoning \citep{lang2020possible} to plurality with $r$ added votes: a win needs $L_y(v)$ further votes for $y$, which the all-$y$ continuation supplies, and every continuation after an update was available before it (proofs, including abstentions: \cref{app:finite-budget}).

\begin{definition}[Candidate conversion window]
\label{def:conversion-window}
For any candidate $y$ first observed at $D_y=\min\{N:v_N(y)>0\}$, its conversion window is $\mathcal{W}_y=\{N\in\{D_y,\ldots,B\}:y\in\mathcal{R}_{B-N}(v_N)\}$. By \cref{prop:reachability}(ii), a nonempty window runs from $D_y$ to the last prefix at which $y$ is reachable. It is empty when $y$ is already infeasible at discovery and extends through $B$ when $y$ remains reachable at the endpoint.
\end{definition}

After discovery, a candidate can still become the fixed-budget winner only while it remains reachable. When the reachable set becomes a singleton in a nonempty state, the output is locked for every continuation (\cref{app:decision-locking,sec:stopping-replay}).

\subsection{Endpoint Recovery under a Specified Response Law}

Reachability determines which endpoint outcomes remain possible. Their probabilities additionally depend on the future response process. Let future calls be iid with unconditional probabilities $p_y$ for answers and $p_\bot$ for abstention, with $p_\bot+\sum_y p_y=1$. Write $p$ for this response law, and let $Q_r(v;p)$ denote the probability that gold is selected after exactly $r$ additional calls from state $v$. With $e_y$ denoting one additional vote for $y$,
\begin{equation}
 Q_0(v;p)=A(v),\qquad
 Q_{r+1}(v;p)=p_\bot Q_r(v;p)+\sum_{y\in\mathcal{U}_x}p_y Q_r(v+e_y;p).
 \label{eq:abstention-recursion}
\end{equation}
The recursion accounts jointly for future gold votes, competing votes, and abstentions. If $p_{y^\star}>0$, then $Q_r(v;p)>0$ exactly when $r\geq L(v)$, so the probabilistic and combinatorial views share one boundary (binomial tail bound, exact multinomial formula, abstention mixture and first-passage contrast: \cref{app:sharp-threshold,app:endpoint-laws}).

\paragraph{Unknown response law.}
When the future law is unknown, a plug-in estimate $\widehat p$ gives the forecast $Q_r(v;\widehat p)$, whose accuracy also depends on estimating the response law.  \cref{sec:conversion,sec:error-concentration} evaluate such forecasts prospectively.

\section{Empirical Evidence for State-Dependent Recovery}
\label{sec:conversion}\label{sec:experiments-m}

We ask whether these state distinctions matter empirically: whether discovered answers stay reachable and convert, and how much the realized state predicts.

\subsection{Protocols}

Three response protocols are analyzed separately (\cref{tab:study-map}). Forced-choice MMLU-Pro \citep{wang2024mmlupro} (200 items and 2,000 trajectories from Qwen3.5-0.8B/2B/4B, Gemma-4-E2B and Ministral-3-3B) turns each generated rationale into one vote by scoring the answer options. GSM-Symbolic \citep{mirzadeh2024gsmsymbolic} (250 items from 100 template families, with the same five checkpoints and 6,250 trajectories) and native generation (Qwen3.5-2B/4B on 1,200 MMLU-Pro items and 200 items of Word16, a controlled word-sorting task, with 2,800 trajectories) parse a generated final answer under native chat templates, and invalid responses consume calls without adding votes. Intervals are pointwise 95\% and clustered by item or GSM template family. State classifications use gold labels only offline. \Cref{app:experimental-overview} gives the pipelines and the scoring sensitivity.

\subsection{Failure and Recovery across Protocols}
\label{sec:native-validation}

Coverage exceeds endpoint accuracy in every cohort (\cref{fig:gap}a). Each gap counts correct answers present in the pool but absent from the final decision. In the $B=16$ cohorts of \cref{tab:unified-failures}, between 2\% and 46\% of the trajectories that sample the correct answer end with a wrong plurality output. In 10,000 public samples per problem from Llama-3-8B and 70B \citep{brown2024largelanguagemonkeys}, the gap is small for 70B on GSM8K \citep{cobbe2021gsm8k} (2.4 points at $B=16$) and widens with the budget on MATH \citep{hendrycks2021math}. From $B=16$ to $1{,}024$, 70B coverage on MATH rises from 80.0\% to 95.2\% while plurality accuracy stays near 60\%, and the 8B gap grows from 27.6 to 48.4 points (\cref{app:scope-scale}).

We classify these failures as \emph{infeasible at discovery} ($L(v_t)>B-t$, where $t$ is the call that adds the first gold vote), \emph{feasible but never selected}, or \emph{selected then lost}. These mutually exclusive, gold-aware classes separate inability to recover from failure to acquire or retain the lead, and \cref{app:experimental-overview} gives the common definition and discovery-time convention.

\begin{figure}[t]
\centering
\includegraphics[width=\linewidth]{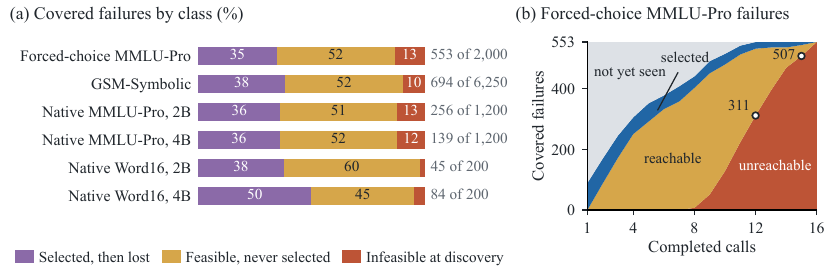}
\vspace{-0.2in}
\caption{Where covered correct answers are lost. (a) Covered failures (the correct answer is sampled but not selected at the endpoint) by class, as a percentage of each cohort's covered failures. The labels at right give covered failures out of all trajectories (counts in \cref{tab:unified-failures}). (b) The 553 forced-choice MMLU-Pro covered failures by the correct answer's state after each call.}
\label{fig:gap}\label{fig:failure-composition}
\vspace{-0.2in}
\end{figure}

Impossibility usually arrives after discovery. In every $B=16$ cohort, 81--98\% of covered failures were still recoverable when the correct answer first appeared (\cref{tab:unified-failures}). In forced-choice MMLU-Pro, only 74 of 553 failures are already infeasible at discovery, but 311 are infeasible by call 12 and 507 before the final call (\cref{fig:gap}b, with the complete conversion-window timing in \cref{tab:historical-window-audit}).

Late discovery, after call 8, can still convert. GSM converts 41/219 late discoveries (18.7\%, $[13.5,23.9]$). Native MMLU-Pro converts 13/82 and 3/33 for 2B and 4B, and Word16 converts 8/21 and 3/21 (post-discovery feasibility convention, \cref{app:gsm-sensitivities}). Forced-choice MMLU-Pro converts none of its 120 late discoveries, although 46 remain feasible, so the signal carried by discovery time depends on the protocol, while the reachability boundary is exact in every protocol.

\subsection{State-Conditioned Forecasting}

We use prospective forecasting to measure how much information the realized vote state carries about subsequent endpoint selection. GSM-Symbolic uses a separate 16-call sequence to estimate each item's response law. Four further sequences supply call-8 states whose call-16 forecasts were fixed before suffix generation (\cref{app:gsm-forecasting}).

Among 490 covered-but-unselected call-8 states across two evaluation waves, the plug-in forecast has AUC $0.825$ $[0.77,0.88]$. Its mean observed-minus-forecast error is $-0.4$ percentage points. The error concentrates in the extreme ranges, where short law estimates matter most. The lowest and highest forecast ranges have errors of $+5.5$ and $-11.9$ points (\cref{fig:forecast-calibration}a). In 55 lowest-range states the correct answer never appears in the 16-call law sequence, so the forecast is exactly zero (\cref{app:gsm-sensitivities}).

\begin{figure}[t]
\centering
\includegraphics[width=\linewidth]{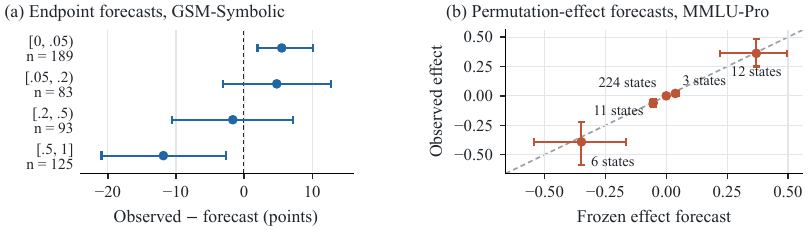}
\vspace{-0.25in}
\caption{Forecasting from the realized state. (a) GSM endpoint calibration in the covered-but-unselected call-8 stratum ($n=490$, laws from 16 calls): observed-minus-forecast error by pre-specified forecast range, with family-bootstrap 95\% intervals. The pooled analysis includes an adaptively added second wave (\cref{tab:gsm-calibration-waves}). (b) Prospective Permuted-minus-Base effect forecasts against observed effects on 128 MMLU-Pro inputs (laws from 256 calls per arm), by pre-specified bin, with item-bootstrap 95\% intervals (\cref{sec:error-concentration}).}
\label{fig:forecast-calibration}
\vspace{-0.15in}
\end{figure}

State information improves first-wave Brier scores over discovery-bin, law-only and persistence baselines, and the plug-in forecast, which fits no parameter to outcomes, matches a gold-aware logistic state comparator trained on MMLU-Pro outcomes within sampling error (\cref{tab:gsm-brier}). Short-sample law estimation and the iid assumption both limit calibration (\cref{app:gsm-sensitivities}). Retrospective replays with longer law samples support a role for the first. With 2,000-sample laws the public-data discrepancies in this stratum stay below 0.4 points, and 16-sample laws miss the highest range by 12 to 28 points on three of four model--task pairs (\cref{app:scope-forecast}). Dependence among calls remains unresolved.

\section{Error Competition Shapes Recoverability}
\label{sec:error-concentration}

Feasibility leaves open which reachable candidate will ultimately win. The key additional object is error competition: repeated probability mass on the same wrong identities creates stronger rivals and changes both recovery thresholds and endpoint success.

\subsection{Controlled Error-Identity Merging}

\begin{proposition}[Merging wrong identities cannot improve plurality]
\label[proposition]{prop:wrong-merging}
Fix a realized response sequence, retaining its gold and abstention positions. Partition wrong identities into blocks and replace every answer in a block by one wrong identity whose tie priority against gold is at least as strong as every member's. At every budget, a gold win after merging implies a gold win before merging. Thus the coupling preserves coverage and single-call correctness and weakly decreases plurality accuracy under any distribution of sequences.
\end{proposition}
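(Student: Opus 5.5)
The plan is a pathwise coupling argument. Fix the realized sequence $(z_1,\ldots,z_B)$ with $z_t\in\mathcal{U}_x\cup\{\bot\}$. Let $\phi$ map each wrong identity to the representative of its block, and fix $\phi(y^\star)=y^\star$ and $\phi(\bot)=\bot$. Take the merged sequence to be $(\phi(z_1),\ldots,\phi(z_B))$. After any number $N\le B$ of calls, the merged count state $v'_N$ satisfies $v'_N(y^\star)=v_N(y^\star)$, and for each representative $b$ it satisfies $v'_N(b)=\sum_{a\in\phi^{-1}(b)}v_N(a)$; every other wrong identity has count zero. Gold and abstention positions are unchanged, so the event $y^\star\in\mathcal{C}_N$ and the indicator that each single call is correct are identical in the two sequences. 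This already gives preservation of coverage and single-call correctness at every budget.

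The core step is the implication $A(v'_N)=1\Rightarrow A(v_N)=1$. Plurality selects gold from $v$ exactly when $v(y^\star)>0$ and, for every wrong $a$, either $v(a)<v(y^\star)$, or $v(a)=v(y^\star)$ with $\kappa_x(a)<\kappa_x(y^\star)$. Equivalently, $v(y^\star)\ge 1$ and $v(a)+\delta_a\le v(y^\star)$ for all $a\ne y^\star$; this is the condition $L(v)=0$ from \cref{eq:finite-budget-required}. Suppose the condition holds for $v'_N$. Fix a wrong $a$ with representative $b=\phi(a)$. Then $v_N(a)\le v'_N(b)$ because counts are nonnegative. Also $\delta_a\le\delta_b$, since $b$'s priority against gold is at least as strong as $a$'s. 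So $v_N(a)+\delta_a\le v'_N(b)+\delta_b\le v'_N(y^\star)=v_N(y^\star)$. Gold presence carries over directly, so $A(v_N)=1$.

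One subtlety is that merged representatives may be new identities, or may coincide with members, and the tie priority must still be injective. I will define the merged universe so that each representative carries a priority value chosen as stated. The comparison only uses $\delta_b$ relative to gold, so injectivity among the representatives themselves does not affect the gold-versus-rival inequalities.

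Finally, I take expectations. Any distribution over original sequences, pushed forward through $\phi$, yields the merged distribution, and the pathwise inequality $A(v'_N)\le A(v_N)$ gives $\mathbb{E}[A(v'_N)]\le\mathbb{E}[A(v_N)]$ at every budget $N$. As a by-product, the same inequality $v_N(a)+\delta_a\le v'_N(\phi(a))+\delta_{\phi(a)}$ shows that $L(v'_N)\ge L(v_N)$, so merging also cannot lower recovery thresholds.

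I expect the only real obstacle to be the tie-priority bookkeeping: the direction of $\delta$ and the handling of the case where gold is absent, where the presence term $1-v_\star$ is identical on both sides. The rest is routine.
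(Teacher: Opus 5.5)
Your proposal is correct and follows essentially the same route as the paper: a prefix-by-prefix coupling in which each original wrong identity's count and tie advantage against gold are dominated by its block representative's, so $v(a)+\delta_a\le v'(\phi(a))+\delta_{\phi(a)}\le v(y^\star)$ whenever gold wins after merging, the same inequality gives $L(v^{\mathrm{merge}})\ge L(v)$, and averaging under any sequence law gives the accuracy claim. Your explicit handling of the representative's priority and of zero-count identities is a little more careful than the paper's sketch, but it changes nothing in substance.
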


Each merged block has at least each member's count and tie advantage, while gold is unchanged, so every adjusted deficit weakly increases: along the coupling, the merged state satisfies $L(v^{\mathrm{merge}})\geq L(v)$ at every prefix, and a state in which gold is infeasible stays infeasible at the same remaining budget (proof and splitting examples: \cref{app:concentration-design}). The effect can be large: with per-call probabilities 0.3 for gold and 0.1 for abstention, and with gold losing count ties, splitting one wrong answer into 32 equally likely ones raises 16-call plurality accuracy from 7.4\% to 88.7\% at unchanged coverage.

A complementary selector-side result appears in \cref{prop:necessary-filter}: an input-defined necessary-condition filter preserves every correct plurality win and weakly lowers the recovery threshold of every retained correct candidate. Word16's exact multiset check instantiates this control and lets us separate generator-side diversity from selector strength in the experiment below. On 384 fresh Word16 inputs, filtering the same Permuted pools raises plurality accuracy from 74.2\% to 92.2\%, correcting 69 inputs and, as guaranteed, damaging none (Holm-adjusted exact $p<10^{-20}$). It removes 85\% of wrong votes yet \emph{raises} their collision conditional on error by $0.24$ $[0.19,0.30]$, so conditional concentration can move against accuracy. Exact collision counterexamples and the filtering proof appear in \cref{app:concentration-design}.

\paragraph{State-dependent probability transfers.} More generally, the effect of redistributing wrong-answer probability depends on the realized state. Keep gold and abstention probabilities fixed and move mass $\epsilon$ from wrong answer $a$ to wrong answer $b$. Call a rival pivotal when, after $r-1$ future calls, gold wins and one additional vote for that rival removes the win.

\begin{proposition}[Pivotal transfer]
\label[proposition]{prop:pivotal-transfer}
For iid future calls with law $p$, distinct wrong answers $a,b$, $r\geq1$ and every $\epsilon$ in the legal range, with both probabilities under the transferred law,
\begin{equation}
\frac{d}{d\epsilon}Q_r(v;p+\epsilon(e_b-e_a))=r\{\Pr(a\text{ pivotal})-\Pr(b\text{ pivotal})\}.
\label{eq:wrong-transfer-derivative}
\end{equation}
\end{proposition}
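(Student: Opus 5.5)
We will write $Q_r(v;p)$ as a polynomial in the response probabilities and differentiate it term by term. Expanding the recursion \cref{eq:abstention-recursion} (or summing over length-$r$ sequences directly) gives
\[
Q_r(v;p)=\sum_{s\in(\mathcal{U}_x\cup\{\bot\})^r}\prod_{j=1}^r p_{s_j}\,A\bigl(v+\textstyle\sum_j e_{s_j}\bigr),
\]
with $e_\bot=0$. This is a polynomial in $(p_y)_y$ and $p_\bot$. Along the path $p(\epsilon)=p+\epsilon(e_b-e_a)$ only $p_a$ and $p_b$ change, and they change at rates $-1$ and $+1$. The chain rule therefore gives
\[
\frac{d}{d\epsilon}Q_r=\partial_{p_b}Q_r-\partial_{p_a}Q_r,
\]
where the partial derivatives treat the coordinates as free and are evaluated at $p(\epsilon)$. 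Evaluating there is what the phrase ``both probabilities under the transferred law'' means. In the legal range all coordinates stay nonnegative and sum to one, so the expansion remains a valid probability formula.

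The key step is to identify $\partial_{p_c}Q_r$ for any wrong answer $c$. Differentiating the product $\prod_j p_{s_j}$ with respect to $p_c$ removes one factor $p_c$ at each position $j$ with $s_j=c$. Regroup by that position and sum over $j$. This gives $r$ times the expectation, over $r-1$ iid calls under the current law, of $A(w+e_c)$, where $w$ is the state reached after those $r-1$ calls. Because the calls are exchangeable and votes are additive, the order of the calls does not matter, so every position contributes the same term. In short,
\[
\partial_{p_c}Q_r=r\,\mathbb{E}\bigl[A(W_{r-1}+e_c)\bigr],\qquad W_{r-1}=v+\textstyle\sum_{j<r}e_{S_j}.
\]

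It follows that
\[
\frac{d}{d\epsilon}Q_r=r\,\mathbb{E}\bigl[A(W+e_b)-A(W+e_a)\bigr].
\]
To introduce pivotality, add and subtract $A(W)$:
\[
A(W+e_b)-A(W+e_a)=\bigl(A(W)-A(W+e_a)\bigr)-\bigl(A(W)-A(W+e_b)\bigr).
\]
We then need a monotonicity fact: adding a vote for a wrong answer $c$ cannot turn a gold loss into a gold win. The new vote raises $v(c)$ and leaves gold and all other rivals unchanged, so if gold was not the plurality maximizer before, it still is not. Hence $A(W)-A(W+e_c)\in\{0,1\}$, and it equals $1$ exactly when gold wins at $W$ and the extra $c$ vote removes the win, which is the definition of $c$ being pivotal. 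Taking expectations gives $\Pr(a\text{ pivotal})-\Pr(b\text{ pivotal})$, which is \cref{eq:wrong-transfer-derivative}.

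The main obstacle is the regrouping in the key step. We must check that removing one $p_c$ factor at position $j$ leaves a product over the other $r-1$ positions that is exactly the iid law of $r-1$ calls, and that the endpoint state depends only on the vote multiset. Both hold because plurality depends only on counts and abstentions add nothing, so the expression does not depend on which position is removed. A minor point is that the derivative is taken of a polynomial, so it is valid at every legal $\epsilon$, including boundary values, as a one-sided derivative.
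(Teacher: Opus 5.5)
Your proposal is correct and follows the paper's proof essentially step for step. The paper also differentiates the ordered-sequence probability polynomial and uses the fact that each of the $r$ positions contributes the same expectation, obtaining $r\{Q_{r-1}(v+e_b;p_\epsilon)-Q_{r-1}(v+e_a;p_\epsilon)\}$; it then uses the fact that a wrong vote cannot create a gold win to write $A(W+e_b)-A(W+e_a)=\mathbf{1}\{a\text{ pivotal}\}-\mathbf{1}\{b\text{ pivotal}\}$, taking one-sided derivatives at the simplex boundary, and your add-and-subtract of $A(W)$ is a slightly more explicit rendering of that last identity.
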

Moving mass away from the more frequently pivotal rival therefore helps (proof: \cref{app:state-competition}), and the sign depends on the state: with one call left, gold losing ties and probabilities $(0.3,0.6,0.1)$ on gold, $a$ and $b$, moving $0.1$ from $a$ to $b$ lowers the wrong-answer collision yet takes gold's endpoint probability from $0.9$ to $0.8$ at counts $(2,0,1)$ and from $0.4$ to $0.5$ at $(2,1,0)$. No statistic of the response law alone, collision included, can therefore order endpoint accuracy (conditional sign thresholds: \cref{app:state-competition}).

\paragraph{A prospective test.}
For Qwen3.5-4B on 128 separate MMLU-Pro inputs, we estimated each arm's response law from 256 reference calls and froze, for every realized 8- and 12-call Base prefix, the exact forecast of the Permuted-minus-Base endpoint effect at $B=16$ before generating 32 independent continuations per arm and prefix. The state-based forecast beats a control that spreads wrong mass uniformly and one that keeps the whole wrong-mass spectrum, hence its collision, but shuffles which wrong answers carry it (squared-error gains $0.0058$ $[0.0010,0.0134]$ and $0.0030$ $[0.0004,0.0068]$, with Bonferroni 97.5\% item-bootstrap intervals). Half of the states are decided by their prefix. Where the forecast effect exceeded $0.02$ in magnitude, its sign was right in 29 of 32 states and the other three did not move (\cref{fig:forecast-calibration}b and \cref{app:state-forecast}). An oracle recoding of the Base continuations tests \cref{prop:pivotal-transfer} directly: relabeling, with probability $1/2$, each response of the most-voted wrong answer as the least-voted one was predicted to raise the endpoint rate by $1.03$ points and raised it by $1.06$ $[0.46,1.82]$, and the reverse gave $-0.12$ and $-0.07$ (\cref{tab:state-forecast-oracle}). Which answers carry the wrong mass predicts an intervention's effect beyond how concentrated that mass is.

\subsection{Representation Diversity and Selector Interaction}

Word16 is a controlled mechanism task: sort 16 words while preserving multiplicities. An exact programmatic solver exists. We use it, together with MMLU-Pro, for a pre-specified representation-diversification study with 64 development and 256 test items per task. Base and Permuted policies sample 16 responses at temperature $0.7$. The latter independently shuffles inputs per call under a fixed rule. The broader pre-specified family also contains high-temperature and response-dependent policies, all charged the same 16-call budget (\cref{app:adaptive-sampling}).

On the 256 paired 4B Word16 test items, permutation raises raw-plurality accuracy from $57.42\%$ to $78.52\%$, a gain of $21.09$ percentage points (bootstrap 95\% CI $[14.06,28.12]$, Holm-adjusted exact $p=5.54\times10^{-8}$). Single-call correctness changes by $-0.22$ points $[-2.71,2.17]$, while mean all-item wrong collision falls from $0.0908$ to $0.0373$. The other two pre-specified primary comparisons are inconclusive after Holm correction, so no direction is established for either. One is 4B Base versus 2B Base. There, 4B has 11.96 points higher single-call correctness and 8.20 points lower plurality accuracy ($p=0.105$). The other compares the response-dependent Probe policy with the development-selected High temperature comparator (\cref{tab:primary-test-details,app:adaptive-sampling}).

\begin{figure}[!htbp]
\centering
\includegraphics[width=\linewidth]{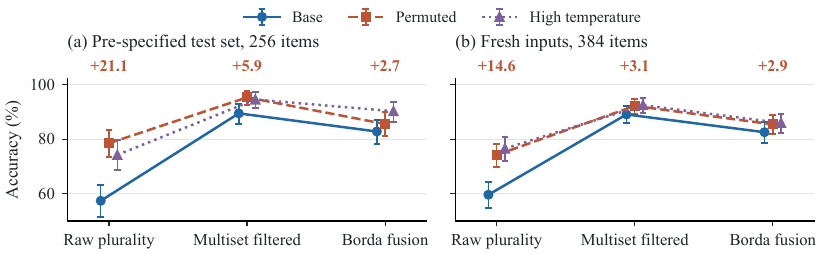}
\vspace{-0.25in}
\caption{Generator--selector interaction for Qwen3.5-4B on Word16, 16 responses per policy: (a) 256 pre-specified test items, (b) 384 fresh inputs. Numbers give the Permuted-minus-Base gain (points). Bars are 95\% item-bootstrap intervals. Paired intervals and cells: \cref{app:diversification-controls} for (a), \cref{app:fresh-mechanism} for (b).}
\vspace{-0.1in}
\label{fig:selector-interaction}
\end{figure}

Representation diversity and selector strength interact. In retrospective selector controls, the permutation gain falls from 21.1 points under raw plurality to 5.9 under multiset filtering and 2.7 under Borda fusion (\cref{fig:selector-interaction}a), consistent with error competition acting through identity counts: a selector that removes or fuses wrong answers leaves less for generator-side diversity to fix. Multiset filtering uses no gold or extra model calls and implements the necessary-condition guarantee. Borda fuses word rankings and may synthesize a full answer absent from the pool, so its endpoint differs from retained-answer plurality. The interaction replicates on the 384 fresh inputs, whose selectors and contrasts were fixed before the outcomes were opened (\cref{fig:selector-interaction}b, with interactions $-11.5$ $[-18.0,-5.0]$ and $-11.7$ $[-19.5,-3.9]$ points for filtering and Borda, 98.3\% intervals). There, high temperature matches or exceeds permutation under every selector but also raises single-call correctness (33.4\% to 36.5\%), whereas permutation leaves it essentially unchanged ($-1.2$ $[-3.3,0.8]$ points) and so isolates the change in error structure. On MMLU-Pro the permutation effect is $+6.25$ $[3.12,9.77]$ points on the study's input-filtered population, $+2.6$ $[-0.3,5.5]$ on a fresh 384-input cohort and $+0.26$ $[-1.56,2.08]$ on a pre-specified 768-input hold-out. The last estimate meets a descriptive, unadjusted $\pm2$-point equivalence criterion at temperature 0.7, and the effect at 1.0 and the temperature interaction are inconclusive. Neither fresh cohort detects an accuracy benefit, although permutation still lowers conditional wrong collision (\cref{app:diversification-secondary,app:fresh-mechanism,app:temp-perm}).

\section{Exact Decision Locking}
\label{sec:stopping-replay}

The contraction of the reachable-winner set gives an executable, gold-free boundary for additional inference: once the current leader is the only reachable endpoint winner, every admissible continuation produces the same fixed-budget plurality output.

\begin{corollary}[Gold-free decision locking]
\label[corollary]{cor:decision-locking}
In a nonempty state with winner $w$, retained unit votes and fixed tie priority,
\begin{equation}
 \mathcal{R}_r(v)=\{w\}
 \quad\Longleftrightarrow\quad
 v(w)-v(a)\geq r+\mathbf{1}\{\kappa_x(a)>\kappa_x(w)\}\quad\forall a\ne w.
 \label{eq:decision-locking}
\end{equation}
Stopping at the first certified prefix, with $r=B-N$, preserves the budget-$B$ output for every continuation, including dependent responses and abstentions.
\end{corollary}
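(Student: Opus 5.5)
The plan is to prove the equivalence by computing $L_y(v)$ for each $y$ in terms of the winner $w$, and then to get the stopping claim from \cref{prop:reachability}. First I show that $w$ itself is always in $\mathcal{R}_r(v)$. Since the state is nonempty and $w$ is the plurality winner, $v(w)\ge 1$. For every $a\ne w$, either $v(w)>v(a)$, or $v(w)=v(a)$ with $\kappa_x(w)>\kappa_x(a)$. In both cases $v(a)-v(w)+\mathbf{1}\{\kappa_x(a)>\kappa_x(w)\}\le 0$. Hence $L_w(v)=0\le r$, so $\mathcal{R}_r(v)=\{w\}$ holds exactly when $L_a(v)>r$ for every $a\ne w$.

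Next I evaluate $L_a(v)$ for $a\ne w$. By \cref{eq:finite-budget-required} with $a$ in the role of gold, $L_a(v)\ge v(w)-v(a)+\mathbf{1}\{\kappa_x(w)>\kappa_x(a)\}$. The key step is to show that the maximum is in fact attained at the rival $w$, so that equality holds. For any other rival $c$, the plurality property of $w$ gives $v(c)+\mathbf{1}\{\kappa_x(c)>\kappa_x(a)\}\le v(w)+\mathbf{1}\{\kappa_x(w)>\kappa_x(a)\}$. I check this by cases:
\begin{itemize}
\item if $v(c)<v(w)$, the inequality holds because indicators are at most $1$;
\item if $v(c)=v(w)$, then $\kappa_x(w)>\kappa_x(c)$, so $\kappa_x(c)>\kappa_x(a)$ forces $\kappa_x(w)>\kappa_x(a)$.
\end{itemize}
The presence term $1-v(a)$ is also dominated, since $v(w)\ge1$. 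The $0$ term is irrelevant, because $v(w)-v(a)+\mathbf{1}\{\kappa_x(w)>\kappa_x(a)\}\ge 0$ by the same plurality property. Therefore $L_a(v)=v(w)-v(a)+\mathbf{1}\{\kappa_x(w)>\kappa_x(a)\}$.

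With this formula, $L_a(v)>r$ is equivalent to $v(w)-v(a)\ge r+1-\mathbf{1}\{\kappa_x(w)>\kappa_x(a)\}=r+\mathbf{1}\{\kappa_x(a)>\kappa_x(w)\}$. The last equality uses injectivity of $\kappa_x$ for $a\ne w$. This gives \cref{eq:decision-locking}.

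For the stopping claim, \cref{prop:reachability}(i) says that every continuation of the remaining $r=B-N$ calls ends with an endpoint winner in $\mathcal{R}_r(v)=\{w\}$, so the budget-$B$ output is $w$. The endpoint pool is nonempty because votes are retained, so the output is never $\bot$. This already holds for abstentions, since an abstention adds no vote and a continuation with abstentions is dominated by shorter continuations, which \cref{prop:reachability}(i) covers for all lengths below $L_a$. It also holds for dependent or adaptive responses, because the argument quantifies over every admissible continuation and never uses a response law. Stopping at the first certified prefix and returning the current winner $w$ therefore reproduces the fixed-budget output.

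I expect the main obstacle to be the maximization step, namely showing that $w$ is the binding rival, and I will be careful with tie priorities there. I will also be careful to invoke \cref{prop:reachability}(i) in its form covering continuations shorter than $r$, as the abstention case requires.
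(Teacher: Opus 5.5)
Your proof is correct, but it takes a different route from the paper's.

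The paper argues directly in terms of continuations:
\begin{itemize}
\item If the inequalities hold, every rival stays below $w$ in the count-and-priority order even if it receives all $r$ remaining votes and $w$ receives none. So $w$ wins every continuation, and abstentions and dependence are covered automatically because each call adds at most one vote.
\item If some inequality fails, the all-$a$ continuation lifts $a$ above $w$, while the all-$w$ continuation keeps $w$ as winner. So two endpoint winners are reachable.
\item The characterization of $\mathcal{R}_r(v)$ in \cref{prop:reachability}(i) then turns this into the singleton equivalence.
\end{itemize}

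You instead work through the threshold formula. You show $L_w(v)=0$ and compute $L_a(v)=v(w)-v(a)+\mathbf{1}\{\kappa_x(w)>\kappa_x(a)\}$ exactly for every $a\neq w$; that is, the current winner is the binding rival of every non-winner. The locking inequality then follows by rearrangement.

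Your route costs a tie-priority case analysis over third rivals. In return it gives a closed form for every non-winner's recovery threshold, which is a slightly stronger by-product. The paper's route is shorter, and its sufficiency direction already is the stopping claim, so it never has to pass back through \cref{prop:reachability}(i).

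One small simplification: your detour for abstentions, passing to shorter all-valid continuations, is unnecessary. \cref{prop:reachability}(i) already ranges over continuations in $(\mathcal{U}_x\cup\{\bot\})^m$, and its proof uses only that each call adds at most one vote.
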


Even assigning every remaining vote to one rival cannot defeat this winner. Gold infeasibility and locking are distinct, since several wrong answers can remain reachable after gold is not (\cref{app:decision-locking} gives an example). The certificate preserves errors as well as correct decisions.

With $w_N$ the current winner whenever $\mathcal{C}_N\neq\varnothing$, the first certified prefix is
\begin{equation}
\tau_B=\min\Bigl(\{N\in\{1,\ldots,B\}:\mathcal{C}_N\neq\varnothing,\ \mathcal{R}_{B-N}(v_N)=\{w_N\}\}\cup\{B\}\Bigr).
\label{eq:earliest-exact-stop}
\end{equation}

\begin{remark}[No earlier exact stop]
\label[remark]{rem:earliest-exact-stopping}
For a known finite answer universe, $\tau_B$ is by construction the first prefix at which every admissible continuation yields the same budget-$B$ output, so no prefix-only rule that must preserve that output can stop earlier (\cref{app:decision-locking}). Stopping sooner needs assumptions on future responses.
\end{remark}

For open answer sets, we additionally require $v(w)>r$ alongside the observed-rival checks to protect against unseen competitors. This conservative condition triggers at $\tau_B$ or one call later. The known-option MMLU-Pro setting attains $\tau_B$ exactly. No independence or response-law estimate is required for safety (proofs and scope: \cref{app:decision-locking}). 
The certificate determines when exact stopping is valid, and Theorem 5.3 below characterizes how much it saves.

Replaying stored trajectories saves 28.35\% of calls on 2,000 MMLU-Pro trajectories and 30.26\% on 5,000 GSM forecast trajectories, preserving every fixed-16 answer, including errors (\cref{tab:decision-locking}). The rule has no tunable parameter, so replaying it on stored trajectories involves no analysis choices. On the 2,000 MMLU-Pro trajectories it also saves more calls than every other evaluated rule that changed no output (\cref{app:curtailment}).

\paragraph{How much can locking save?}
Under iid calls, the certificate's trigger time obeys a law of large numbers whose only parameter is the margin of the modal answer.

\begin{theorem}[Lock-time law]
\label{thm:lock-time-law}
Let calls be iid with law $p$ on $\mathcal{U}_x\cup\{\bot\}$, $|\mathcal{U}_x|\geq2$, and let $\tau_B$ be the first certified prefix of \cref{eq:earliest-exact-stop} on the first $B$ calls. Suppose some answer $w^\star$ has $p_{w^\star}>p_a$ for every other answer $a$, and put $m=p_{w^\star}-\max_{a\neq w^\star}p_a$. (i) Almost surely as $B\to\infty$,
\begin{equation}
 \frac{\tau_B}{B}\longrightarrow\frac{1}{1+m},
 \qquad
 \frac{B-\tau_B}{B}\longrightarrow\frac{m}{1+m}\in\Bigl(0,\tfrac12\Bigr],
 \label{eq:lock-time-law}
\end{equation}
and the same limits hold for the open-universe rule. With a tied mode, $\tau_B/B\to1$. (ii) If the runner-up $a_2=\arg\max_{a\neq w^\star}p_a$ is unique and $\sigma^2=p_{w^\star}+p_{a_2}-m^2>0$, then $(\tau_B-B/(1+m))/\sqrt{B}\Rightarrow\mathcal{N}(0,\sigma^2/(1+m)^3)$, also for the open-universe rule. (iii) For every $B$ and every trajectory, $\tau_B\geq B/2$.
\end{theorem}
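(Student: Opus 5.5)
The plan is to rewrite the locking event in \cref{eq:earliest-exact-stop} with \cref{cor:decision-locking}. At prefix $N$ with winner $w_N$, locking holds iff $v_N(w_N)-v_N(a)\ge B-N+\mathbf{1}\{\kappa_x(a)>\kappa_x(w_N)\}$ for all $a\ne w_N$. Since only the strongest rival matters, I will define the leader margin $M_N=v_N(w_N)-\max_{a\ne w_N}v_N(a)$, adjusted by the tie indicator of the binding rival. Locking at $N$ then reads $M_N+N\ge B$ up to an additive $0/1$ correction.

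Part (iii) is immediate from this form. Every vote for $w_N$ was cast in one of the first $N$ calls, so $M_N\le v_N(w_N)\le N$. Locking therefore needs $N\ge B-N$, that is $N\ge B/2$, and the fallback value $\tau_B=B$ also satisfies the bound.

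For part (i), I use the SLLN: $v_N(y)/N\to p_y$ almost surely for every $y$. In the strict-mode case this implies that $w_N=w^\star$ for all large $N$. It also gives $M_N/N\to m$ almost surely, and every lower-order correction is $O(1)$. Define the function $f_B(N)=M_N+N-B$. Fix $\eta>0$. For large $B$ and every $N\ge B/2$, we have $N(1+m-\eta)\le M_N+N\le N(1+m+\eta)$. From (iii) we already know $N\ge B/2$, so only $N$ in that range matters. Hence $f_B(N)<0$ for $N<B/(1+m+\eta)$, which gives $\liminf\tau_B/B\ge 1/(1+m+\eta)$. Also $f_B(N)\ge0$ at $N=\lceil B/(1+m-\eta)\rceil$, where the correction is absorbed by $\eta$, so $\tau_B\le B/(1+m-\eta)+1$. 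Letting $\eta\downarrow0$ gives the limit. The open-universe rule adds the condition $v(w)>r$, which is implied by $M_N\ge r$ up to one call, so it shifts $\tau_B$ by at most $1$ and the same limit holds. In the tied-mode case, $M_N/N\to0$, so the same sandwich argument gives $\tau_B/B\to1$. The range $m/(1+m)\in(0,1/2]$ holds because $m\le1$.

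For part (ii), I reduce to a first-passage CLT. When the runner-up is unique, for large $N$ the binding rival is eventually always $a_2$, so $M_N=S_N$ up to $O(1)$, where $S_N=\sum_{i\le N}\xi_i$. Here $\xi_i=\mathbf{1}\{w^\star\}-\mathbf{1}\{a_2\}$ has mean $m$ and variance $p_{w^\star}+p_{a_2}-m^2=\sigma^2$. The stopping time is then the first $N$ with $S_N+N\ge B$. This is the first passage of a random walk with increments $1+\xi_i\ge0$, mean $1+m$ and variance $\sigma^2$, across level $B$. The classical renewal CLT (Anscombe) gives $(\tau_B-B/(1+m))/\sqrt{B}\Rightarrow\mathcal N(0,\sigma^2/(1+m)^3)$. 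The $O(1)$ overshoot, the tie correction and the open-universe shift are all $o(\sqrt B)$.

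The main obstacle is justifying the replacement of $M_N$ by $S_N$ uniformly over $N\in[B/2,B]$. This requires that, with probability tending to one (or almost surely eventually), $w^\star$ leads and $a_2$ is the strongest rival for every $N\ge B/2$. I would handle this with a maximal inequality or the SLLN applied to the finitely many pairwise differences $v_N(a_2)-v_N(a)$ and $v_N(w^\star)-v_N(a_2)$, each of which has positive drift. I would then invoke Anscombe's theorem for the randomly indexed sum.
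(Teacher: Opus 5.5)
Your proposal is correct and follows the paper's proof. It has the same ingredients: the deterministic bound $V_N(w)\le N$ for (iii), an SLLN sandwich for (i) once $w^\star$ is eventually the unique leader, the one-call shift for the open-universe rule, and, for (ii), the reduction of $\tau_B$ to the first passage of the nondecreasing walk $N+V_N(w^\star)-V_N(a_2)$ over $B+\delta_{a_2,w^\star}$. The one minor difference is in (ii): you cite the renewal CLT via Anscombe, while the paper uses the nonnegative increments to write $\Pr(T_B\le n)=\Pr(S_n\ge B+\delta_{a_2,w^\star})$ and applies the ordinary CLT at $n_B=\lfloor B/(1+m)+x\sqrt{B}\rfloor$; the SLLN alone already gives the eventual $a_2$-binding for all $N\ge B/2$, so no maximal inequality is needed.
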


Asymptotically only the constraint against the runner-up binds: the certificate fires once $N+v_N(w^\star)-v_N(a_2)$, a random walk with drift $1+m$ and step variance $\sigma^2$, reaches $B$ up to a tie offset, which gives (i) and (ii) (proofs: \cref{app:lock-time-proof}). By (iii) and \cref{rem:earliest-exact-stopping}, no exact prefix-only rule saves more than half of the budget. By (i), locking nears that ceiling only when the mode is nearly certain. The law is gold-free and ties asymptotic locking to error competition: while gold remains the unique mode, merging wrong identities weakly lowers $m$ and so weakly raises the asymptotic lock-time fraction $1/(1+m)$. At finite budgets, predictions simulate the certificate under an independently estimated law and match the observed savings within 0.2 points on GSM-Symbolic ($B=16$ to 64) and 0.7 points on MMLU-Pro. On public GSM8K completions with 10,000 samples per problem from Llama-3-8B and 70B \citep{brown2024largelanguagemonkeys}, laws estimated from 2,000 held-out samples predict the savings within 0.1 points at every budget from 16 to 1,024, up to 39.6\% and 47.1\%. On problems with a confidently separated runner-up, the mean observed variance of $\tau_{1024}$ is 150 and 130 against 151 and 155 from (ii) (\cref{fig:lock-time-law} and \cref{app:lock-time-validation}). Public MATH has smaller estimated margins (means 0.24 and 0.45). Its savings level off at 15.8\% and 26.3\%, next to asymptotes of 15.75\% and 26.25\%, with predictions within 0.13 points at every budget (\cref{tab:lock-law-math}).

\begin{figure}[!htbp]
\centering
\includegraphics[width=\linewidth]{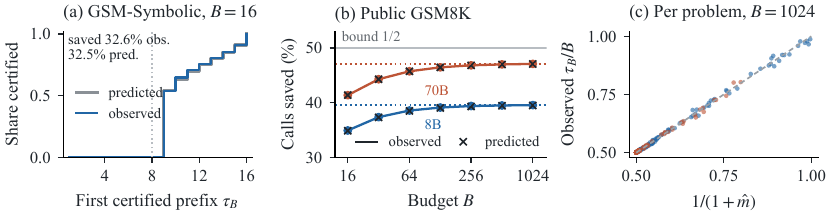}
\vspace{-0.3in}
\caption{Lock-time law. (a) GSM-Symbolic, $B=16$: first certified prefix predicted from an independently estimated law and observed (4,000 trajectories). (b) Public GSM8K, Llama-3-8B/70B-Instruct (127 problems): calls saved, observed (lines) and predicted (crosses), with the averages of $\hat m/(1+\hat m)$ (dotted). (c) Per problem at $B=1024$: observed $\tau_B/B$ against $1/(1+\hat m)$.}
\vspace{-0.1in}
\label{fig:lock-time-law}
\end{figure}

\paragraph{Physical execution.}
Charging every issued call, the certificate saves 26.2--36.6\% of sequential request time on 32 MMLU-Pro items per model and 9.6--21.6\% of batch-four request time on 48 fresh inputs per task, preserving every output.
Statistical stopping rules save more but change 1 to 22 of 1,200 native outputs, and fully parallel fixed-16 is faster but spends every call (\cref{app:physical-execution}). In a thinking-mode replay with near-unanimous answers, the certificate omits 41.9\% of generated tokens (\cref{app:thinking-tokens}).

\FloatBarrier

\section{Conclusion}

We studied fixed-budget LLM voting through the realized vote state and remaining budget. This view separates discovery, recoverability, endpoint selection, and decision locking. It yields sharp reachability thresholds and exact finite-horizon endpoint probabilities under a specified response law, and it exposes how recurring wrong-answer identities shape endpoint outcomes. Singleton reachability then provides a gold-free exact locking certificate that preserves the fixed-budget plurality output while saving inference calls.

\bibliography{references}
\bibliographystyle{iclr2027_conference}

\newpage
\appendix
\crefalias{section}{appendix}\crefalias{subsection}{subappendix}

\section{Related Work}
\label{sec:related}

\paragraph{Finite-budget sampling, error structure, and selection.}
Repeated sampling expands answer coverage \citep{brown2024largelanguagemonkeys,schaeffer2025monkeys}, and self-consistency aggregates repeated samples through the empirical answer mode \citep{wang2023selfconsistency,feng2026optimal,bay2026modalceiling}. Verifiers and candidate supply shape which generated answers can ultimately be selected \citep{stroebl2024flaws,dorner2026rocnreroll,hu2026oraclegap,ji2026candidate}, while selector-specific theory characterizes error decay under repeated inference \citep{chen2025provable,huang2026paradigms}. Permutation-based sampling perturbs the presentation of an input across calls \citep{tang2024permutation}, and the value of sampling diversity depends on the endpoint selector \citep{wang2026samplingdiversity}. Confidence-based routing can also fail under repeated sampling \citep{bahuguna2026backfires}. We make the realized vote state and remaining budget the primitive of a finite-horizon analysis, deriving a sharp candidate-wise reachability threshold, monotone contraction of the reachable-winner set into candidate-level conversion windows, exact endpoint probabilities under a specified response law, and a state-dependent pivotality characterization of how wrong-answer probability transfers affect endpoint success.

\paragraph{Exact decision locking and deterministic curtailment.}
Possible- and necessary-winner reasoning provides a framework for outcomes under incomplete votes \citep{conitzer2002elicitation,lang2020possible}. Classical ranking-and-selection work studies early termination of fixed-sample selection of the most probable multinomial cell \citep{bechhofer1959multinomial,bechhofer1995design}, with related optimality results for curtailed Bernoulli selection \citep{bechhofer1982closed,kulkarni1986optimal}. In sequential-analysis terminology, this is deterministic curtailment \citep{alling1963early}. We extend this structure to finite-budget LLM voting with budget-consuming abstentions, unseen answer identities, adaptive proposal policies, and weighted votes. Singleton reachability yields a gold-free certificate that preserves the full-budget plurality output for every admissible continuation, and for a known answer universe its first trigger is the earliest universally valid exact stopping time among prefix-only policies that must preserve the fixed-budget output. \Cref{thm:lock-time-law} further characterizes the resulting savings through a law of large numbers and fluctuation limit obtained from the first passage of a random walk against the runner-up \citep{gut2009stopped}.

\paragraph{Statistical stopping and inference-time efficiency.}
Existing stopping methods such as ASC, ESC, Bayesian stopping, EMS, and PETS reduce repeated sampling through consistency, mode-identification, agreement, or voting-specific stopping criteria \citep{aggarwal2023adaptive,li2024esc,huang2026bayesian,liu2026ems,liu2026pets}. Partial-trace methods use confidence filtering or probabilistic output-preservation guarantees \citep{fu2026deepconf,chen2026mars}. The endpoint probability $Q_r$ is a conditional-power quantity \citep{lan1988bvalue}, and thresholding an estimate of it connects directly to stochastic curtailment \citep{lan1982stochastically}. Because equal call counts can entail different prompt and model costs \citep{tran2026singleagent}, we evaluate stopping under sequential, microbatched, and fully parallel execution and report omitted calls, request time, and token savings separately. This places exact locking and statistical stopping on a common finite-horizon axis: singleton reachability provides an exact fixed-budget output-preservation boundary, while stochastic rules exchange output fidelity for additional savings.

\section{Finite-Budget Voting: Proofs and Probability Calculations}
\label{app:finite-budget}

This appendix develops the deterministic structure and probability calculations of \cref{sec:structure}. Remaining budget counts calls, and invalid responses append no vote. The locking experiments are reported in \cref{app:stopping-experiments}.

\subsection{Sharp Remaining-Call Threshold}
\label{app:sharp-threshold}

Use the count state and endpoint probability $Q_r(v;p)$ of \cref{sec:structure}. For each answer $y$, including answers not yet generated,
\[
 L_y(v)=\max\left\{0,1-v(y),
   \max_{a\ne y}\bigl(v(a)-v(y)+\delta_{a,y}\bigr)\right\},
 \qquad
 \delta_{a,y}=\mathbf{1}\{\kappa_x(a)>\kappa_x(y)\},
\]
so that $L(v)=L_{y^\star}(v)$ and $\delta_a=\delta_{a,y^\star}$. The presence term $1-v(y)$ prevents an ungenerated, zero-count answer from winning. Continuations are the admissible sequences in $(\mathcal{U}_x\cup\{\bot\})^m$ of \cref{sec:structure}, so an abstention consumes a call and adds no vote.

\begin{proof}[Proof of \cref{prop:reachability}(i)]
Let $X_a$ count the votes that a continuation of $m$ calls adds to answer $a$. If $y$ is the winner after these calls, then $v(y)+X_y\geq1$ and $X_y-X_a\geq v(a)-v(y)+\delta_{a,y}$ for every $a\neq y$. Since $X_a\geq0$, these inequalities give $X_y\geq L_y(v)$. Each call adds at most one vote, so $m\geq L_y(v)$. An intermediate prefix is itself a shorter continuation, so no continuation of fewer than $L_y(v)$ calls makes $y$ the winner at any point. Conversely, the length-$m$ continuation that votes only for $y$ leaves every rival count fixed and gives $X_y=m$, which meets all the inequalities whenever $m\geq L_y(v)$. Hence $\mathcal{R}_r(v)$ is exactly the set of answers that some continuation of the remaining $r$ calls makes the endpoint winner.
\end{proof}

For gold, let future calls be iid with law $p$, abstentions included. The number of future gold votes in $r$ calls is then $\mathrm{Bin}(r,p_{y^\star})$, and every gold win requires at least $L(v)$ of them, so
\begin{equation}
 Q_r(v;p)\leq\Pr\!\left[\mathrm{Bin}(r,p_{y^\star})\geq L(v)\right].
 \label{eq:finite-budget-tail}
\end{equation}
If $p_{y^\star}>0$ and $r\geq L(v)$, the all-gold continuation has probability $p_{y^\star}^{\,r}>0$ and selects gold. Hence $Q_r(v;p)>0$ exactly when $r\geq L(v)$. The binomial tail counts only the additional gold votes that every gold win requires. The recursion \cref{eq:abstention-recursion} also accounts for votes accumulated by competing answers.

At first discovery, $v_\star=1$, so $L(v)$ is the number of additional gold votes needed to catch the strongest tie-adjusted rival. The threshold separates impossible conversion from positive but potentially very small conversion probability.

\subsection{Reachable Winners and Exact Decision Locking}
\label{app:decision-locking}

Fix an answer universe $\mathcal{U}_x$, an injective tie priority $\kappa_x$, and a count vector $v$. Calls append at most one unit vote without changing earlier votes. Reachability ranges over all admissible continuations in $(\mathcal{U}_x\cup\{\bot\})^r$, whether or not a particular response process gives every sequence positive probability.

\begin{proof}[Proof of \cref{prop:reachability}(ii) and \cref{cor:decision-locking}]
For a nonempty current state, let $w$ be its winner. If $v(w)-v(a)\ge r+\delta_{a,w}$ for every $a\ne w$, each rival remains below $w$ under the count-and-priority order even if it receives all $r$ votes and $w$ receives none. Thus $w$ wins every continuation. If any inequality fails, the all-$a$ continuation puts that rival above $w$. The all-$w$ continuation still selects $w$, so at least two endpoint winners are reachable. This proves the equivalence with $\mathcal{R}_r(v)=\{w\}$.

Finally, any continuation of $v+e_a$ with $r-1$ calls is also a continuation of $v$ with $r$ calls after prepending $a$. Therefore
\[
 \mathcal{R}_{r-1}(v+e_a)\subseteq\mathcal{R}_r(v).
\]
An abstaining update leaves $v$ unchanged and consumes one call, and prepending that abstention gives $\mathcal{R}_{r-1}(v)\subseteq\mathcal{R}_r(v)$. This proves \cref{eq:shrinking-window} and \cref{prop:reachability}(ii) for every admissible update. Once the set is a singleton in a nonempty state, every subsequent prefix preserves its member as the winner through the fixed endpoint.
\end{proof}

\begin{proof}[Proof of the claim in \cref{rem:earliest-exact-stopping}]
Fix a realized prefix at time $N<\tau_B$. If $\mathcal{C}_N\neq\varnothing$, the locking condition has not yet occurred, so at least two distinct endpoint winners are reachable by admissible continuations from the same observed prefix. If $\mathcal{C}_N=\varnothing$, then $N<B$ and the answer universe is nonempty: an all-abstention continuation yields endpoint $\bot$, whereas assigning every remaining call to any fixed $y\in\mathcal{U}_x$ yields endpoint $y$. Thus in either case two admissible continuations from the same prefix have different budget-$B$ endpoints. A prefix-only policy that stops now must return one answer before observing which continuation occurs, so it cannot agree with both endpoints. Hence every policy required to preserve the full-budget endpoint for every admissible continuation must continue whenever $N<\tau_B$. At $N=\tau_B$, either the budget is exhausted or \cref{cor:decision-locking} guarantees that every admissible continuation has the same endpoint, so stopping is valid.
\end{proof}

Extra information restricting future responses can permit earlier stopping. For the open-answer replay, the conservative unseen-answer rule described below triggers at $\tau_B$ or one call later (\cref{lem:open-universe-gap}), so the GSM stopping times reported in \cref{app:stopping-replay} exceed the earliest lock by at most one call. The MMLU-Pro rule, which enumerates the full option universe, attains $\tau_B$ exactly.

\paragraph{When can a lock first occur?}
With at least two possible answer labels, the certificate cannot hold when $N<B-N$, because the leader has at most $N$ votes and some alternative could receive all $B-N$ remaining votes. For $B=16$, a lock at $N=8$ requires eight identical votes and a winner favored in ties against every other answer. With unknown unseen-answer priority, the conservative rule below first becomes eligible at $N=9$. These constraints explain the early zero rates in \cref{tab:decision-locking}.

\paragraph{Unseen answers, abstentions, and scope.}
For the open rational-answer task, only finitely many answers have appeared. We check \cref{eq:decision-locking} against every observed rival and additionally require $v(w)>r$. Any unseen answer has zero votes and can collect at most $r$, so this protects against every unseen competitor even if it would win a tie. For any finite universe that contains the observed answers, the rule is equivalent to the certificate for that universe together with $v(w)>r$. \Cref{lem:open-universe-gap} below shows that this requirement delays stopping by at most one call. Invalid completions consume a call and append no vote. They are retained in all denominators and remaining-call counts. The same inequalities remain sufficient when future calls may abstain, and all-valid continuations retain the necessity argument for a known answer universe. If no valid answer has appeared, the replay consumes the full budget and returns the baseline's abstention output.

\begin{lemma}[Cost of the unseen-answer check]
\label[lemma]{lem:open-universe-gap}
Let $|\mathcal{U}_x|\geq2$, and let $\tau_B^{\circ}$ be the first prefix at which the certificate holds with $v_N(w_N)>B-N$, or $B$ if there is none. On every trajectory, $\tau_B\leq\tau_B^{\circ}\leq\min(\tau_B+1,B)$. Moreover, $\tau_B^{\circ}=\tau_B+1$ exactly when $\tau_B<B$ and $v_{\tau_B}(w_{\tau_B})=B-\tau_B$.
\end{lemma}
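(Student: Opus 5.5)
The plan is to compare the two stopping conditions prefix by prefix and then track how the relevant quantities move over a single call. Write $r_N=B-N$. The open rule at prefix $N$ requires the certificate of \cref{cor:decision-locking} against every observed rival, together with $v_N(w_N)>r_N$. By the scope remark above, for a known universe this is equivalent to the full certificate plus $v_N(w_N)>r_N$.

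\textbf{Lower bound $\tau_B\le\tau_B^{\circ}$.} I first check that the open condition at $N$ implies the full certificate at $N$. Observed rivals are checked directly. An unseen rival $a$ has $v_N(a)=0$, so $v_N(w)-v_N(a)=v_N(w)\geq r_N+1\geq r_N+\delta_{a,w}$. Hence $\tau_B\le\tau_B^{\circ}$, and both quantities are at most $B$ by definition.

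\textbf{Upper bound $\tau_B^{\circ}\le\min(\tau_B+1,B)$.} If $\tau_B=B$ there is nothing to show, so suppose $N=\tau_B<B$ with winner $w$ and $r=r_N\ge1$. The certificate gives $v(w)-v(a)\ge r+\delta_{a,w}$ for all $a\ne w$. Since $|\mathcal{U}_x|\ge2$, some rival $a$ exists and $v(a)\ge0$, so $v(w)\ge r$.
\begin{itemize}
\item If $v(w)>r$, the open rule fires at $N$, and $\tau_B^{\circ}=\tau_B$.
\item If $v(w)=r$, the open condition fails at $N$. I then examine $N+1$.
\end{itemize}
By \cref{prop:reachability}(ii) the reachable set stays $\{w\}$, so the certificate still holds at $N+1$ with the same winner $w$. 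The remaining budget drops to $r-1$, and $v_{N+1}(w)\ge r>r-1$ whatever the next call is (a vote for $w$, a rival vote, or an abstention). So the open rule fires at $N+1$. Combining the two cases gives $\tau_B^{\circ}\le\tau_B+1$, and $\tau_B+1\le B$ here.

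\textbf{Exactness of the characterization.} The case analysis shows that, when $\tau_B<B$, $\tau_B^{\circ}=\tau_B+1$ happens exactly when $v_{\tau_B}(w_{\tau_B})=B-\tau_B$, since otherwise $v(w)>r$ and the two times coincide. When $\tau_B=B$, the lower bound forces $\tau_B^{\circ}=B=\tau_B$, so equality to $\tau_B+1$ is impossible. This matches the stated condition that $\tau_B<B$.

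The main subtlety is making the "still locked at $N+1$" step rigorous. I would cite the contraction of reachable sets, or argue directly that each rival gap shrinks by at most one while $r$ drops by one. The other point to watch is the boundary case $\tau_B=B$, where the definition of $\tau_B^{\circ}$ as "$B$ if none" must be used to avoid an off-by-one error.
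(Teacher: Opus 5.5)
Your proof is correct and follows the paper's argument: the same three-way split ($\tau_B=B$, $v(w)>r$, $v(w)=r$), with contraction of the reachable set carrying the certificate and its winner from $N$ to $N+1$, where $v_{N+1}(w)\geq r>r-1$ fires the conservative rule. The one step the paper makes explicit that you leave implicit is that $\mathcal{R}_{r-1}(v_{N+1})$ is nonempty because it contains the winner of the nonempty state $v_{N+1}$. Your alternative argument, that each rival gap shrinks by at most one as $r$ drops by one, also covers this step.
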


\begin{proof}
The conservative rule adds a condition, so $\tau_B^{\circ}\geq\tau_B$, and both equal $B$ when $\tau_B=B$. Otherwise put $N=\tau_B$, $r=B-N\geq1$ and $w=w_N$. A rival $a\neq w$ exists because $|\mathcal{U}_x|\geq2$, and it gives $v_N(w)\geq v_N(a)+r+\delta_{a,w}\geq r$. If $v_N(w)>r$, both rules stop at $N$. If $v_N(w)=r$, the conservative rule fails at $N$. By \cref{prop:reachability}(ii), $\mathcal{R}_{r-1}(v_{N+1})\subseteq\{w\}$, and this set contains the winner of the nonempty state $v_{N+1}$. Hence prefix $N+1$ is certified with winner $w$. Since $v_{N+1}(w)\geq r>r-1$, the conservative rule stops at $N+1$. These three cases give both claims.
\end{proof}

The certificate fixes the answer interpretation, old votes, decision rule and total budget. A known response process with restricted support can have fewer reachable winners than the unrestricted set. No independence or response-law estimate is required for safety. The guarantee includes incorrect outputs and does not preclude gains from a different selector or a larger total budget. For example, with counts $(6,5,1)$, gold equal to the third answer and $r=4$, gold cannot recover, while the two leading wrong answers can still exchange the lead. Gold infeasibility alone does not certify decision locking.

\paragraph{Adaptive proposal choices.}
The same sufficient certificate applies when future calls choose their model, input representation or sampling law adaptively. The conditions are that every call appends at most one unit vote in the same canonical answer universe, earlier votes and tie priorities stay fixed, and the total call cap is unchanged. Conditional on any realized prefix, the adaptive policy's possible suffixes form a subset of the continuations already covered by the certificate. If that set of reachable winners is a singleton, stopping now preserves the capped adaptive policy's answer for every such suffix, including abstentions. This need not be the answer of a different policy whose earlier calls produced a different prefix. A selector that synthesizes answers, such as Borda fusion, or reweights earlier votes requires a separate argument. Thus the certificate can be combined with the proposal routers in \cref{app:future-protocols} without assuming that a cached base-policy suffix is the suffix those routers would actually generate.

\paragraph{Weighted votes.}
The same argument covers weighted plurality when each future call $t$ adds a weight $w_t\geq0$, fixed before the responses, to its answer and abstention adds nothing. Let $s$ be the accumulated scores, with only positive-score answers eligible, and $W_r=\sum_{t\leq r}w_t$ the remaining addable weight. Answer $y$ is reachable exactly when $s_y+W_r>0$, $s_y+W_r>s_j$ for every rival $j$ favored by the tie priority and $s_y+W_r\geq s_j$ for the others. A current winner $y$ is locked exactly when $s_y>s_j+W_r$ for every favored rival and $s_y\geq s_j+W_r$ for the others. Assigning all remaining weight to one answer attains each bound, and unit weights recover the unit-vote threshold. The lock preserves the weighted fixed-budget output, including errors, without independence assumptions. For an open universe, $s_y>W_r$ also covers unseen rivals. If weights may depend on the responses but lie in $[0,\bar w]$, the locking inequalities with $W_r=r\bar w$ remain sufficient. Retrospective rescoring and answer synthesis, such as Borda fusion, need separate arguments.

\subsection{Endpoint Laws, Abstentions, and First-Passage Events}
\label{app:endpoint-laws}

Let $\pi_x$ be a response law on $\mathcal{U}_x$ with no abstentions. For iid valid responses with law $\pi_x$, the exact endpoint probability is\nopagebreak
\begin{equation}
\begin{split}
 Q_r(v;\pi_x)
 =r!\sum_{z=L(v)}^r\frac{\pi_x(y^\star)^z}{z!}
 [u^{r-z}]
 \prod_{a\ne y^\star}
 \left(\sum_{j=0}^{\min\{r-z,\,v_\star+z-v(a)-\delta_a\}}
             \frac{\pi_x(a)^j u^j}{j!}\right),
\end{split}
 \label{eq:finite-budget-exact}
\end{equation}
where $[u^m]$ extracts the degree-$m$ coefficient and an empty outer sum is zero.

\begin{proof}[Endpoint formula and recursion]

For the exact expression, condition on $X_{y^\star}=z$. The remaining counts sum to $r-z$ and each wrong count is bounded above by $v_\star+z-v(a)-\delta_a$. Expanding the product in \cref{eq:finite-budget-exact} enumerates precisely these count vectors. Multiplication by $r!\pi_x(y^\star)^z/z!$ supplies their multinomial weights, without any independence assumption between pairwise wins.

For the recursion \cref{eq:abstention-recursion}, condition on the first future call. With probability $p_\bot$ it appends no vote and leaves $v$ unchanged, and with probability $p_y$ it moves the state to $v+e_y$. In both cases one call is spent and the remaining calls keep their iid law.

\end{proof}

The exact calculation accounts for rival votes during the remaining calls. Truncated polynomial convolution evaluates it in $O(|\mathcal{U}_x|r^3)$ arithmetic operations.

\paragraph{Endpoint and first-passage events.}
Write $A(v)$ for the current gold-selection indicator and $H_r(v;\pi_x)$ for selection at least once from the current state through $r$ future calls. Their recursions differ, and the first specializes \cref{eq:abstention-recursion} to a law without abstentions:
\begin{align*}
 Q_0(v;\pi_x)&=A(v), &
 Q_r(v;\pi_x)&=\sum_y\pi_x(y)Q_{r-1}(v+e_y;\pi_x),\\
 H_0(v;\pi_x)&=A(v), &
 H_r(v;\pi_x)&=A(v)+(1-A(v))\sum_y\pi_x(y)H_{r-1}(v+e_y;\pi_x).
\end{align*}
Here $e_y$ adds one vote for $y$. The second recursion is absorbing, while the first permits later reversals. For two answers with counts $(1,2)$, gold favored in ties, equal future response probabilities, and $r=2$, $H_2=1/2$ while $Q_2=1/4$. Thus a discovery-to-first-selection delay distribution cannot be used as an endpoint accuracy curve.

\paragraph{Abstention mixture.}
Suppose each further call abstains with probability $p_\bot$ and otherwise returns $y$ with probability $\pi_x(y)$, independently of the past, so that $p_y=(1-p_\bot)\pi_x(y)$ is the unconditional probability that one call adds a vote for $y$. The endpoint depends on the calls only through their valid answers. The number $K$ of valid calls is $\mathrm{Bin}(r,1-p_\bot)$, and given $K=k$ the valid answers are iid with law $\pi_x$. Conditioning on $K$ gives
\[
 Q_r(v;p)=\sum_{k=0}^{r}\binom{r}{k}(1-p_\bot)^k p_\bot^{\,r-k}\,Q_k(v;\pi_x),
\]
where $Q_k(v;\pi_x)$ follows the valid-response recursion above or \cref{eq:finite-budget-exact}.

\subsection{The Lock-Time Law: Fluctuations and Proofs}
\label{app:lock-time-proof}

Throughout, $Z_1,Z_2,\ldots$ are iid with unconditional law $p$ on $\mathcal{U}_x\cup\{\bot\}$, $|\mathcal{U}_x|\geq2$, and an abstention consumes a call and adds no vote. Let $V_N(y)$ be the count of $y$ after $N$ calls and let $r=B-N$. For each budget $B$, $\tau_B$ is the first certified prefix of \cref{eq:earliest-exact-stop} on $Z_1,\ldots,Z_B$ (with $\tau_B=B$ when no prefix is certified), and $\tau_B^{\circ}$ is the open-universe variant that additionally requires $V_N(w_N)>B-N$. By \cref{cor:decision-locking}, prefix $N$ is certified with winner $w$ if and only if
\begin{equation}
 V_N(w)-V_N(a)\;\geq\;B-N+\delta_{a,w}\qquad\text{for all }a\in\mathcal{U}_x\setminus\{w\},
 \label{eq:lock-inequalities}
\end{equation}
where $\delta_{a,w}=\mathbf{1}\{\kappa_x(a)>\kappa_x(w)\}\in\{0,1\}$. The open-universe variant adds $V_N(w)\geq B-N+1$. On a trajectory whose first $B$ calls all abstain the pool stays empty, no prefix is certified, the budget is exhausted at the terminal prefix, and $\tau_B=B$ by convention.

In \cref{thm:lock-time-law}(ii), a unique runner-up means $p_{a_2}=\max_{a\neq w^\star}p_a>p_a$ for every $a\in\mathcal{U}_x\setminus\{w^\star,a_2\}$, and the limit reads
\begin{equation}
 \frac{\tau_B-B/(1+m)}{\sqrt{B}}\;\Longrightarrow\;\mathcal{N}\!\Bigl(0,\;\frac{\sigma^2}{(1+m)^3}\Bigr),
 \label{eq:lock-time-clt}
\end{equation}
also for $\tau_B^{\circ}$. In the excluded case $\sigma^2=0$, $p_{w^\star}=1$ and $\tau_B=\lceil(B+\delta)/2\rceil$ exactly, where $\delta=\mathbf{1}\{\text{some }a\neq w^\star\text{ has }\kappa_x(a)>\kappa_x(w^\star)\}$.

\begin{proof}[Proof of \cref{thm:lock-time-law}(i) and (iii)]
All statements concern one infinite sequence $Z_1,Z_2,\ldots$, and $\tau_B$ is computed on its first $B$ terms.

\emph{Deterministic bound.} Pick $a\neq w$. Since $V_N(w)\leq N$ and $V_N(a)\geq0$, \cref{eq:lock-inequalities} forces $N\geq B-N$, so no prefix $N<B/2$ is certified and $\tau_B\geq B/2$ on every trajectory.

\emph{Uniform law of large numbers.} By the strong law, $V_N(y)/N\to p_y$ for every $y\in\mathcal{U}_x\cup\{\bot\}$ almost surely. Since the alphabet is finite, for every $\varepsilon>0$ there is almost surely a finite $N_0$ with $|V_N(y)-Np_y|\leq\varepsilon N/4$ for all $N\geq N_0$ and all $y$. Since every certified prefix satisfies $N\geq B/2$, for $B\geq2N_0$ these bounds hold at every prefix that matters below.

\emph{Unique leadership.} Let $m>0$ and $0<\varepsilon<m$. For $N\geq\max(N_0,B/2)$ and every $a\neq w^\star$, $V_N(w^\star)-V_N(a)\geq N(p_{w^\star}-p_a)-\varepsilon N/2\geq N(m-\varepsilon/2)>1$ once $N>2/(m-\varepsilon)$, so $w^\star$ is the plurality winner at every such prefix and a certificate, if any, has winner $w^\star$.

\emph{Upper bound.} Put $N_\varepsilon=\lceil B/(1+m-\varepsilon)\rceil\in[B/2,B]$ for $\varepsilon<m$. At $N=N_\varepsilon\geq N_0$ and for every $a\neq w^\star$,
\[
 V_N(w^\star)-V_N(a)-\delta_{a,w^\star}\;\geq\;N(m-\varepsilon/2)-1\;\geq\;B-N,
\]
because $N(1+m-\varepsilon)\geq B$ leaves slack $N\varepsilon/2-1>0$ for large $B$. Hence prefix $N_\varepsilon$ is certified and $\limsup_B\tau_B/B\leq1/(1+m-\varepsilon)$.

\emph{Lower bound.} Let $a_2$ attain $\max_{a\neq w^\star}p_a$ and put $N'_\varepsilon=\lfloor B/(1+m+\varepsilon)\rfloor$. For every $N\in[B/2,N'_\varepsilon]$ with $N\geq N_0$, the winner is $w^\star$ and $V_N(w^\star)-V_N(a_2)\leq N(m+\varepsilon/2)<B-N$ because $N(1+m+\varepsilon)\leq B$. Thus \cref{eq:lock-inequalities} fails for $a=a_2$ and no such prefix is certified. Prefixes below $B/2$ are never certified. Hence $\liminf_B\tau_B/B\geq1/(1+m+\varepsilon)$. Letting $\varepsilon\downarrow0$ proves \cref{eq:lock-time-law}. The second limit is $1-1/(1+m)$ and $m\leq1$ gives the range $(0,1/2]$.

\emph{Tied modes.} Let $M=\{y\in\mathcal{U}_x:p_y=\max_{a}p_a\}$ with $|M|\geq2$. For $w\notin M$ and any $w_1\in M$, $V_N(w)-V_N(w_1)\leq-N(p_{w_1}-p_w)/2<0$ for all $N\geq\max(N_0,B/2)$ with $\varepsilon$ small, so $w$ is never the winner at a prefix that could be certified. For $w\in M$ choose $w'\in M\setminus\{w\}$. Since $p_w=p_{w'}$, $V_N(w)-V_N(w')\leq\varepsilon N/2$ for $N\geq N_0$. A certificate with winner $w$ at prefix $N\geq\max(N_0,B/2)$ requires $V_N(w)-V_N(w')\geq B-N$, hence $N\geq B/(1+\varepsilon/2)$. Thus $\liminf_B\tau_B/B\geq1/(1+\varepsilon/2)$ for every small $\varepsilon>0$, and $\tau_B/B\to1$.

\emph{Open universe.} \Cref{lem:open-universe-gap} gives $\tau_B\leq\tau_B^{\circ}\leq\tau_B+1$, so $\tau_B^{\circ}/B$ has the same limits as $\tau_B/B$, including the tied-mode limit.
\end{proof}

\begin{proof}[Proof of \cref{thm:lock-time-law}(ii)]
If $\sigma^2=0$ the increments $\mathbf{1}\{Z_i=w^\star\}-\mathbf{1}\{Z_i=a_2\}$ are constant, which forces $p_{w^\star}=1$. Then $V_N(w^\star)=N$, every rival has count zero, and \cref{eq:lock-inequalities} reads $N\geq B-N+\delta$, giving $\tau_B=\lceil(B+\delta)/2\rceil$.

Let $\sigma^2>0$ and $S_N=N+V_N(w^\star)-V_N(a_2)$, a random walk with iid increments $X_i=1+\mathbf{1}\{Z_i=w^\star\}-\mathbf{1}\{Z_i=a_2\}\in\{0,1,2\}$, mean $\mu=1+m$ and variance $\sigma^2$. The inequality \cref{eq:lock-inequalities} for $a=a_2$ and winner $w^\star$ reads $S_N\geq B+\delta_{a_2,w^\star}$. Let $T_B=\min\{N:S_N\geq B+\delta_{a_2,w^\star}\}$.

We first show that $\tau_B=T_B$ for all large $B$ almost surely. Use the uniform law of large numbers from the proof of (i) with $\varepsilon$ smaller than $m$ and than $p_{a_2}-p_a$ for every $a\in\mathcal{U}_x\setminus\{w^\star,a_2\}$. For all large $N$, $w^\star$ is the plurality winner, and every third rival $a$ satisfies
\[
 V_N(w^\star)-V_N(a)-\delta_{a,w^\star}\;\geq\;V_N(w^\star)-V_N(a_2)-\delta_{a_2,w^\star}+N(p_{a_2}-p_a-\varepsilon/2)-1,
\]
where the last two terms sum to a positive number. Hence the $a_2$-inequality implies every other inequality of \cref{eq:lock-inequalities}. Prefixes below $B/2$ are never certified and have $S_N\leq2N<B$, so for all large $B$ a prefix is certified exactly when $S_N\geq B+\delta_{a_2,w^\star}$. Moreover $S_B/B\to1+m>1$, so $T_B\leq B$ for all large $B$, and then $\tau_B=T_B$.

The increments are nonnegative, so $S_N$ is nondecreasing and $\Pr(T_B\leq n)=\Pr(S_n\geq B+\delta_{a_2,w^\star})$ for every $n$. Fix $x\in\mathbb{R}$ and put $n_B=\lfloor B/\mu+x\sqrt{B}\rfloor$. Then $\mu n_B=B+\mu x\sqrt{B}+O(1)$ and $n_B/B\to1/\mu$, so the central limit theorem for $S_{n_B}$ gives
\[
 \Pr\!\left(\frac{T_B-B/\mu}{\sqrt{B}}\leq x\right)=\Pr\!\left(S_{n_B}\geq B+\delta_{a_2,w^\star}\right)\;\longrightarrow\;\Phi\!\left(\frac{x\mu^{3/2}}{\sigma}\right),
\]
where $\Phi$ is the standard normal distribution function. The limit is the $\mathcal{N}(0,\sigma^2/\mu^3)$ distribution function at $x$. Because $\tau_B=T_B$ eventually, the same limit holds for $\tau_B$, which proves \cref{eq:lock-time-clt}. By \cref{lem:open-universe-gap}, $(\tau_B^{\circ}-\tau_B)/\sqrt{B}\to0$, so the limit also holds for $\tau_B^{\circ}$.
\end{proof}

\paragraph{Scope.}
The law concerns a fixed item under iid calls from a finite answer universe, uses no gold label, and holds for a wrong mode as for a correct one. For an open numeric task the realized alphabet is finite but not fixed in advance. The almost-sure limits extend to a countable universe because the empirical frequencies then converge uniformly (the total variation between the empirical and the true law tends to zero almost surely). The fluctuation statement is proved here for finite universes only. Dependent or adaptive responses fall outside the law, although the certificate itself (\cref{cor:decision-locking}) still covers them. Finite-budget predictions in \cref{app:lock-time-validation} are Monte Carlo approximations of the finite-budget plug-in distribution, with fixed seeds and reported simulation error.

\section{Error Competition: Proofs and Counterexamples}

\label{app:concentration-design}

\subsection{Controlled Merging and Splitting}

\begin{proof}[Proof of \cref{prop:wrong-merging} and of $L(v^{\mathrm{merge}})\geq L(v)$]
For every wrong block $S$, its merged count $\sum_{a\in S}v(a)$ is at least each member's count, and its tie advantage over gold is at least each member's. Gold and abstention positions are unchanged. Hence a gold winner against every merged block beats every original member. Likewise, each original tie-adjusted deficit is bounded by the merged block's deficit. The presence term is unchanged, so taking their maxima gives $L(v^{\mathrm{merge}})\ge L(v)$. This holds prefix by prefix on each coupled sequence and therefore also after averaging under any sequence law.
\end{proof}

\Cref{prop:wrong-merging} compares response sequences through an explicit coupling. For a finite iid law with gold mass $p_\star=0.3$, abstention mass $p_\bot=0.1$, and the remaining $0.6$ divided equally among $K$ wrong identities, successive doubling of $K$ splits wrong identities without changing the gold or abstention events. Give every wrong identity priority over gold in a count tie. Exact coefficient extraction from \cref{app:endpoint-laws}, including abstentions, yields 16-call accuracies $7.43\%,27.65\%,53.02\%,71.85\%,83.04\%,88.66\%$ for $K=1,2,4,8,16,32$ (\cref{fig:concentration-math}a). Coverage remains $1-0.7^{16}=99.67\%$. This is a response-law calculation, with no claim that an input permutation implements the coupling.

\begin{figure}[htbp]
\centering
\includegraphics[width=\linewidth]{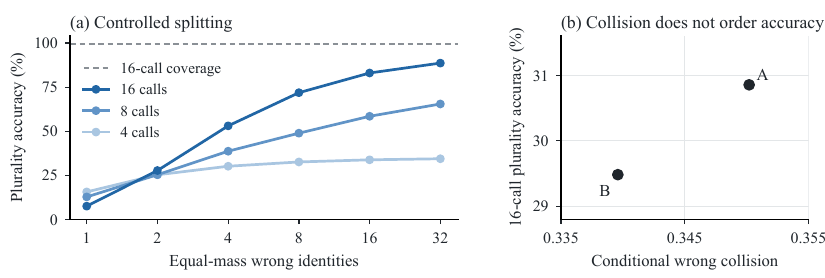}
\caption{Exact response-law diagnostics. (a) Splitting equal-mass wrong identities preserves single-call correctness and coverage at each budget and improves plurality. Here $p_\star=0.3$ and $p_\bot=0.1$. The dashed line shows coverage at 16 calls. (b) Arbitrary laws need not obey a scalar collision order. Laws A and B below both have $p_\star=0.3$ and no abstention, yet B has lower collision and lower accuracy. Gold loses every count tie in both panels.}
\label{fig:concentration-math}
\end{figure}

For the counterexample, use $p_\star=0.3$, no abstention, and unconditional wrong-mass vectors
\[
 \text{A}: (0.29,0.27,0.12,0.01,0.01),\qquad
 \text{B}: (0.37,0.11,0.10,0.07,0.05).
\]
Conditional collision is $\sum_j(p_j/0.7)^2$: $0.350204$ for A and $0.339592$ for B. With 16 iid calls and gold losing ties, their exact plurality accuracies are $0.308548$ and $0.294812$. B has lower collision but a larger most-probable rival (\cref{fig:concentration-math}b). The vectors are not linked by the splitting coupling. These rational-mass examples were computed by exact arithmetic.

\subsection{Necessary-Condition Filtering Preserves Correct Wins}
\label{app:necessary-filter}

Let $A_x\subseteq\mathcal{U}_x$ be a fixed set specified from the input, with every correct answer in $A_x$. Membership is a necessary condition for correctness. An accepted candidate may still be wrong. Replace each response outside $A_x$ by an abstention, retain the canonical identities and fixed tie order, and charge every response against the original call cap. Exact multiset membership for sorting and legal-path membership for a shortest-path problem are examples. Neither predicate checks the ordering or optimality of an accepted answer.

\begin{proposition}[Filtering necessary conditions]
\label[proposition]{prop:necessary-filter}
Fix a realized response sequence and a correct answer $y\in A_x$. At every prefix, filtering preserves the presence and count of $y$, cannot destroy a plurality win by $y$, and satisfies
\[
 L_y(v^{A_x})\leq L_y(v).
\]
These conclusions require no independence assumption. At the fixed cap, falling back to raw plurality when the filtered pool is empty also preserves every raw correct win.
\end{proposition}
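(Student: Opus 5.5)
The argument runs prefix by prefix on the fixed realized sequence, in the same coupling style as the proof of \cref{prop:wrong-merging}. Write $v$ for the raw count state and $v^{A_x}$ for the filtered state at the same prefix. Filtering replaces responses outside $A_x$ by abstentions and leaves every other response untouched. Therefore $v^{A_x}(y')=v(y')$ for every $y'\in A_x$ and $v^{A_x}(a)=0$ for every $a\notin A_x$. In particular, $v^{A_x}(a)\le v(a)$ coordinatewise. Since $y\in A_x$, its count, and hence its presence, are preserved exactly; the tie priority is unchanged by construction. Both states spend the same number of calls, so the remaining budget $r$ is identical.

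For the threshold, I would compare the three terms in the definition of $L_y$ one at a time. The term $0$ is trivially the same. The presence term $1-v(y)$ is unchanged because $v^{A_x}(y)=v(y)$. For each rival $a\neq y$, the tie indicator $\delta_{a,y}$ is unchanged. Hence
\[
v^{A_x}(a)-v^{A_x}(y)+\delta_{a,y}\le v(a)-v(y)+\delta_{a,y}.
\]
Taking maxima gives $L_y(v^{A_x})\le L_y(v)$.

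For win preservation, suppose $y$ is the raw plurality winner at some prefix. Then $v(y)\ge1$, and $(v(y),\kappa_x(y))>(v(a),\kappa_x(a))$ lexicographically for every $a\neq y$. Lowering rival counts while keeping $y$'s count fixed preserves each of these strict lexicographic inequalities, so $y$ still wins the filtered state. This is the same monotonicity that appears in the merging proof, but run in the opposite direction. Nothing in the argument uses the law of the sequence, so no independence assumption is needed; averaging under any sequence distribution then inherits the pathwise statements.

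The step that needs care is the fallback clause. At the cap $B$, the selector returns the filtered plurality winner if the filtered pool is nonempty and the raw plurality winner otherwise. If the raw endpoint winner is a correct $y$, then $y\in A_x$ and $v_B(y)\ge1$. So $v^{A_x}_B(y)\ge1$ and the filtered pool is nonempty, which means the fallback never triggers in this case. The preceding paragraph then shows that the filtered winner is $y$. Consequently, the fallback can only act on trajectories whose raw output is not correct, and every raw correct win is preserved.

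I do not expect a real obstacle. The only subtleties are keeping the remaining budget unchanged, since filtered responses still consume calls, and handling the empty-pool case, where the returned value is $\bot$ rather than an answer.
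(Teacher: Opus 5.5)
Your proposal is correct and follows the paper's proof essentially step for step. Filtering keeps the gold count and tie priorities fixed and can only lower rival counts, which gives both the threshold inequality and win preservation, and the fallback clause follows because a correct raw winner is retained, so the filtered pool is nonempty. You spell out the termwise and lexicographic comparisons that the paper leaves implicit.
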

\begin{proof}
Gold's count stays fixed. Every retained rival keeps its count and tie priority, and removed rivals have zero count. Hence no adjusted rival deficit in the recovery-threshold formula increases, and the presence term is unchanged. If $y$ won before filtering it is present, so the filtered pool is nonempty and its winner remains $y$. If the filtered pool is empty, the raw winner could not have been correct, because every correct answer would have been retained.
\end{proof}

The original decision-locking certificate applies to a nonempty filtered state, with $r=B-N$ determined by all issued calls, including rejected ones. It preserves the full-cap \emph{filtered} endpoint, including its errors. The empty-pool fallback is invoked only at cap. Any learned filter that can reject a correct answer falls outside this proposition's assumption and requires an additional error analysis.

Filtering need not make exact locking earlier. With cap 16 and the conservative unseen-rival check, consider nine copies of one inadmissible wrong answer followed by seven copies of gold. Raw plurality locks the wrong answer at call 9. Filtering yields nine abstentions, then locks gold at call 13. The selector improves the endpoint while its first safe stopping time becomes later. Cost comparisons must therefore specify which full-cap selector each certificate preserves.

Conditional wrong collision can increase during this beneficial intervention. Consider iid probabilities $(p_y,p_a,p_b,p_c)=(0.3,0.2,0.4,0.1)$ and retain only $y,a$. For sorting the input \texttt{ant bee}, these candidates can be \texttt{ant bee}, \texttt{bee ant}, \texttt{ant}, and \texttt{ant cat}, so the filter is input-defined and never inspects gold. Gold loses every tie. Conditional wrong collision rises from $3/7$ to $1$, while unconditional wrong collision falls from $0.21$ to $0.04$. At budget 16, exact plurality accuracy increases from $23.1019\%$ to $65.4329\%$, while coverage remains $1-0.7^{16}=99.6677\%$. This example explains why error mass, identity counts and the conditioning denominator must accompany a concentration statistic.

\subsection{State-Dependent Probability Transfers}
\label{app:state-competition}

Fix the alphabet $\mathcal{U}_x\cup\{\bot\}$, put $e_\bot=0$, and use unconditional call probabilities. For any candidate $y$, let $A_y(v)$ indicate its selection and $Q_{r,y}(v;p)=\mathbb{E}_p[A_y(v+\sum_{t=1}^r e_{Z_t})]$ for iid calls. Gold is an offline label in wrong-identity comparisons.

\begin{proof}[Proof of \cref{prop:pivotal-transfer}]
Put $p_\epsilon=p+\epsilon(e_b-e_a)$ on its legal probability interval, $W=v+\sum_{t=1}^{r-1}e_{Z_t}$, $\delta_j=\mathbf{1}\{\kappa_x(j)>\kappa_x(y^\star)\}$ and $E_j=\{A_{y^\star}(W)=1,\ W_{y^\star}-W_j-\delta_j=0\}$, the event that rival $j$ is pivotal. For $r\geq1$ we show
\begin{align}
 \frac{d}{d\epsilon}Q_{r,y^\star}(v;p_\epsilon)
 &=r\{Q_{r-1,y^\star}(v+e_b;p_\epsilon)-Q_{r-1,y^\star}(v+e_a;p_\epsilon)\}\notag\\
 &=r\{\Pr_{p_\epsilon}(E_a)-\Pr_{p_\epsilon}(E_b)\}.
 \label{eq:pivotal-transfer}
\end{align}
Differentiate the ordered-sequence probability polynomial. Each of its $r$ positions gives the same expectation after replacing that position by $b$ or $a$. Adding a wrong vote cannot create a gold win. It removes an existing win exactly when that rival has zero tie-adjusted slack. Thus $A_{y^\star}(W+e_b)-A_{y^\star}(W+e_a) =\mathbf{1}_{E_a}-\mathbf{1}_{E_b}$. Polynomial derivatives also give the appropriate one-sided derivatives at simplex boundaries.
\end{proof}

\begin{lemma}[Conditional binomial sign threshold]
\label[lemma]{lem:transfer-threshold}
Keep $s=p_a+p_b>0$ and other probabilities fixed, and put $t=p_a/s$. Condition on all future counts outside $\{a,b\}$, leaving $m$ calls of these two types, and let $w$ include the conditioned votes. Discard a stratum if gold is absent or loses to an outside rival. Otherwise set
\[
 C_a=w_{y^\star}-w_a-\delta_a,\quad C_b=w_{y^\star}-w_b-\delta_b,
 \qquad \ell=\max(0,m-C_b),\quad u=\min(m,C_a).
\]
For $\ell\leq u$, conditional success is $F(t)=\Pr\{\ell\leq\mathrm{Bin}(m,t)\leq u\}$. For $m\geq1$,
\begin{equation}
 \frac{dF}{d\epsilon}=\frac{m}{s}
 \{f_{m-1,t}(u)-f_{m-1,t}(\ell-1)\},
 \label{eq:conditional-transfer}
\end{equation}
where $f$ is a binomial point probability, zero outside its support. For $1\leq\ell\leq u\leq m-1$ and $0<t<1$, the sign is nonnegative exactly when
\begin{equation}
 t\geq t_{m,\ell,u},\qquad
 \frac{t_{m,\ell,u}}{1-t_{m,\ell,u}}
 =\left[\frac{\binom{m-1}{\ell-1}}{\binom{m-1}{u}}\right]^{1/(u-\ell+1)}.
 \label{eq:transfer-odds}
\end{equation}
The edge cases $\ell=0,u<m$ and $\ell>0,u=m$ have nonnegative and nonpositive derivatives, respectively, throughout $0\leq t\leq1$. At $t=0,1$, use \cref{eq:conditional-transfer} directly, as both point probabilities can vanish. Empty and full intervals are constant. Outer stratum weights are unchanged by the transfer. Their weighted derivative sum gives the exact unconditional sign. For a forward transfer $0\leq\epsilon\leq p_a$, nonnegative derivatives in every positive-weight stratum throughout $[0,\epsilon]$ suffice for a weak improvement, and nonpositive derivatives suffice for a weak decrease. Check interior strata by their thresholds and edge strata by \cref{eq:conditional-transfer}.
\end{lemma}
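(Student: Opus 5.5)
The plan is to exploit the multinomial factorization of the $r$ future calls. First, condition on the vector of future counts for every symbol outside $\{a,b\}$, gold and $\bot$ included. Because the transfer keeps $s=p_a+p_b$ and every other probability fixed, the law of this outer vector is unchanged; these are the outer stratum weights. Given the outer vector, the remaining $m$ calls are iid, each equal to $a$ with probability $t=p_a/s$ and to $b$ otherwise. So, given the stratum, the number $K$ of future $a$ votes is $\mathrm{Bin}(m,t)$, and $b$ receives $m-K$ votes.

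Next I characterize gold's success within a stratum. Gold's final count $w_{y^\star}$ and all outside rival counts are fixed by the conditioning. If gold is absent, or already loses to an outside rival, it loses for every $K$, so the success probability is $0$ and does not depend on $t$; that is why these strata can be discarded. Otherwise gold beats $a$ under the count-and-priority order iff $w_{y^\star}-(w_a+K)-\delta_a\geq0$, that is, $K\leq C_a$. It beats $b$ iff $m-K\leq C_b$. Intersecting with $0\leq K\leq m$ gives success exactly when $\ell\leq K\leq u$, so $F(t)=\Pr\{\ell\leq\mathrm{Bin}(m,t)\leq u\}$. If $\ell>u$ this interval is empty, and if $\ell=0,u=m$ it is full; in both cases $F$ is constant.

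For the derivative, I use the standard identity
\[
\frac{d}{dt}\Pr\{\mathrm{Bin}(m,t)\leq k\}=-m f_{m-1,t}(k),
\]
which gives
\[
\frac{dF}{dt}=m\{f_{m-1,t}(\ell-1)-f_{m-1,t}(u)\},
\]
with $f$ set to zero outside its support. The transfer $p\mapsto p+\epsilon(e_b-e_a)$ gives $t=(p_a-\epsilon)/s$, so $dt/d\epsilon=-1/s$, and the chain rule yields \cref{eq:conditional-transfer}. The formula is polynomial in $t$, so it also gives the one-sided derivatives at $t=0,1$.

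For the sign in the interior case $1\leq\ell\leq u\leq m-1$ with $0<t<1$, both point probabilities are positive, so I take their ratio:
\[
\frac{f_{m-1,t}(u)}{f_{m-1,t}(\ell-1)}=\frac{\binom{m-1}{u}}{\binom{m-1}{\ell-1}}\Bigl(\frac{t}{1-t}\Bigr)^{u-\ell+1}.
\]
This ratio is strictly increasing in the odds $t/(1-t)$. It is at least $1$ iff the odds are at least the root in \cref{eq:transfer-odds}, which is the threshold $t_{m,\ell,u}$. The edge cases follow by dropping the vanishing term. If $\ell=0$, then $f(\ell-1)=0$ and the derivative is $\geq0$. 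If $u=m$ and $\ell>0$, then $f_{m-1,t}(m)=0$ and the derivative is $\leq0$. Both hold for all $t\in[0,1]$.

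Finally, the unconditional success probability is $Q=\sum_{\text{strata}}\pi\cdot F$, where the weights $\pi$ do not depend on $\epsilon$. Differentiating term by term expresses the unconditional derivative as the weighted sum of the stratum derivatives, which gives the exact sign. For a forward transfer, I integrate each stratum derivative over $[0,\epsilon]$ by the fundamental theorem of calculus; one-signed derivatives then give the weak improvement or weak decrease.

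The step I expect to need most care is the stratum bookkeeping. I must verify that gold presence and outside-rival comparisons really are determined by the conditioned counts, and that the tie offsets $\delta_a,\delta_b$ translate into the integer bounds $\ell$ and $u$ without an off-by-one error at the boundaries $K=0$ and $K=m$.
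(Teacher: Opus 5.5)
Your proposal is correct and matches the paper's proof step for step. Like the paper, you condition on the outer counts, whose law depends only on $s$ and the other probabilities; you reduce gold's success to $\ell\leq X_a\leq u$ with $X_a\sim\mathrm{Bin}(m,t)$; you differentiate the difference of binomial CDFs using $dt/d\epsilon=-1/s$; you get the interior threshold from the monotone point-probability ratio and the edge signs from vanishing point probabilities; and you finish by summing over strata and integrating. The boundary bookkeeping you flagged also checks out: gold beats $a$ under the count-and-priority order exactly when $w_{y^\star}-w_a-K\geq\delta_a$, so there is no off-by-one at $K=0$ or $K=m$.
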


\begin{proof}
Conditionally, $X_a\sim\mathrm{Bin}(m,t)$ and $X_b=m-X_a$. Gold wins exactly when $\ell\leq X_a\leq u$. Differentiate the difference of the two binomial cumulative probabilities, using $dt/d\epsilon=-1/s$. For interior indices the ratio of the two point probabilities is $\binom{m-1}{u}\binom{m-1}{\ell-1}^{-1}(t/(1-t))^{u-\ell+1}$, which is increasing and equals one at the stated threshold. Edge cases follow from the zero point probabilities. Collapsing $a,b$ into a single symbol shows that outside weights depend only on $s$ and the other probabilities. Sum the derivatives and integrate.
\end{proof}

This conditions on extra wrong identities as well, so it applies beyond three answers. Opposing signs across strata require their weighted sum, while the simultaneous inequalities are sufficient. With $v=(2,0,1)$, $r=1$ and gold losing ties, changing $(0.3,0.6,0.1)$ to $(0.3,0.5,0.2)$ changes accuracy from $9/10$ to $4/5$. At $v=(2,1,0)$ the same transfer changes accuracy from $2/5$ to $1/2$.

\section{Experimental Setup}
\label{app:experimental-overview}
\label{app:study-overview}

The empirical studies are organized around three questions: whether finite-budget recovery dynamics appear across response protocols, whether the realized vote state predicts the fixed-budget endpoint, and whether changing error competition or stopping behavior changes the final decision. Table~\ref{tab:study-map} summarizes the datasets and the role each cohort plays. Calls are counted against the budget even when a response is invalid and contributes no vote. Unless stated otherwise, uncertainty intervals are pointwise 95\% bootstrap intervals with the item, or the GSM template family, as the resampling unit, so repeated calls and model-specific records from the same unit remain grouped.

\begin{table}[!htbp]
\centering\small\setlength{\tabcolsep}{4pt}
\caption{Empirical studies and their role in the paper. The cohorts are analyzed separately.}
\label{tab:study-map}
\begin{tabularx}{\linewidth}{@{}>{\raggedright\arraybackslash}p{.18\linewidth}>{\raggedright\arraybackslash}p{.32\linewidth}>{\raggedright\arraybackslash}X>{\raggedright\arraybackslash}X@{}}
\toprule
Study & Purpose & Data & Scale\\
\midrule
Forced-choice MMLU-Pro & Illustrate recovery trajectories and conversion-window closure & 200 items, five models, two sequences & 2,000 trajectories\\
GSM-Symbolic & Prospective endpoint forecasting from a realized call-8 state & 250 items in 100 template families, five models & 5,000 forecast trajectories\\
Native generation & Test recovery dynamics under direct answer generation & 1,200 MMLU-Pro and 200 Word16 items, Qwen3.5-2B/4B & 2,800 trajectories\\
Diversification & Test representation diversity and selector interaction & 256 held-out Word16 and 256 held-out MMLU-Pro items per task & 16 calls per policy\\
Physical stopping & Measure request-time consequences of exact locking & 32 MMLU-Pro items per checkpoint & batch sizes 1 and 4\\
Judge selector & Select after plurality's window has closed & Call-12 states of 200 MMLU-Pro items, five pools & 2,000 states, 268 vote-infeasible\\
Effect forecasts & Predict an intervention's effect from the realized state & 128 MMLU-Pro inputs, Qwen3.5-4B, Base and Permuted & 256 states\\
Fresh native cohort & Replicate filtering and selector interaction, and test physical stopping & 384 test and 48 online inputs per task: Word16, MMLU-Pro, shortest paths & 16 calls per arm and 960 requests\\
Public GSM8K & Lock-time law, gap and forecasts at large budgets & 127 problems, 10,000 samples each, Llama-3-8B/70B-Instruct & budgets 16 to 1,024\\
Public MATH & Gap, forecasts and lock-time law on a harder open-answer task & 128 problems, 10,000 samples each, Llama-3-8B/70B-Instruct & budgets 16 to 1,024\\
Forced-choice protocol, 27B & Failure composition and locking at a larger scale & 200 MMLU-Pro items, Qwen3.5-27B, two sequences & 400 trajectories\\
Temperature and permutation & Test permutation at two temperatures on new inputs & 768 MMLU-Pro test inputs, Qwen3.5-4B & 16 calls per cell\\
Thinking-mode tokens & Token savings of exact locking with reasoning models & 100 MATH-500 problems, Qwen3-14B/8B in thinking mode & two 32-call sequences per problem\\
\bottomrule
\end{tabularx}
\end{table}

\paragraph{Protocols.}
Forced-choice MMLU-Pro turns each call into one vote through rationale generation, 64-word processing and forced-choice scoring. The 16-call budget counts generations, each of which requires an additional scoring forward pass, and an alternative rationale-processing rule is examined as a sensitivity (\cref{app:historical-protocol}). In this cohort, Ministral's chat markers are encoded as ordinary text. A check with corrected encoding is inconclusive, which precludes capability rankings involving Ministral. GSM-Symbolic uses native chat encoding and a fixed rational-answer parser with template family as the resampling unit. Of its 6,250 trajectories, 5,000 enter forecast evaluation, and the remaining 1,250 estimate response laws. Native generation parses a generated final answer, with no rescoring or repair, under native chat templates, nonthinking generation and a 4,096-token cap, for 44,800 calls. In every protocol, invalid responses consume calls without adding votes. Public MATH samples are parsed with the dataset grader's own final-answer extraction and normalization, followed by a deterministic identity rule (\cref{app:scope-scale}). \Cref{tab:settings} lists the generation settings of every protocol.

\begin{table}[!htbp]
\centering\footnotesize\setlength{\tabcolsep}{3pt}
\caption{Generation settings by protocol. In-house generation uses BF16 weights with Hugging Face Transformers, except the thinking-mode study, which uses vLLM 0.13.0. Every invalid call consumes budget and adds no vote.}
\label{tab:settings}
\begin{tabularx}{\linewidth}{@{}l*{5}{>{\raggedright\arraybackslash}X}@{}}
\toprule
 & Forced-choice MMLU-Pro$^{a}$ & GSM-Symbolic & Native$^{b}$ & Thinking mode & Public GSM8K, MATH\\
\midrule
Models & Qwen3.5-0.8B/2B/4B/27B, Gemma-4-E2B, Ministral-3-3B & the five 0.8B--4B checkpoints & Qwen3.5-2B/4B & Qwen3-14B/8B & Llama-3-8B/70B-Instruct\\
Prompting & chat, nonthinking, $\leq$40-word rationale & native chat, nonthinking & native chat, nonthinking & native chat, thinking & few-shot completion (5 GSM8K, 4 MATH)\\
Temperature & 0.7 & 0.7 & 0.7 (High temp.\ 1.0) & 0.6 & 0.6\\
top-$p$ / top-$k$ & 0.9 / model default & 0.9 / 20 & 0.9 / 20 & 0.95 / 20 & none / none\\
Repetition penalty & 1.05 & 1.05 & 1.05 & none & none\\
Max new tokens & 96 & 1{,}024 & 4{,}096$^{c}$ & 16{,}384 & 512\\
Vote & argmax forced-choice letter & last boxed exact rational & letter, word sequence or path & final boxed answer & dataset extractor\\
Budgets $B$ & 16 & 16--64 & 16 & 16--32 & 16--1{,}024\\
\bottomrule
\end{tabularx}
\par\smallskip\raggedright\footnotesize $^{a}$Also the 27B replay and the judge study's pools. Judge rationales use the same settings. $^{b}$Native MMLU-Pro and Word16, the diversification study, the effect forecasts, the fresh three-task cohort, the fresh MMLU-Pro temperature study and the physical stopping runs. $^{c}$The fresh MMLU-Pro temperature study reaches 4,096 tokens by staged continuation.
\end{table}

\paragraph{Common state and failure definitions.} Feasibility is evaluated immediately after the first correct vote is added. With budget 16 and first gold discovery at call $t$, covered-but-unselected endpoints are partitioned into three classes: I (infeasible at discovery, $L(v_t)>16-t$), F (feasible at discovery but never selected at any prefix) and L (selected at least once and then lost at the endpoint). Gold labels are used only for offline evaluation of these state categories. Routing and stopping rules do not use them.

\paragraph{Prospective and retrospective analyses.} Forecasts, selectors and primary contrasts were fixed before the outcomes were generated or opened in seven studies. These are the GSM forecasts, the effect forecasts, the Word16 diversification study, the fresh native cohort, the judge study, the fresh MMLU-Pro temperature study and the thinking-mode study. The forced-choice recovery analyses and the public-data, 27B and longer-law replays are retrospective. The second GSM wave was added after an inconclusive first-wave calibration, so pooled waves are reported descriptively. The lock-time predictions were stored before the replay analyses, although the $B=16$ lock times of our own cohorts were already known. Checkpoint revisions, prompts, parsers and software settings are listed in \cref{app:checkpoints}.

\FloatBarrier

\section{Recovery Dynamics Across Protocols}
\label{app:historical-study}

\subsection{Forced-Choice MMLU-Pro}
\label{app:historical-protocol}
\label{app:confirmatory-protocol}
\label{sec:confirmatory-design}
\label{sec:contemporary-validation}
\label{app:ext-contemporary}

The forced-choice cohort contains 200 MMLU-Pro items \citep{wang2024mmlupro}, five checkpoints, and two independently seeded 16-call sequences per model-item pair. Each call generates a concise rationale that is converted to a forced-choice vote using the fixed 64-word scoring rule. The analyses of this cohort are retrospective, so it illustrates the state dynamics developed in \cref{sec:structure}, and the native cohort below provides a separate protocol check.

\paragraph{Conversion-window dynamics.} Among the 1,384 trajectories in which gold appears, 74 are already infeasible at discovery, 433 lose reachability later but before the final call, 46 remain reachable through call 15 and fail at the endpoint, and 831 select gold at call 16. \Cref{fig:gap}b shows the same phenomenon call by call for the 553 covered failures: recoverability can disappear after discovery, and a temporarily selected answer can subsequently lose. Across all 2,000 trajectories, coverage rises from 35.95\% after one call to 69.20\% after sixteen, while current selection rises only from 35.95\% to 41.55\% (\cref{fig:historical-evolution}).

\begin{table}[!htbp]
\centering\small
\caption{Timing of the gold candidate's conversion window in forced-choice MMLU-Pro. The denominator is the 1,384 trajectories in which gold is discovered.}
\label{tab:historical-window-audit}
\begin{tabular}{@{}lrr@{}}
\toprule
Conversion-window outcome & Count & Share of discovered (\%)\\
\midrule
Infeasible at discovery & 74 & 5.3\\
Closes later, before final call & 433 & 31.3\\
Reachable through call 15, endpoint failure & 46 & 3.3\\
Open through endpoint, gold selected & 831 & 60.0\\
\bottomrule
\end{tabular}
\end{table}

\begin{figure}[!htbp]
\centering
\includegraphics[width=0.6\linewidth]{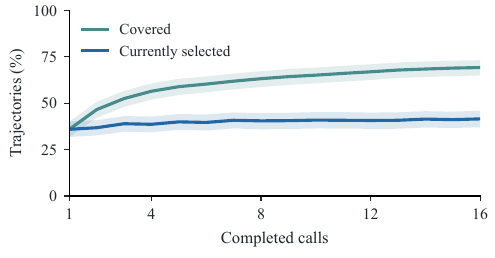}
\caption{Forced-choice MMLU-Pro coverage and current selection across all 2,000 trajectories, with pointwise 95\% item-bootstrap intervals. The per-call states of the 553 covered failures appear in \cref{fig:gap}b.}
\label{fig:historical-evolution}
\end{figure}

\paragraph{Retrospective forecast check.} Using the opposite 16-call sequence to estimate the response law, the exact finite-horizon calculation predicts an endpoint gold-selection rate of 0.1327 among the 454 call-8 states that are covered but unselected, against an observed rate of 0.1300. Its Brier score is 0.02959, compared with 0.04665 for a law-only predictor and 0.04850 for persistence, while a supervised gold-aware state logistic comparator attains 0.02835. This check motivates the prospective GSM design below and is not used as its validation set.

\paragraph{Scoring sensitivity.} Alternative rationale-processing rules change the cohort's aggregate quantities modestly. Relative to the original rule, which cuts each rationale at its first curly brace and at 40 words, the 64-word rule raises independent-plurality accuracy from 0.3920 to 0.4155 and coverage by 3.8 points. A separate Ministral check on 50 items gives an accuracy difference of 2.0 points between corrected encoding and the cohort's encoding, with a 95\% item-bootstrap interval of $[-6.0,11.0]$. Cross-model comparisons involving Ministral in this cohort therefore retain this qualification.

\subsection{Native Generation}
\label{app:native-validation}

The native cohort uses Qwen3.5-2B and 4B on 1,200 new MMLU-Pro items and 200 new Word16 items per model, with one 16-call sequence per item. Responses are generated with native chat templates and parsed directly into a canonical answer or abstention. Word16 consists of sixteen-word sorting instances with exact reference answers that preserve multiplicity. An exact programmatic sorter exists, so this task is used as a controlled mechanism setting.

\begin{table}[htbp]
\centering\small\setlength{\tabcolsep}{4pt}
\caption{State-conditioned recovery in the native-generation cohort. Eligible prefixes are covered but unselected. Blocked means $L(v_{12})>4$, and late means first discovery at calls 9--16.}
\label{tab:native-recovery}
\begin{tabular}{llrrrr}
\toprule
Model & Task & \shortstack{Converts from\\call 8} & \shortstack{Blocked /\\eligible at 12} & Late feasible & Late converts\\
\midrule
2B & MMLU-Pro & $16.9\%$ (34/201) & $105/239$ & $50/82$ & $13/82$\\
4B & MMLU-Pro & $11.9\%$ (13/109) & $54/131$ & $16/33$ & $3/33$\\
2B & Word16 & $59.7\%$ (37/62) & $0/59$ & $20/21$ & $8/21$\\
4B & Word16 & $22.6\%$ (14/62) & $15/89$ & $17/21$ & $3/21$\\
\bottomrule
\end{tabular}
\end{table}

For native MMLU-Pro, late discoveries convert at 15.85\% for 2B and 9.09\% for 4B (\cref{tab:native-recovery}), showing that the absence of late conversion in the forced-choice cohort is protocol-specific. Early discovery also leaves many failures. Of the native MMLU-Pro failures, 187 of 256 (2B) and 109 of 139 (4B) are discovered by call 8. Word16 likewise contains both late recovery and blocked states, although its role is primarily mechanistic.

\subsection{Cross-Protocol Failure Composition}
\label{app:unified-failures}

\begin{table}[!htbp]
\centering\small\setlength{\tabcolsep}{4pt}
\caption{Endpoint coverage, accuracy, and unified failure counts. I: infeasible at discovery. F: feasible at discovery but never selected. L: selected and later lost. GSM-Symbolic includes the law-estimation sequences, and only 5,000 of its trajectories enter forecast evaluation. All rows use $B=16$. Public rows use trajectories from samples 2,001--10,000 of 127 GSM8K and 128 MATH problems. Invalid calls, as a share of all calls, are 0.0\% for the forced-choice protocol (five checkpoints and 27B), 11.2\% for GSM-Symbolic, 16.6\% and 7.5\% for native MMLU-Pro (2B, 4B), 9.2\% and 0.1\% for native Word16, 9.1\% for fresh MMLU-Pro, 0.1\% and 0.0\% for public GSM8K and 6.9\% and 6.8\% for public MATH (8B, 70B).}
\label{tab:unified-failures}
\label{tab:native-main}
\begin{tabular}{@{}lrrrrrr@{}}
\toprule
Cohort & Trajectories & Coverage & Accuracy & I & F & L\\
\midrule
Forced-choice MMLU-Pro & 2,000 & 69.20 & 41.55 & 74 & 287 & 192\\
GSM-Symbolic & 6,250 & 89.92 & 78.82 & 68 & 359 & 267\\
Native MMLU-Pro, 2B & 1,200 & 83.08 & 61.75 & 32 & 131 & 93\\
Native MMLU-Pro, 4B & 1,200 & 88.42 & 76.83 & 17 & 72 & 50\\
Native Word16, 2B & 200 & 92.50 & 70.00 & 1 & 27 & 17\\
Native Word16, 4B & 200 & 92.00 & 50.00 & 4 & 38 & 42\\
\midrule
Forced-choice MMLU-Pro, 27B & 400 & 84.50 & 75.50 & 7 & 15 & 14\\
Fresh MMLU-Pro, 4B (Base, $T=0.7$) & 768 & 86.85 & 74.74 & 10 & 51 & 32\\
Public GSM8K, Llama-3-8B & 63{,}500 & 97.95 & 86.04 & 576 & 3{,}551 & 3{,}435\\
Public GSM8K, Llama-3-70B & 63{,}500 & 99.21 & 96.81 & 93 & 689 & 739\\
Public MATH, Llama-3-8B & 64{,}000 & 66.40 & 38.75 & 1{,}676 & 9{,}798 & 6{,}220\\
Public MATH, Llama-3-70B & 64{,}000 & 80.00 & 58.63 & 1{,}383 & 7{,}147 & 5{,}143\\
\bottomrule
\end{tabular}
\end{table}

Across all protocols, coverage exceeds endpoint accuracy and covered failures include both failure to acquire the lead and failure to retain it. The relative frequencies differ substantially by protocol, which is consistent with the state-based framework: discovery alone does not determine endpoint selection.

\subsection{Larger Models and Budgets}
\label{app:scope-scale}

\paragraph{Data.}
Three replays reuse existing samples and make no model call. (i) Qwen3.5-27B was sampled under the forced-choice protocol of \cref{app:historical-protocol} for the judge study, with the same 200 items, prompt and forced-choice scoring and two 16-call sequences per item. (ii) The public Large Language Monkeys release \citep{brown2024largelanguagemonkeys} contains 10,000 samples per problem from Llama-3-8B-Instruct and Llama-3-70B-Instruct \citep{grattafiori2024llama3} on 127 GSM8K \citep{cobbe2021gsm8k} and 128 MATH problems \citep{hendrycks2021math}, generated at temperature 0.6 with few-shot prompts and a 512-token cap. GSM8K identities are those of \cref{app:lock-time-validation}. For MATH, the final answer is extracted and normalized exactly as by the dataset's own grader. An answer without a final-answer line, including a truncated one, abstains (6.9\% and 6.8\% of samples). A deterministic identity rule then merges forms that the grader treats as equal. It was developed from a gold-label audit of single samples and is applied without gold labels. It keys numeric values to 12 significant digits, so $\frac12$, $0.5$ and $\frac24$ coincide while $\frac13$ and $0.333$ stay distinct. It also identifies polynomial expressions in single-letter variables up to reordering and expansion ($12x-34$ and $-34+12x$), removes one pair of parentheses enclosing a whole answer without commas and removes the base subscript of a numeral ($52_8$ and $52$). Our correctness labels disagree with the dataset's on 0.07\% and 0.09\% of samples. Most disagreements arise because the grader compares a comma-separated list by its first element and accepts $24$ for the gold answer $24,-3$. We keep the stricter identity. The rule was fixed before any vote state was computed. Samples 2,001--10,000 form disjoint trajectories in stored order at each budget, and samples 1--2,000 are reserved for response laws. (iii) The 768-input fresh MMLU-Pro study of Qwen3.5-4B (\cref{app:temp-perm}) supplies four arms, Base or Permuted at temperature 0.7 or 1.0. Classes follow \cref{app:experimental-overview} with $16$ replaced by $B$, and late discovery means $t>B/2$.

\paragraph{Results.}
The discovery-to-decision gap appears in every replay (\cref{tab:scope-budget,fig:scope-gap-budget}). For 27B, 36 of its 338 covered trajectories fail. Of these, 7 are infeasible at discovery, 15 are feasible but never selected and 14 are selected then lost, and none of its eight late discoveries converts. On public data the gap is small for Llama-3-70B on GSM8K (2.4 points at $B=16$) and large on MATH. On MATH it grows with the budget, because coverage keeps rising while plurality accuracy approaches the empirical fraction of problems whose modal answer is correct. In a post hoc check over all 10,000 samples, which overlap the evaluation samples, the mode is correct on 43.0\% and 60.2\% of MATH problems, against plurality accuracies of 43.2\% and 60.4\% at $B=1{,}024$. This agrees with the modal ceiling of \citet{bay2026modalceiling} under a stable iid law. It is not a finite-budget bound. At every budget, at least 89.9\% of MATH failures were still recoverable when gold first appeared. At $B\geq256$, at least 98\% of them became unreachable before the final call, so at large budgets the failures are decided after discovery. Late discoveries still convert at $B=16$ (479 of 5,740 and 424 of 4,181 on MATH), and none does at $B\geq256$. On the fresh MMLU-Pro inputs the gap is 11.5 to 13.3 points in all four arms, so neither temperature 1.0 nor permutation closes it. On GSM-Symbolic, concatenating the stored sequences of the three checkpoints whose software environment did not change between waves (\cref{app:gsm-sensitivities}) gives gaps of 5.8, 6.3, 6.1 and 6.8 points at $B=16$, 32, 48 and 64. Per checkpoint the gap is flat or grows slowly, from 4.9 to 4.8, 4.3 to 6.0 and 8.3 to 9.6 points for 4B, Gemma and Ministral.

\paragraph{Sensitivity.}
Two checks bound the dependence on our constructions. Rebuilding the public trajectories from a fixed-seed random permutation of samples 2,001--10,000, or from blocks that start half a budget later, changes coverage, accuracy and the gap by at most 1.0 point at every budget. The largest shift, with paired problem-bootstrap intervals, is $-1.0$ $[-1.9,-0.1]$ points in the 8B MATH gap at $B=1{,}024$ under the permutation. Scoring MATH with the dataset's own labels while keeping our identities for vote counting raises coverage by 0.6 to 2.0 points and accuracy by at most 0.35 points. The gaps therefore grow slightly, by at most 2.0 points, to 50.5 and 36.7 at $B=1{,}024$. For 8B and 70B, 28 and 16 problems contain a sample-level disagreement, and none of them changes the correctness of its modal answer.

\begin{figure}[!htbp]
\centering
\includegraphics[width=\linewidth]{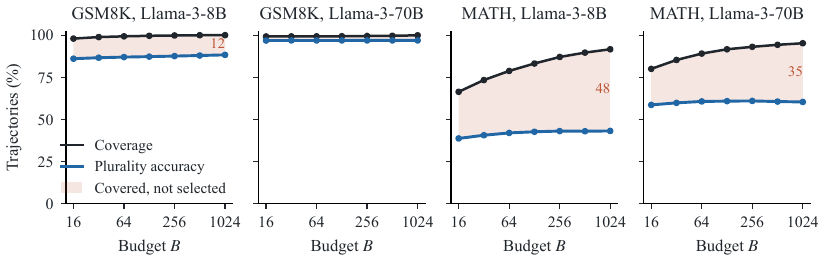}
\caption{Coverage and plurality accuracy against the budget on public samples from Llama-3-8B- and 70B-Instruct, for 127 GSM8K and 128 MATH problems. Trajectories are disjoint blocks of samples 2,001--10,000, from 63,500 or 64,000 trajectories at $B=16$ to 889 or 896 at $B=1{,}024$. The shaded band is the share of trajectories that sample the correct answer without selecting it, and the numbers give the gap at $B=1{,}024$ where it is at least 5 points (it is 3.0 points for 70B on GSM8K).}
\label{fig:scope-gap-budget}
\end{figure}

\begin{table}[!htbp]
\centering\footnotesize\setlength{\tabcolsep}{3.5pt}
\caption{The discovery-to-decision gap at larger budgets on public samples (127 GSM8K and 128 MATH problems). Gap is coverage minus plurality accuracy, in points with problem-cluster bootstrap 95\% intervals. Failures are covered failures as a share of covered trajectories. Recoverable is the share of covered failures still recoverable when gold first appeared. Closed is the share of covered failures whose correct answer became unreachable before the final call. Late converts counts late discoveries ($t>B/2$) selected at the endpoint, over all late discoveries.}
\label{tab:scope-budget}
\resizebox{\linewidth}{!}{%
\begin{tabular}{@{}llrrrlrrrr@{}}
\toprule
Task, model & $B$ & Traj. & Coverage & Accuracy & Gap [95\% CI] & Failures & Recoverable & Closed & Late converts\\
\midrule
GSM8K, 8B & 16 & 63{,}500 & 98.0 & 86.0 & 11.9 $[7.5,16.7]$ & 12.2 & 92.4 & 84.9 & 66/1{,}284\\
 & 64 & 15{,}875 & 99.3 & 87.0 & 12.3 $[7.3,17.8]$ & 12.4 & 98.3 & 95.6 & 0/75\\
 & 256 & 3{,}937 & 99.8 & 87.6 & 12.2 $[7.0,18.0]$ & 12.2 & 99.2 & 98.8 & 0/5\\
 & 1024 & 889 & 100.0 & 88.3 & 11.7 $[6.4,17.4]$ & 11.7 & 100.0 & 100.0 & 0/0\\
GSM8K, 70B & 16 & 63{,}500 & 99.2 & 96.8 & 2.4 $[0.2,5.1]$ & 2.4 & 93.9 & 93.3 & 1/108\\
 & 1024 & 889 & 99.9 & 96.9 & 3.0 $[0.7,6.2]$ & 3.0 & 85.2 & 100.0 & 0/4\\
MATH, 8B & 16 & 64{,}000 & 66.4 & 38.8 & 27.6 $[22.9,32.5]$ & 41.6 & 90.5 & 76.0 & 479/5{,}740\\
 & 64 & 16{,}000 & 78.8 & 42.0 & 36.8 $[30.0,43.7]$ & 46.7 & 96.3 & 95.3 & 2/876\\
 & 256 & 3{,}968 & 87.0 & 43.1 & 43.9 $[36.4,52.2]$ & 50.5 & 97.9 & 99.3 & 0/157\\
 & 1024 & 896 & 91.6 & 43.2 & 48.4 $[40.2,56.8]$ & 52.9 & 98.4 & 99.5 & 0/14\\
MATH, 70B & 16 & 64{,}000 & 80.0 & 58.6 & 21.4 $[16.5,26.5]$ & 26.7 & 89.9 & 78.0 & 424/4{,}181\\
 & 64 & 16{,}000 & 89.1 & 60.7 & 28.4 $[21.9,35.4]$ & 31.9 & 96.9 & 94.6 & 20/589\\
 & 256 & 3{,}968 & 93.1 & 61.0 & 32.1 $[24.8,39.9]$ & 34.5 & 98.4 & 98.2 & 0/59\\
 & 1024 & 896 & 95.2 & 60.4 & 34.8 $[27.1,43.0]$ & 36.6 & 99.4 & 99.4 & 0/6\\
\bottomrule
\end{tabular}%
}
\end{table}

\subsection{Closed Windows under a Judge Selector}
\label{app:selector-intervention}

Reachability and locking are defined for plurality over retained unit votes. This pre-specified study asks what a different selector does in states that plurality can no longer recover.

\paragraph{Design.}
Two stored 16-call sequences per item and source are cut at call 12, giving 2,000 states from 200 MMLU-Pro items and five sources. The 0.8B, 2B and Gemma pools reuse the forced-choice cohort of \cref{app:historical-protocol}. The 4B and Ministral pools use separately generated independent-sampling sequences on the same items with the same prompt and scoring. Gold-aware strata, enumerated before any judging, contain 268 vote-infeasible states ($L(v_{12})\geq5$) on 132 items, 248 covered-but-unselected feasible states, 819 currently correct states and 665 states without gold. The realized call-16 continuation selects gold in none of the 268 infeasible and in 38 of the 248 feasible states. Each smaller model judges all 400 states of its own pool, and Qwen3.5-27B judges all five pools as a secondary analysis. The judge sees the question, the distinct candidates of calls 1--12 in a fixed pool-seeded order, each with the rationale of its first occurrence, a note that every candidate may be wrong, and no counts. One letter is then scored over the question's full option set, so answers outside the pool are possible. No gold label or call after 12 enters the prompt. The comparator is the same model's direct-solving accuracy on the item over eight calls, with two further direct calls per item, generated alongside the judge calls, as a sensitivity. Estimates average states within an item and weight eligible items equally, and intervals resample all 200 items jointly (2,000 draws).

\begin{table}[!htbp]
\centering\footnotesize\setlength{\tabcolsep}{4pt}
\caption{Judge selection in the 268 vote-infeasible call-12 states. Rates (\%) and differences (points) use equal item weights with joint item-bootstrap 95\% intervals. Each smaller model judging its own pool is primary. Per-model rows are unadjusted, and 27B judging the same states is secondary.}
\label{tab:selector-primary}
\begin{tabular}{@{}lrrrl@{}}
\toprule
Judge & States & Judge & Direct & Difference [95\% CI]\\
\midrule
Own pool, pooled & 268 & 34.23 & 14.83 & $+19.40$ $[+12.39,+26.84]$\\
\quad Qwen3.5-0.8B & 70 & 27.68 & 10.27 & $+17.41$ $[+6.36,+30.05]$\\
\quad Qwen3.5-2B & 55 & 39.13 & 12.23 & $+26.90$ $[+13.33,+41.41]$\\
\quad Qwen3.5-4B & 40 & 27.42 & 21.37 & $+6.05$ $[-10.23,+23.57]$\\
\quad Gemma-4-E2B & 42 & 40.62 & 12.50 & $+28.12$ $[+12.50,+43.75]$\\
\quad Ministral-3-3B & 61 & 35.56 & 14.44 & $+21.11$ $[+7.77,+34.03]$\\
Qwen3.5-27B, all pools & 268 & 75.72 & 70.36 & $+5.36$ $[+0.01,+10.76]$\\
\bottomrule
\end{tabular}
\end{table}

\begin{table}[!htbp]
\centering\footnotesize\setlength{\tabcolsep}{4pt}
\caption{Judge selection in the other strata (descriptive). Differences are judge minus the stated comparator, in points, with equal item weights and joint item-bootstrap 95\% intervals. The call-13 and call-16 comparators use one and four further plurality calls. Judging adds rationale context and a scoring pass, so these comparisons are not cost-matched.}
\label{tab:selector-descriptive}
\begin{tabular}{@{}lrll@{}}
\toprule
Stratum: comparator & States & Own-pool judges & Qwen3.5-27B\\
\midrule
Covered, feasible: plurality at 16 & 248 & $+22.31$ $[+13.60,+30.88]$ & $+67.26$ $[+58.60,+74.83]$\\
Currently correct: plurality at 13 & 819 & $-26.18$ $[-31.43,-21.18]$ & $-13.93$ $[-19.42,-8.90]$\\
Gold absent: own direct solving & 665 & $+0.85$ $[-1.13,+3.18]$ & $-0.62$ $[-4.02,+2.73]$\\
All states: plurality at 16 & 2,000 & $+0.05$ $[-2.35,+2.50]$ & $+32.50$ $[+28.40,+37.05]$\\
\bottomrule
\end{tabular}
\end{table}

\paragraph{Results and scope.}
Four of the five own-pool contrasts are positive and the 4B contrast is inconclusive (\cref{tab:selector-primary}). The 27B lower bound is close to zero, and only its Ministral-pool contrast is positive among the five per-pool contrasts. Among the 819 currently correct states, own-pool judges lose gold in 29.17\% $[24.57,34.23]$ and 27B in 16.92\% $[11.99,22.17]$, and the own-pool all-state difference from four further votes is $+0.05$ points, so an overall improvement is not established (\cref{tab:selector-descriptive}). The large 27B all-state gain reflects a much stronger model at a different cost. The two further direct calls differ from the comparator by less than two points for every model except 4B ($-5.6$ $[-8.9,-2.5]$), and using them as the comparator gives $+20.7$ $[12.8,29.2]$ for the own-pool contrast. Candidates and rationale context jointly define the intervention, and gold-aware strata supply no routing rule. In the vocabulary of \cref{sec:structure}, a closed conversion window is closed for plurality: the judge reopens some windows and closes others that plurality had kept open.

\FloatBarrier

\section{Prospective Forecasting from the Vote State}
\label{app:gsm-forecasting}

\subsection{Prospective Design}

GSM-Symbolic \citep{mirzadeh2024gsmsymbolic} contributes 250 items from 100 GSM8K-derived template families. For each model-item pair, one independent 16-call sequence estimates the response law, while each forecast sequence supplies its own first eight calls as the realized state. Forecasts are computed and fixed before calls 9--16 are generated. Invalid or unparseable completions consume calls and contribute no vote. This design evaluates $Q_r(v;\widehat p)$ prospectively while separating the observed state from the suffix it predicts.

The first evaluation wave contains two forecast sequences per model-item pair and the second wave adds two more using the same items and response-law sequence. The primary diagnostic is the covered-but-unselected call-8 stratum, where endpoint recovery remains unresolved by the current plurality decision.

\subsection{Forecast Performance}

\begin{table}[!htbp]
\centering\small\setlength{\tabcolsep}{4pt}
\caption{GSM-Symbolic calibration in the covered-but-unselected call-8 stratum. Differences are observed minus forecast, in percentage points. The pooled row combines the two evaluation waves, and the range rows show pooled calibration by forecast magnitude.}
\label{tab:gsm-calibration}
\label{tab:gsm-calibration-waves}
\label{tab:gsm-waves}
\begin{tabular}{lrrrr}
\toprule
Population & $n$ & Observed & Forecast & Difference\\
\midrule
Wave 1 & 235 & 0.289 & 0.311 & $-2.1$\\
Wave 2 & 255 & 0.271 & 0.259 & $+1.2$\\
Pooled & 490 & 0.280 & 0.284 & $-0.4$\\
\midrule
Forecast $[0,0.05)$ & 189 & 0.063 & 0.008 & $+5.5$\\
Forecast $[0.05,0.2)$ & 83 & 0.169 & 0.121 & $+4.8$\\
Forecast $[0.2,0.5)$ & 93 & 0.333 & 0.350 & $-1.7$\\
Forecast $[0.5,1]$ & 125 & 0.640 & 0.759 & $-11.9$\\
\bottomrule
\end{tabular}
\end{table}

The pooled AUC within this stratum is 0.825 $[0.77,0.88]$. The mean calibration error is small, while the lowest and highest forecast ranges show opposite errors, indicating that short response-law estimation is most consequential near the extremes.

\begin{table}[!htbp]
\centering\small\setlength{\tabcolsep}{4pt}
\caption{Wave-1 Brier-score contrasts for the state-conditioned plug-in forecast. Differences are forecast minus comparator, so negative values favor the state-conditioned forecast.}
\label{tab:gsm-brier}
\begin{tabular}{llcc}
\toprule
Comparator & Population & Difference & 95\% interval\\
\midrule
Discovery-bin constant & covered, unselected & $-0.050$ & $[-0.078,-0.015]$\\
Law-only & all forecasts & $-0.023$ & $[-0.028,-0.019]$\\
Persistence & all forecasts & $-0.011$ & $[-0.017,-0.005]$\\
Law-only & covered, unselected & $-0.032$ & $[-0.051,-0.014]$\\
Persistence & covered, unselected & $-0.135$ & $[-0.193,-0.083]$\\
MMLU-Pro-fitted logistic & all forecasts & $+0.001$ & $[-0.001,+0.004]$\\
MMLU-Pro-fitted logistic & covered, unselected & $+0.006$ & $[-0.013,+0.024]$\\
\bottomrule
\end{tabular}
\end{table}

These comparisons show that conditioning on the realized vote state adds information beyond discovery timing, the response law alone, and persistence. Performance is similar to the supervised gold-aware state comparator used as a diagnostic.

\subsection{Scope and Sensitivity}
\label{app:gsm-sensitivities}

The second wave was added after the first-wave calibration was inconclusive, so the pooled estimate is descriptive. For two checkpoints, the second wave ran in a different software environment with the same weights, prompt tokens and decoding settings. In 55 of the 189 pooled lowest-range states, gold is absent from the 16-call law-estimation sequence and therefore receives zero plug-in mass. Seven of those states nevertheless select gold at the endpoint. The post-discovery feasibility convention of \cref{app:experimental-overview} differs from a pre-discovery convention on 24 late boundary discoveries, with no change in the observed endpoint conversions. Short-sample law estimation and the iid assumption both limit calibration, and this design does not separate their contributions. A secondary forecaster, selected among five candidates on the first wave, was pre-specified for the second. It adds the state's own eight prefix calls to the law evidence and shrinks the gold share toward a per-model empirical-Bayes prior fitted on the law sequences. On the second wave's covered-but-unselected states, its Brier score differs from the plug-in's by $-0.007$ $[-0.019,+0.005]$ and its calibration error is $+2.1$ points (90\% interval $[-1.5,+5.7]$). Both intervals include zero.

\subsection{Longer Law Samples}
\label{app:scope-forecast}

Each GSM-Symbolic law comes from only 16 calls. Two retrospective replays therefore vary the law sample and keep the exact forecast of \cref{eq:abstention-recursion} unchanged. On GSM-Symbolic, each call-8 state is forecast again from the 64 calls of the other four sequences of the same model and item. On the public samples, each call-8 state of a $B=16$ trajectory from samples 2,001--10,000 is forecast from the first 16, 64, 256 or 2,000 samples of the problem. These law samples are disjoint from the evaluation samples and come from the same problems. In the covered-but-unselected stratum, longer laws reduce the extreme-range errors and raise the AUC (\cref{tab:scope-forecast,fig:scope-forecast-lawsize}). With 2,000-sample laws, every forecast range with at least 100 states has a point discrepancy of at most 2.1 points, and the 19-state middle range of 70B GSM8K keeps $-19.4$. Across all 63,500 or 64,000 states, the mean point discrepancy is at most 0.1 point. On GSM-Symbolic the gains are uneven. The 64-call law raises the AUC and lowers the Brier score for four of the five checkpoints, with Gemma-4-E2B the exception. Per-checkpoint strata contain only 36 to 205 states, and the pooled mean and middle-range errors grow. These replays indicate that law-estimation uncertainty contributes to the miscalibration. The GSM-Symbolic 64-call laws use sequences generated after the forecasts were fixed, and the public samples come from one fixed prompt per problem.

\begin{table}[!htbp]
\centering\footnotesize\setlength{\tabcolsep}{3pt}
\caption{Forecast calibration in the covered-but-unselected call-8 stratum by law sample size. Differences are observed minus forecast, in points, with cluster-bootstrap 95\% intervals over template families or problems. The last range is $[0.5,1]$, shown with its number of states. Absent counts the states whose correct answer never occurs in the law sample.}
\label{tab:scope-forecast}
\resizebox{\linewidth}{!}{%
\begin{tabular}{@{}llrrrlrrrr@{}}
\toprule
Data & Law & States & Observed & Forecast & Difference & $[0,0.05)$ & $[0.5,1]$ (n) & AUC & Absent\\
\midrule
GSM-Symbolic & 16 calls (prospective) & 490 & 0.280 & 0.284 & $-0.4$ $[-3.6,+3.1]$ & $+5.5$ & $-11.9$ (125) & 0.825 & 55\\
 & 64 calls & 490 & 0.280 & 0.293 & $-1.3$ $[-3.9,+1.4]$ & $+3.0$ & $-3.4$ (128) & 0.871 & 11\\
GSM8K, 8B & 16 & 6{,}841 & 0.215 & 0.245 & $-3.0$ $[-7.8,+1.6]$ & $+0.7$ & $-13.5$ (1{,}492) & 0.865 & 141\\
 & 2{,}000 & 6{,}841 & 0.215 & 0.219 & $-0.3$ $[-1.2,+0.6]$ & $-0.3$ & $+0.6$ (1{,}145) & 0.907 & 0\\
GSM8K, 70B & 16 & 1{,}543 & 0.180 & 0.177 & $+0.3$ $[-3.6,+6.4]$ & $+1.4$ & $+5.8$ (280) & 0.948 & 4\\
 & 2{,}000 & 1{,}543 & 0.180 & 0.179 & $0.0$ $[-1.6,+1.3]$ & $+0.1$ & $+1.0$ (293) & 0.970 & 0\\
MATH, 8B & 16 & 13{,}730 & 0.229 & 0.195 & $+3.5$ $[-0.3,+6.7]$ & $+3.7$ & $-12.3$ (1{,}816) & 0.806 & 1{,}781\\
 & 2{,}000 & 13{,}730 & 0.229 & 0.229 & $0.0$ $[-0.6,+0.6]$ & $-0.2$ & $-1.5$ (2{,}323) & 0.875 & 4\\
MATH, 70B & 16 & 10{,}574 & 0.206 & 0.234 & $-2.8$ $[-8.8,+2.7]$ & $+3.4$ & $-27.9$ (1{,}933) & 0.795 & 1{,}238\\
 & 2{,}000 & 10{,}574 & 0.206 & 0.207 & $-0.1$ $[-0.7,+0.6]$ & $+0.1$ & $-1.0$ (1{,}519) & 0.883 & 2\\
\bottomrule
\end{tabular}%
}
\end{table}

\begin{figure}[!htbp]
\centering
\includegraphics[width=\linewidth]{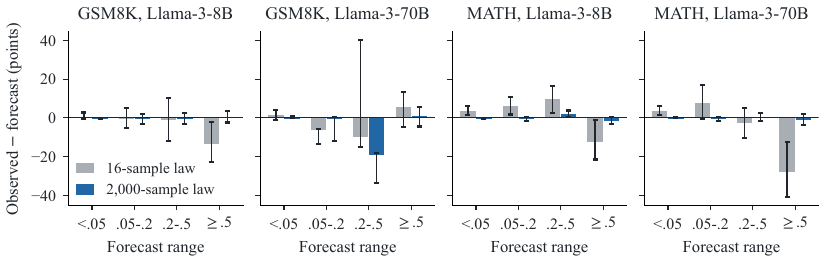}
\caption{Observed minus forecast endpoint rate by forecast range in the covered-but-unselected call-8 stratum of public $B=16$ trajectories, with laws from 16 (gray) or 2,000 (blue) disjoint samples of the same problem. Bars are problem-cluster bootstrap 95\% intervals. The 70B GSM8K range $[0.2,0.5)$ contains 19 states.}
\label{fig:scope-forecast-lawsize}
\end{figure}

\subsection{Forecasts of an Intervention's Effect}
\label{app:state-forecast}

This study tests the state dependence of \cref{app:state-competition} prospectively: from the same realized prefix, does the exact endpoint calculation predict how an intervention changes the fixed-budget outcome?

\paragraph{Design.}
The study uses 128 MMLU-Pro inputs selected by category round-robin and a fixed pseudo-random ranking, without using model outcomes and excluding every input of the other MMLU-Pro cohorts. Eligible questions have three to ten options and pass a case-aware lexical guard against references to option positions that retains ordinary uses of the article ``a''. Lexical eligibility does not prove semantic invariance to option order. Qwen3.5-4B uses the native settings of \cref{app:checkpoints}. Base keeps the option order, while Permuted shuffles the options independently on each call and maps the parsed letter back to its identity. Invalid and truncated outputs abstain and consume a call, and both arms share a fixed item-specific tie priority. For each input, twelve Base responses give realized prefixes at $t=8$ and $t=12$ with $B=16$, and an independent reference pool of 256 responses per arm estimates each arm's full answer-and-abstention law by its Dirichlet$(1/2)$ posterior mean. A simulation on specified laws, run before any model outcome, set this reference size.

Exact coefficient extraction (\cref{app:endpoint-laws}) gives each arm's endpoint probability from each state, and the forecast effect is their Permuted-minus-Base difference. All forecasts were fixed before any validation call. Validation then generated 32 independent eight-call continuations per arm and input, and the first four calls of each also serve the $t=12$ state. Two controls keep each arm's correct and abstention masses. One spreads the wrong mass uniformly over wrong identities. The other averages over 1,024 fixed-seed permutations of the wrong-mass vector across wrong identities, which preserves its spectrum and collision but not its alignment with the current votes. For control $b$, the primary score of input $i$ averages $(\widehat\Delta_{i,t}-f_{i,t,b})^2-(\widehat\Delta_{i,t}-f_{i,t,\mathrm{full}})^2$ over its two states, where $\widehat\Delta_{i,t}$ is the observed difference in mean continuation correctness. The observation-noise term cancels in this paired difference. Superiority requires the lower end of a 97.5\% whole-input bootstrap interval (20,000 draws, Bonferroni over the two controls) to exceed zero. Gold defines the success event and the wrong identities, so the study is an offline diagnostic, and reference acquisition lies outside the 16-call budget.

\begin{table}[!htbp]
\centering\footnotesize\setlength{\tabcolsep}{4pt}
\caption{Effect forecasts on 128 inputs and 256 states. The squared-error gain is control minus full loss (positive favors the full forecast) with the pre-specified 97.5\% intervals. Brier scores over individual continuation outcomes and the squared error to the noisy observed effect are secondary, with 95\% intervals.}
\label{tab:state-forecast-primary}
\begin{tabular}{@{}llll@{}}
\toprule
Forecast & Squared-error gain & Endpoint Brier & Effect squared error\\
\midrule
Full state-based & --- & 0.0146 $[0.0079,0.0222]$ & 0.00092 $[0.00032,0.00170]$\\
Uniform wrong mass & 0.0058 $[0.0010,0.0134]$ & 0.0212 $[0.0109,0.0334]$ & 0.00670 $[0.00198,0.01330]$\\
Wrong-identity shuffle & 0.0030 $[0.0004,0.0068]$ & 0.0189 $[0.0100,0.0293]$ & 0.00390 $[0.00134,0.00730]$\\
\bottomrule
\end{tabular}
\end{table}

\begin{table}[!htbp]
\centering\footnotesize\setlength{\tabcolsep}{4pt}
\caption{Effect-forecast bins, fixed before validation, with states weighted equally and item-bootstrap 95\% intervals. An input can contribute to two bins.}
\label{tab:state-calibration-bins}
\begin{tabular}{@{}lrll@{}}
\toprule
Forecast bin & Inputs / states & Mean forecast & Mean observed effect\\
\midrule
$[-1,-0.1)$ & 4 / 6 & $-0.349$ $[-0.545,-0.165]$ & $-0.391$ $[-0.585,-0.219]$\\
$[-0.1,-0.02)$ & 9 / 11 & $-0.053$ $[-0.066,-0.040]$ & $-0.060$ $[-0.096,-0.028]$\\
$[-0.02,0.02)$ & 117 / 224 & $0.000$ $[-0.001,0.000]$ & $0.000$ $[-0.001,0.001]$\\
$[0.02,0.1)$ & 3 / 3 & $0.037$ $[0.029,0.051]$ & $0.021$ $[0.000,0.031]$\\
$[0.1,1]$ & 9 / 12 & $0.370$ $[0.221,0.495]$ & $0.362$ $[0.250,0.482]$\\
\bottomrule
\end{tabular}
\end{table}

\paragraph{Results.}
Both primary comparisons favor the full forecast (\cref{tab:state-forecast-primary}). Of the 256 states, 129 have endpoints fixed by the prefix and remaining budget (98 guaranteed correct, 31 guaranteed incorrect). The 127 other states, from 109 inputs, carry the contrast, with gains of $0.0117$ $[0.0027,0.0241]$ and $0.0060$ $[0.0013,0.0124]$ over the two controls. Both prefixes favor the full forecast ($0.0090$ and $0.0024$ at $t=8$, $0.0025$ and $0.0035$ at $t=12$). By category, the gains come mainly from history, law, philosophy and other. In most remaining categories the three forecasts nearly coincide, and no category favors a control by more than $0.002$. The five pre-specified bins are close to calibrated (\cref{tab:state-calibration-bins}). Among the 32 states from 21 inputs whose forecast effect exceeds $0.02$ in magnitude, 29 agree with the observed sign and three show no change. Validation outputs contain 3,501 invalid responses, almost all truncations at the token cap, each counted as an abstention.

\paragraph{Oracle recoding of two wrong identities.}
Natural permutation changes the whole response law. To test the transfer of \cref{prop:pivotal-transfer} directly, we recoded the independent Base continuations at each forecast state. Let $a$ be the wrong option with the most prefix votes and $b$ the one with the fewest (ties broken by reference mass and priority, fixed before validation). A fixed-seed coin relabels each $a$ response as $b$ ($a\to b$), or each $b$ response as $a$ ($b\to a$), with probability $\alpha\in\{0.25,0.5\}$, leaving every correct and abstaining response unchanged. Moving mass from the most- to the least-voted wrong answer raises the endpoint rate and the reverse lowers it, with average predicted and observed effects close (\cref{tab:state-forecast-oracle}). State-level direction checks are sparse. For a transferred mass $\epsilon=\alpha p_a$ or $\alpha p_b$, with $p$ the Base reference law, the linear prediction $\epsilon\,dQ_r/d\epsilon$ from \cref{eq:wrong-transfer-derivative} differs from the exact change by at most $r(r-1)\epsilon^2$, because $Q_r$ is a polynomial in $\epsilon$ whose second derivative is at most $2r(r-1)$ in absolute value. A state is eligible when the linear prediction plus or minus this bound excludes zero. One state is eligible for $a\to b$ at each $\alpha$, and 18 and 9 for $b\to a$ at $\alpha=0.25$ and $0.5$. Most observed effects are zero and none has the opposite sign. The recoding uses gold and post-processes responses, so it tests only the transfer calculation and leaves the mechanism of natural permutation open.

\begin{table}[!htbp]
\centering\footnotesize\setlength{\tabcolsep}{5pt}
\caption{Oracle recoding of two wrong identities on the independent Base continuations: pre-computed finite-difference predictions and observed endpoint changes, in points, with item-bootstrap 95\% intervals.}
\label{tab:state-forecast-oracle}
\begin{tabular}{@{}lrrl@{}}
\toprule
Direction & $\alpha$ & Predicted & Observed\\
\midrule
$a\to b$ & 0.25 & $+0.47$ & $+0.39$ $[+0.16,+0.67]$\\
$a\to b$ & 0.5 & $+1.03$ & $+1.06$ $[+0.46,+1.82]$\\
$b\to a$ & 0.25 & $-0.06$ & $-0.05$ $[-0.11,0.00]$\\
$b\to a$ & 0.5 & $-0.12$ & $-0.07$ $[-0.16,-0.01]$\\
\bottomrule
\end{tabular}
\end{table}

\FloatBarrier

\section{Representation Diversity and Error Competition}
\label{app:future-protocols}

\subsection{Design and Primary Test}
\label{app:adaptive-sampling}
\label{app:concentration-results}

The diversification study uses separate development and held-out test inputs, with 64 development and 256 test items for Word16 and MMLU-Pro. The principal intervention independently permutes the input representation on each call: Word16 shuffles the sixteen-word multiset, while MMLU-Pro permutes option positions and maps the returned letter back to the canonical option identity. Base and Permuted use 16 calls at temperature 0.7, and the 4B Word16 study additionally includes a high-temperature arm. Development data choose the fixed comparator used for the adaptive-policy comparison, and the held-out Word16 test set supplies the three pre-specified primary decisions.

\paragraph{Sampling policies.}
All policies use a 16-call budget. Base uses the original input on all calls, whereas Permuted independently applies the task-specific semantics-preserving permutation on each call. High temperature uses the Base representation with temperature $1.0$. Probe observes eight Base calls followed by two Permuted probe calls. If either probe is invalid or disagrees with the Base plurality, the final six calls use Permuted inputs, and otherwise they use Base. All issued calls count toward the budget, canonical plurality uses all sampled responses selected by the policy, and routing uses only responses available at the decision point without gold labels.

\begin{table}[!htbp]
\centering
\caption{Pre-specified Word16 primary comparisons on 256 paired test items. Formal decisions use Holm-adjusted exact paired tests over the three-comparison family.}
\label{tab:primary-test-details}
\resizebox{\linewidth}{!}{%
\begin{tabular}{@{}lrrrrrl@{}}
\toprule
Comparison & Effect & 95\% CI & Discordant $+/-$ & 98.33\% CI & Holm-adjusted $p$ & Decision\\
\midrule
4B Base minus 2B Base & $-8.20$ & $[-16.02,-0.39]$ & $43/64$ & $[-17.58,1.17]$ & $0.105$ & inconclusive\\
4B Permuted minus Base & $+21.09$ & $[14.06,28.12]$ & $74/20$ & $[12.50,29.69]$ & $5.54\times10^{-8}$ & supported\\
4B Probe minus High temp. & $-3.52$ & $[-9.77,2.73]$ & $31/40$ & $[-11.33,4.30]$ & $0.342$ & inconclusive\\
\bottomrule
\end{tabular}%
}
\end{table}

The supported primary result is the 4B Word16 permutation intervention: raw-plurality accuracy rises from 57.42\% to 78.52\%, while single-call correctness changes by $-0.22$ points $[-2.71,2.17]$. The two remaining primary comparisons are inconclusive after multiplicity correction. The model comparison was pre-specified as a replication of an exploratory result on the 200 native Word16 items of \cref{app:native-validation}: there, in an analysis specified after the cohort had been examined, 4B minus 2B is $+11.47$ $[8.47,14.53]$ points in single-call correctness and $-20.0$ $[-29.5,-11.0]$ points in plurality accuracy, and the mean largest wrong count is 4.2 against 1.8. On the held-out test items, 4B has $11.96$ $[9.15,14.79]$ points higher single-call correctness, a coverage difference of $+1.95$ $[-1.95,5.86]$ points, a largest wrong count $2.20$ $[1.89,2.51]$ votes higher and conditional wrong collision $0.158$ $[0.136,0.180]$ higher, while the plurality difference is $-8.20$ points (exact paired $p=0.053$ before and $0.105$ after Holm correction). Under the pre-specified decision rule, no direction is stated for this comparison.

\subsection{Mechanism and Selector Dependence}
\label{app:diversification-controls}

For valid wrong-answer counts $w_a$ and $W=\sum_a w_a$, define $\widehat H_{\mathrm{wrong}}=\sum_a w_a(w_a-1)/(W(W-1))$ and $\widehat C_{\mathrm{all}}=\sum_a w_a(w_a-1)/(16\cdot15)$. On 4B Word16, permutation lowers the mean largest wrong count from 4.03 to 2.84, increases the number of distinct wrong identities by 1.82 $[1.51,2.13]$, and lowers the all-item wrong-collision statistic by 0.0535 $[0.0408,0.0673]$ (\cref{fig:concentration-empirical}). These changes are consistent with weaker repeated wrong rivals. Permutation also changes other features of the response law, so collision need not be the only mediator.

\begin{figure}[!htbp]
\centering
\includegraphics[width=0.78\linewidth]{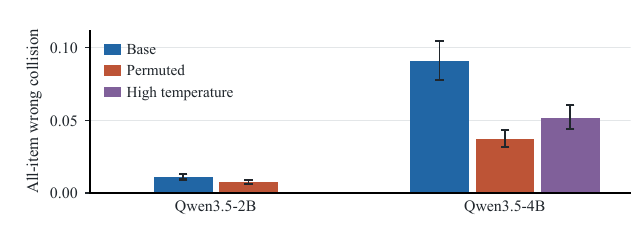}
\caption{All-item wrong collision on the 256 Word16 test items in each generation arm. Permutation substantially reduces repeated wrong identities while leaving single-call correctness nearly unchanged in the 4B comparison.}
\label{fig:concentration-empirical}
\end{figure}

Word16 also permits an input-defined multiset-validity filter and Borda fusion as stronger endpoint selectors. These controls substantially attenuate the representation effect, as shown in \cref{tab:selector-controls}.

\begin{table}[!htbp]
\centering\small
\caption{Selector dependence on the same 256 4B Word16 test items (accuracy, \%).}
\label{tab:selector-controls}
\begin{tabular}{lrrr}
\toprule
Generation & Raw plurality & Multiset-filtered & Borda fusion\\
\midrule
Base & 57.42 & 89.45 & 82.81\\
Permuted & 78.52 & 95.31 & 85.55\\
High temp. & 74.22 & 94.53 & 90.23\\
\bottomrule
\end{tabular}
\end{table}

\paragraph{Generator--selector interaction.} The Permuted-minus-Base gain is 21.09 points under raw plurality, 5.86 points under multiset-filtered plurality, and 2.73 points under Borda. The corresponding filtered-minus-raw interaction is $-15.23$ points with an approximate 98.33\% interval of $[-23.44,-7.03]$, and the Borda-minus-raw interaction is $-18.36$ $[-27.73,-8.98]$. The representation effect therefore depends strongly on the endpoint selector in this cohort.

\paragraph{A second model.}
The Word16 study also sampled Qwen3.5-2B under Base and Permuted generation on the same 256 test items, as a secondary descriptive comparison. The 2B pools are far less concentrated. Their all-item wrong collision is 0.011 against 0.091 for 4B Base, and their mean largest wrong count is 1.83 against 4.03. Permutation raises 2B raw-plurality accuracy from 65.62\% to 72.27\%, a gain of $+6.64$ $[0.00,13.28]$ points (46 and 29 discordant items, unadjusted exact $p=0.064$). It lowers the 2B all-item wrong collision by 0.0038 $[0.0017,0.0060]$ and changes single-call correctness by $+1.29$ $[-0.51,3.08]$ points. The 4B gain exceeds the 2B gain by $14.45$ $[5.47,23.44]$ points (paired item bootstrap). The larger gain accompanies greater baseline wrong-answer concentration, an association observed on only two checkpoints.

\subsection{Transfer to MMLU-Pro}
\label{app:diversification-secondary}

On the input-filtered 4B MMLU-Pro test population, permutation raises raw-plurality accuracy from 75.39\% to 81.64\%, a gain of 6.25 points $[3.12,9.77]$, while single-call correctness changes by 0.02 points $[-1.56,1.61]$. The lexical guard excludes 3,280 of 9,495 candidate inputs, including some ordinary uses of the article ``a'', so this transfer result applies to the filtered population used by the study.

\subsection{A Fresh Native Cohort}
\label{app:fresh-mechanism}

\paragraph{Design.}
A second pre-specified study uses separate sets of 96 development, 384 test and 48 online inputs for each of three tasks, disjoint from every earlier cohort: Word16 (every fifth input repeats a word), MMLU-Pro under a guard that excludes explicit option-label and position references but retains ordinary uses of the article ``a'' (663 of 9,171 candidate inputs excluded), and a directed shortest-path task with 16 vertices, 48 weighted edges and a unique optimal path of three to seven edges. Each input receives sixteen Qwen3.5-4B responses under Base, Permuted (shuffled words, options or edge list) and High temperature ($1.0$) generation, with the native settings of \cref{app:checkpoints}. Raw plurality counts every parsed answer. The necessary-condition selector checks the exact input multiset for Word16, and correct endpoints, known vertices, simplicity and existing edges for graphs, without checking order or optimality. Rejected responses consume calls, and an empty filtered pool falls back to raw plurality only at the cap. The two primary tests compare filtered with raw plurality on the Permuted pools of Word16 and graphs (exact McNemar tests, Holm correction). The Word16 selectors, including occurrence-based Borda fusion, and the three Permuted-minus-Base interactions were fixed before the test outcomes were opened, while MMLU-Pro contrasts are descriptive. Development data selected a fixed arm per task (High temperature for Word16, Permuted for MMLU-Pro, Base for graphs) for the online study in \cref{app:physical-execution}.

\begin{table}[!htbp]
\centering\footnotesize\setlength{\tabcolsep}{3.5pt}
\caption{Fresh native cohort, 384 test inputs per task and sixteen calls per arm (percentages). Coverage is unchanged by filtering. Invalid calls include truncation and parse failures, and every call consumes budget. Borda fusion applies to Word16 only.}
\label{tab:fresh-native-accuracy}
\begin{tabular}{@{}llrrrrrr@{}}
\toprule
Task & Generation & Raw & Filtered & Borda & Single call & Coverage & Invalid\\
\midrule
Word16 & Base & 59.64 & 89.06 & 82.55 & 33.45 & 95.31 & 0.07\\
 & Permuted & 74.22 & 92.19 & 85.42 & 32.28 & 98.70 & 0.10\\
 & High temp. & 76.56 & 92.45 & 85.94 & 36.49 & 97.92 & 0.10\\
MMLU-Pro & Base & 73.70 & 73.70 & --- & 68.72 & 84.90 & 7.24\\
 & Permuted & 76.30 & 76.30 & --- & 69.51 & 89.06 & 6.43\\
 & High temp. & 73.96 & 73.96 & --- & 68.46 & 86.46 & 7.47\\
Shortest path & Base & 92.45 & 93.49 & --- & 42.99 & 94.27 & 55.60\\
 & Permuted & 92.19 & 92.45 & --- & 41.23 & 93.23 & 57.19\\
 & High temp. & 85.16 & 85.68 & --- & 33.63 & 86.20 & 65.07\\
\bottomrule
\end{tabular}
\end{table}

\paragraph{Filtering.}
On the Permuted pools, filtering raises Word16 accuracy by $17.97$ points $[14.06,21.88]$, correcting 69 inputs and damaging none (Holm-adjusted exact $p=6.78\times10^{-21}$). On graphs it changes one decision ($+0.26$ $[0.00,0.78]$, adjusted $p=1$), because most incorrect calls are invalid and few parse to a wrong path. For MMLU-Pro, parsing already enforces the option alphabet, so filtering changes nothing. On Word16, filtering keeps 604 of 4,155 parsed wrong responses and every correct one: conditional wrong collision rises by $0.245$ $[0.189,0.302]$ on the 155 inputs eligible under both selectors, while all-call collision falls by $0.031$ $[0.027,0.034]$ and the largest wrong count by $1.72$ $[1.57,1.88]$ votes, the empirical counterpart of the example in \cref{app:necessary-filter}. At call 12, filtering lowers the mean recovery threshold from 1.35 to 0.21 (Base), from 0.65 to 0.15 (Permuted) and from 0.73 to 0.14 (High temperature), and every input whose correct answer was covered but infeasible (18, 2 and 10 inputs) becomes feasible again (\cref{fig:filter-recoverability}).

\begin{figure}[!htbp]
\centering
\includegraphics[width=0.9\linewidth]{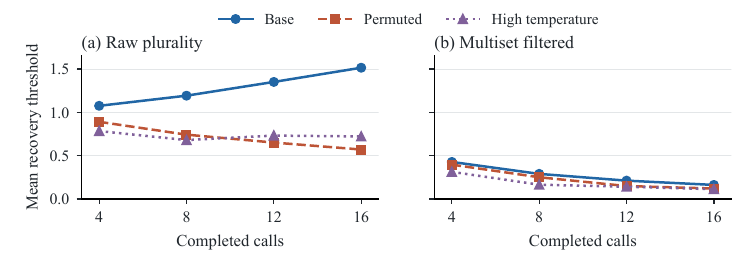}
\caption{Mean recovery threshold of the correct answer on the 384 fresh Word16 inputs after 4, 8, 12 and 16 calls (cap 16), under raw plurality (a) and multiset filtering (b), for each generation arm. The thresholds use reference answers offline.}
\label{fig:filter-recoverability}
\end{figure}

\paragraph{Error persistence.}
Take the most frequent wrong identity among the first eight Base calls and compare its frequency in the remaining eight Base calls with its frequency in the first eight Permuted calls: $0.147$ and $0.069$ on Word16 ($-0.078$ $[-0.095,-0.061]$, 377 inputs), $0.428$ and $0.350$ on MMLU-Pro ($-0.078$ $[-0.116,-0.042]$, 160 inputs) and $0.045$ and $0.004$ on graphs ($-0.042$ $[-0.069,-0.017]$, 36 inputs). Permutation makes a given mistake less likely to recur. On MMLU-Pro it also lowers conditional wrong collision by $0.089$ $[0.036,0.142]$ (134 inputs) and changes single-call correctness by $+0.8$ $[-0.6,2.1]$ points, coverage by $+4.2$ $[1.8,6.8]$ and plurality accuracy by $+2.6$ $[-0.3,5.5]$ (20 inputs corrected, 10 damaged).

\paragraph{Selector interaction.}
On Word16, the Permuted-minus-Base gain is $+14.58$ points $[8.59,20.32]$ under raw plurality, $+3.13$ $[-0.52,6.77]$ with filtering and $+2.86$ $[-1.82,7.55]$ with Borda fusion (descriptive 95\% intervals). The pre-specified interactions, with Bonferroni 98.33\% whole-input bootstrap intervals (20,000 draws), are $-11.46$ $[-17.97,-4.95]$ for filtered minus raw, $-11.72$ $[-19.53,-3.91]$ for Borda minus raw and $+0.26$ $[-4.69,5.47]$ for filtered minus Borda: filtering and Borda both attenuate the gain, and they are not distinguished from each other. High temperature gives a raw gain of $+16.93$ $[11.72,22.14]$ but also raises single-call correctness (\cref{tab:fresh-native-accuracy}). In descriptive duplicate-word strata, permutation's raw gain exceeds high temperature's on the 77 inputs with a repeated word ($+24.7$ against $+13.0$), and the order reverses on the 307 without ($+12.1$ against $+17.9$).

\subsection{Temperature and Permutation on a Fresh MMLU-Pro Hold-out}
\label{app:temp-perm}

\paragraph{Design.}
The earlier MMLU-Pro permutation effects (\cref{app:diversification-secondary,app:fresh-mechanism}) were measured at temperature $0.7$. This pre-specified study asks whether a permutation effect appears at $0.7$ on new inputs, whether one appears at $1.0$, and whether the two differ. It uses 768 new test inputs and 32 development inputs, drawn by category round-robin and a fixed pseudo-random ranking from MMLU-Pro inputs with three to ten options. None of these inputs appears in any other cohort of this paper. A case-aware lexical guard of the same form as in \cref{app:fresh-mechanism} is applied. Twelve categories contain 55 test inputs and two contain 54, and estimates weight categories by their test counts.

Each input receives sixteen Qwen3.5-4B responses in each cell of Base or Permuted crossed with temperature $0.7$ or $1.0$ (native nonthinking template, BF16, top-$p=0.9$, top-$k=20$, repetition penalty $1.05$, 4,096-token cap). Permuted shuffles the options independently on each call and maps the parsed letter back to its original identity. Each call index of an input uses the same permutation at both temperatures. Plurality uses the item-specific tie priority of the earlier MMLU-Pro studies. Invalid, truncated and failed calls abstain and consume their call. A call that reaches the cap abstains even when an answer appears before the cutoff. Three pre-specified contrasts share one Holm correction. They are the permutation effect at $0.7$, the permutation effect at $1.0$, both tested with exact McNemar tests, and their interaction, the $1.0$ effect minus the $0.7$ effect, tested with a category-stratified studentized bootstrap of whole four-cell input bundles (10,000 replicates). A contrast is called positive or negative when its Holm-adjusted $p<0.05$ and its 95\% interval excludes zero in that direction. It is called practically equivalent, a descriptive and unadjusted classification, when its 90\% interval lies within $\pm2$ points. A sensitivity analysis, added before any outcome was read, excludes the inputs without generated outputs.

\paragraph{Generation.}
Calls are generated in stages, and the test outputs are samples from this procedure. Each call first generates up to 1,024 tokens. Calls without an end token continue from the prompt and their own partial output, up to 2,048, 3,072 and 4,096 tokens. On 96 development token paths, the per-step sampling distributions of staged and single-pass generation differ by a mean total variation of $0.006$ and a 99th percentile of $0.06$--$0.08$. The pre-specified thresholds were $0.01$ for the mean and $0.02$ for the 99th percentile, so the tail threshold is exceeded. In a post hoc comparison, changing only the padding or the batch size of single-pass generation gives deviations of the same size (mean $0.005$--$0.008$, 99th percentile $0.06$--$0.08$). Per-step comparisons do not bound differences in whole sequences or accuracy. Eight inputs produced no outputs. Their calls count as abstentions, so these inputs are incorrect in all four cells of the primary analysis. Of the 12,288 calls per arm, 90.4--91.0\% gave a valid letter and 8.0--8.6\% reached the 4,096-token cap, mostly through long, non-repeating deliberation. Generation failures, all from the eight inputs without outputs, account for 1.04\%, and at most 0.03\% had an invalid letter or no final box (\cref{tab:temp-perm-status}).

\begin{table}[!htbp]
\centering\footnotesize\setlength{\tabcolsep}{4pt}
\caption{Status of the 12,288 calls per arm (percent). Truncated calls reached the 4,096-token cap and abstain.}
\label{tab:temp-perm-status}
\begin{tabular}{lrrrrr}
\toprule
Arm & Valid & Truncated & Generation failure & Invalid letter & No final box\\
\midrule
Base, $T=0.7$ & 90.87 & 8.06 & 1.04 & 0.02 & 0.01\\
Permuted, $T=0.7$ & 90.97 & 7.97 & 1.04 & 0.01 & 0.01\\
Base, $T=1.0$ & 90.35 & 8.58 & 1.04 & 0.00 & 0.03\\
Permuted, $T=1.0$ & 90.37 & 8.55 & 1.04 & 0.00 & 0.03\\
\bottomrule
\end{tabular}
\end{table}

\paragraph{Results.}
\Cref{tab:temp-perm} reports the four cells and the three contrasts. At $0.7$ the permutation effect is $+0.26$ points (95\% interval $[-1.56,2.08]$, 90\% interval $[-1.30,1.69]$, exact McNemar $p=0.89$), which meets the descriptive, unadjusted equivalence criterion. At $1.0$ it is $-0.52$ ($[-2.47,1.43]$, 90\% $[-2.08,1.04]$, $p=0.69$), which is inconclusive. The interaction is $-0.78$ ($[-3.09,1.52]$, bootstrap $p=0.52$), also inconclusive. All Holm-adjusted $p$-values equal $1$ and no contrast is positive or negative, so no accuracy benefit of permutation is detected at either temperature. Excluding the eight inputs without outputs raises every accuracy by $0.78$--$0.79$ points, moves no contrast by more than $0.02$ points and changes no classification. Moving from temperature $0.7$ to $1.0$ changes accuracy by $-0.26$ points for Base and $-1.04$ for Permuted.

\begin{table}[!htbp]
\centering\footnotesize\setlength{\tabcolsep}{4pt}
\caption{Fresh MMLU-Pro hold-out (768 inputs, Qwen3.5-4B, sixteen calls per cell). Accuracy is 16-call plurality correctness. Intervals are category-stratified item-bootstrap intervals, studentized for the interaction. Practical equivalence is a descriptive, unadjusted classification, and all Holm-adjusted $p$-values equal $1$.}
\label{tab:temp-perm}
\begin{tabular}{lrrrrrl}
\toprule
 & \multicolumn{2}{c}{Accuracy (\%)} & & & & \\
Contrast & Base & Permuted & Effect (points) & 95\% interval & 90\% interval & Classification\\
\midrule
At $T=0.7$ & 74.74 & 75.00 & $+0.26$ & $[-1.56,2.08]$ & $[-1.30,1.69]$ & practically equivalent\\
At $T=1.0$ & 74.48 & 73.96 & $-0.52$ & $[-2.47,1.43]$ & $[-2.08,1.04]$ & inconclusive\\
Interaction & & & $-0.78$ & $[-3.09,1.52]$ & $[-2.72,1.16]$ & inconclusive\\
\bottomrule
\end{tabular}
\end{table}

The error-structure statistics moved as in the earlier cohorts, and accuracy did not (\cref{tab:temp-perm-secondary}). At both temperatures, permutation co-occurred with lower wrong collision, more distinct wrong answers, slightly higher coverage, slightly lower single-call correctness, a lower estimated modal margin and a slightly lower certificate saving, counted in calls omitted in replay.

\begin{table}[!htbp]
\centering\footnotesize\setlength{\tabcolsep}{4pt}
\caption{Secondary quantities on the fresh MMLU-Pro hold-out (descriptive). Collision among errors uses the inputs with at least two wrong votes, which are Base and Permuted subsets of 276 and 296 inputs at $T=0.7$ and of 290 and 309 inputs at $T=1.0$. The certificate saving is the share of calls omitted in replay at $B=16$.}
\label{tab:temp-perm-secondary}
\begin{tabular}{lrrrr}
\toprule
 & \multicolumn{2}{c}{$T=0.7$} & \multicolumn{2}{c}{$T=1.0$}\\
\cmidrule(lr){2-3}\cmidrule(l){4-5}
Quantity & Base & Permuted & Base & Permuted\\
\midrule
All-call wrong collision & 0.122 & 0.109 & 0.113 & 0.106\\
Collision among errors & 0.693 & 0.634 & 0.676 & 0.599\\
Distinct wrong answers per input & 0.79 & 0.95 & 0.84 & 1.00\\
Coverage at 16 calls (\%) & 86.9 & 88.0 & 85.9 & 87.2\\
Single-call correctness (\%) & 68.7 & 67.8 & 67.6 & 66.9\\
Estimated modal margin & 0.740 & 0.711 & 0.716 & 0.696\\
Certificate saving (\%) & 35.1 & 34.0 & 34.1 & 33.4\\
\bottomrule
\end{tabular}
\end{table}

These results concern one model, one prompt format and two temperatures on a guarded, approximately category-balanced population. The three MMLU-Pro cohorts differ in population and implementation and do not provide a controlled estimate of cross-cohort attenuation.

\FloatBarrier

\section{Exact Decision Locking in Practice}
\label{app:stopping-experiments}
\label{app:stopping-cost}

\subsection{Stopping Replay}
\label{app:stopping-replay}

The locking certificate is replayed on 2,000 forced-choice MMLU-Pro trajectories and 5,000 GSM forecast trajectories, each with total budget 16. At every prefix the rule checks whether the current winner is the sole reachable endpoint winner. If no certificate appears before call 16, the full budget is used. The replay saves 28.35\% of MMLU-Pro calls and 30.26\% of GSM calls on average, while every stored fixed-budget endpoint is preserved (\cref{fig:lock-times,tab:decision-locking}).

\begin{figure}[!htbp]
\centering
\includegraphics[width=\linewidth]{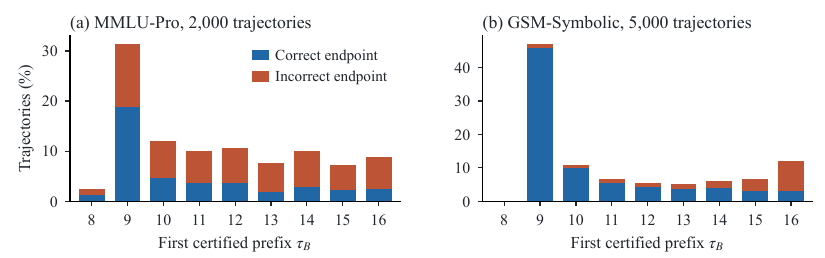}
\caption{Distribution of the first certified prefix under the exact locking rule, split by whether the fixed-budget endpoint is correct. Most trajectories are certified before the full 16-call budget. The bar at 16 counts trajectories not certified before the budget ends.}
\label{fig:lock-times}
\end{figure}

\begin{table}[!htbp]
\centering\small
\caption{Exact stopping replay with total budget 16. Output preservation follows from the certificate and is also verified on the stored trajectories.}
\label{tab:decision-locking}
\begin{tabular}{lrr}
\toprule
Metric & MMLU-Pro ($2{,}000$) & GSM ($5{,}000$)\\
\midrule
Locked by call 9 & $33.85\%$ & $47.10\%$\\
Locked by call 12 & $66.45\%$ & $70.06\%$\\
Not locked before call 16 & $8.85\%$ & $12.14\%$\\
First lock, quartiles & $9/11/14$ & $9/10/13$\\
Mean calls consumed & $11.46$ & $11.16$\\
Calls saved & $28.35\%$ & $30.26\%$\\
Answers changed from fixed-16 plurality & $0/2{,}000$ & $0/5{,}000$\\
\bottomrule
\end{tabular}
\end{table}

The certificate preserves incorrect endpoints as well as correct ones, because its guarantee concerns agreement with the fixed-budget decision. This distinction separates exact output preservation from accuracy-improving stopping objectives.

\subsection{Validation of the Lock-Time Law}
\label{app:lock-time-validation}

Each prediction estimates a call law from calls disjoint from the trajectory (raw empirical frequencies, abstention included) and simulates the certificate under that law (2,000 draws, fixed seeds). Simulation standard errors of the predicted savings are at most $0.01$ points. The predictions were stored before the replays, when the $B=16$ lock times of our own cohorts (\cref{tab:decision-locking}) were known and those of the public data were not. We evaluate the mean prediction error against a $\pm3$-point tolerance on its 95\% interval. The predicted lock-time distribution is evaluated against a $0.05$ tolerance on the 97.5\% bootstrap bound of its Kolmogorov distance from the observed one. For the asymptotic law, the tolerance is $0.03$ on the 95\% upper bound of the mean absolute deviation of per-item $\tau_B/B$ from $1/(1+\hat m)$ at $B=1024$. Intervals come from one cluster bootstrap per cohort (GSM template families, MMLU-Pro items, GSM8K problems).

\paragraph{Finite budgets.}
GSM-Symbolic supplies five independent 16-call sequences per checkpoint and item, numbered 0 to 4. The law sequence 0 predicts the forecast sequences 1--4 at $B=16$, and the concatenations $1\Vert2$ and $3\Vert4$ ($B=32$), $1\Vert2\Vert3$ ($B=48$) and $1\Vert2\Vert3\Vert4$ ($B=64$) are predicted from the remaining sequences. Two checkpoints generated sequences 3 and 4 in a different software environment (\cref{app:gsm-sensitivities}), so the rows above $B=16$ use the other three checkpoints. Forced-choice MMLU-Pro predicts each of its two sequences from the other. On GSM-Symbolic every mean error is at most $0.19$ points and every distributional bound at most $0.044$. On MMLU-Pro the mean errors are $-0.66$ $[-1.25,-0.03]$ and $+0.08$ points, and the distributional bounds are $0.057$ and $0.038$ (\cref{tab:lock-law-inhouse,fig:lock-law-cdfs}). Smoothing the estimated law with a Dirichlet$(1/2)$ prior shifts the predictions by $0.4$ to $2.0$ points on GSM-Symbolic and by 5 to 6 points on MMLU-Pro, where smoothing ten options estimated from 16 calls flattens the law and lowers the margin. The raw plug-in has the smallest error in all seven cells.

\begin{table}[!htbp]
\centering\footnotesize\setlength{\tabcolsep}{3pt}
\caption{Finite-budget validation (raw plug-in law from calls disjoint from the trajectory). Savings are percentages of the budget. Differences are observed minus predicted, in points, with cluster-bootstrap 95\% intervals. KS is the Kolmogorov distance between predicted and observed lock-time distributions with its 97.5\% bound. MAD is the mean absolute deviation of per-item $\tau_B/B$ from $1/(1+\hat m)$. The three checkpoints are Qwen3.5-4B, Gemma-4-E2B and Ministral-3-3B.}
\label{tab:lock-law-inhouse}
\resizebox{\linewidth}{!}{%
\begin{tabular}{@{}lrlrrrrlll@{}}
\toprule
Cohort & $B$ & Trajectory $\leftarrow$ law & Checkpoints & Traj. & Obs. & Pred. & Difference [95\% CI] & KS (bound) & MAD\\
\midrule
GSM-Symbolic & 16 & $1$--$4\leftarrow0$ & five$^{a}$ & 4,000 & 32.64 & 32.48 & $+0.16$ $[-0.17,+0.47]$ & 0.012 (0.022) & 0.067\\
GSM-Symbolic & 32 & $1\Vert2\leftarrow0,3,4$ & three & 750 & 39.06 & 39.01 & $+0.04$ $[-0.30,+0.39]$ & 0.010 (0.028) & 0.040\\
GSM-Symbolic & 32 & $3\Vert4\leftarrow0,1,2$ & three & 750 & 39.06 & 39.04 & $+0.02$ $[-0.32,+0.33]$ & 0.011 (0.029) & 0.039\\
GSM-Symbolic & 48 & $1\Vert2\Vert3\leftarrow0,4$ & three & 750 & 40.02 & 39.83 & $+0.19$ $[-0.11,+0.49]$ & 0.013 (0.031) & 0.033\\
GSM-Symbolic & 64 & $1\Vert2\Vert3\Vert4\leftarrow0$ & three & 750 & 40.32 & 40.20 & $+0.12$ $[-0.28,+0.51]$ & 0.018 (0.044) & 0.034\\
MMLU-Pro & 16 & seq.\ 0 $\leftarrow$ seq.\ 1 & five & 1,000 & 28.15 & 28.81 & $-0.66$ $[-1.25,-0.03]$ & 0.039 (0.057) & 0.092\\
MMLU-Pro & 16 & seq.\ 1 $\leftarrow$ seq.\ 0 & five & 1,000 & 28.54 & 28.47 & $+0.08$ $[-0.50,+0.64]$ & 0.015 (0.038) & 0.088\\
\bottomrule
\end{tabular}%
}
\par\smallskip\raggedright\footnotesize $^{a}$Sequences 1 and 2 of every checkpoint and sequences 3 and 4 of the other three checkpoints. Sequences 3 and 4 of 0.8B and 2B, generated in the second environment (1,000 trajectories), give $-0.18$ $[-0.80,+0.48]$.
\end{table}

\begin{figure}[!htbp]
\centering
\includegraphics[width=\linewidth]{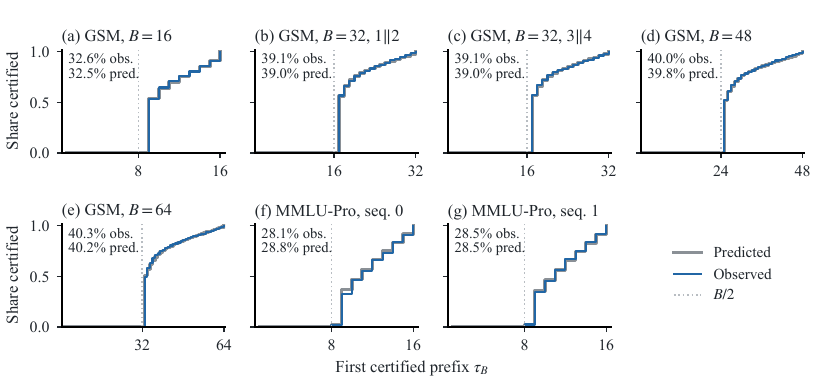}
\caption{Cumulative distribution of the first certified prefix, predicted from an independently estimated law (gray) and observed (blue), for the seven cells of \cref{tab:lock-law-inhouse}, in table order. Dotted lines mark $B/2$, below which no prefix can be certified.}
\label{fig:lock-law-cdfs}
\end{figure}

\paragraph{Large budgets on public data.}
The Large Language Monkeys release \citep{brown2024largelanguagemonkeys} provides 10,000 samples per problem for 127 GSM8K problems from Llama-3-8B-Instruct and Llama-3-70B-Instruct (temperature $0.6$, few-shot completion prompting). Answer identities use the dataset's extractor followed by exact-rational canonicalization, and our correctness labels match the dataset's on all but one of 2,540,000 samples. Samples 1--2,000 of each problem estimate the law, and samples 2,001--10,000 form disjoint trajectories at $B\in\{16,32,64,\ldots,1024\}$ under the open-universe rule. Of the sample texts, 13.3\% and 16.4\% duplicate another sample of the same problem (short identical solutions), and the two sample blocks differ about as much as a permutation reference, consistent with iid sampling. Every mean error is at most $0.09$ points and every distributional bound at most $0.043$ (\cref{tab:lock-law-monkeys}). The savings rise from 34.9\% to 39.6\% (8B) and from 41.4\% to 47.1\% (70B), against population averages of $\hat m/(1+\hat m)$ of 39.5\% and 47.1\%, and the per-problem deviation from $1/(1+\hat m)$ at $B=1024$ is $0.0041$ and $0.0020$ (\cref{fig:lock-law-scatter}). On problems whose estimated mode and runner-up are confidently separated, the mean observed variance of $\tau_B$ at $B=1024$ is 150 and 130 against 151 and 155 from \cref{thm:lock-time-law}(ii), a descriptive agreement. Certified outputs equal the fixed-budget outputs in every trajectory of the in-house and public replays. These are call savings under iid sampling from one fixed prompt. Latency is measured separately in \cref{app:physical-execution}.

\begin{table}[!htbp]
\centering\footnotesize\setlength{\tabcolsep}{3pt}
\caption{Large-budget validation on public GSM8K (127 problems). Trajectories are disjoint blocks of samples 2,001--10,000, and the law comes from samples 1--2,000. Savings are percentages of the budget, and the asymptote is the population average of $\hat m/(1+\hat m)$. MAD is the mean absolute deviation of per-problem $\tau_B/B$ from $1/(1+\hat m)$. KS is the Kolmogorov distance between predicted and observed lock-time distributions with its 97.5\% bootstrap bound. The last column compares the mean observed within-problem variance of $\tau_B$ with $B\hat\sigma^2/(1+\hat m)^3$ on problems with a confidently separated runner-up (69 for 8B, 27 for 70B).}
\label{tab:lock-law-monkeys}
\resizebox{\linewidth}{!}{%
\begin{tabular}{@{}lrrrrlrlll@{}}
\toprule
Model & $B$ & Traj. & Obs. & Pred. & Difference [95\% CI] & Asymptote & MAD [95\% CI] & KS (bound) & Var.\ obs./CLT\\
\midrule
Llama-3-8B & 16 & 63,500 & 34.90 & 34.85 & $+0.05$ $[-0.03,+0.14]$ & 39.53 & 0.0565 $[0.0533,0.0597]$ & 0.004 (0.008) & 1.5 / 2.4\\
 & 32 & 31,750 & 37.34 & 37.29 & $+0.05$ $[-0.03,+0.13]$ & & 0.0297 $[0.0275,0.0319]$ & 0.007 (0.012) & 3.4 / 4.7\\
 & 64 & 15,875 & 38.55 & 38.48 & $+0.07$ $[-0.02,+0.16]$ & & 0.0155 $[0.0138,0.0174]$ & 0.006 (0.013) & 7.5 / 9.5\\
 & 128 & 7,874 & 39.10 & 39.03 & $+0.07$ $[-0.02,+0.17]$ & & 0.0087 $[0.0074,0.0102]$ & 0.006 (0.014) & 16.5 / 18.9\\
 & 256 & 3,937 & 39.38 & 39.29 & $+0.09$ $[-0.01,+0.19]$ & & 0.0056 $[0.0045,0.0069]$ & 0.012 (0.024) & 36.1 / 37.8\\
 & 512 & 1,905 & 39.48 & 39.42 & $+0.05$ $[-0.04,+0.15]$ & & 0.0042 $[0.0033,0.0053]$ & 0.010 (0.024) & 68.0 / 75.7\\
 & 1024 & 889 & 39.55 & 39.48 & $+0.07$ $[-0.04,+0.18]$ & & 0.0041 $[0.0032,0.0051]$ & 0.020 (0.043) & 149.9 / 151.4\\
\midrule
Llama-3-70B & 16 & 63,500 & 41.36 & 41.37 & $-0.02$ $[-0.07,+0.03]$ & 47.14 & 0.0581 $[0.0555,0.0605]$ & 0.003 (0.006) & 1.9 / 2.4\\
 & 32 & 31,750 & 44.28 & 44.27 & $+0.01$ $[-0.04,+0.06]$ & & 0.0287 $[0.0273,0.0300]$ & 0.002 (0.005) & 4.4 / 4.8\\
 & 64 & 15,875 & 45.73 & 45.73 & $0.00$ $[-0.05,+0.05]$ & & 0.0143 $[0.0136,0.0150]$ & 0.002 (0.007) & 9.2 / 9.7\\
 & 128 & 7,874 & 46.46 & 46.45 & $+0.01$ $[-0.04,+0.06]$ & & 0.0073 $[0.0069,0.0077]$ & 0.003 (0.011) & 19.6 / 19.3\\
 & 256 & 3,937 & 46.83 & 46.81 & $+0.02$ $[-0.04,+0.08]$ & & 0.0040 $[0.0036,0.0044]$ & 0.006 (0.015) & 36.4 / 38.7\\
 & 512 & 1,905 & 46.99 & 46.98 & $+0.01$ $[-0.04,+0.07]$ & & 0.0025 $[0.0022,0.0030]$ & 0.008 (0.022) & 77.5 / 77.4\\
 & 1024 & 889 & 47.08 & 47.06 & $+0.02$ $[-0.04,+0.09]$ & & 0.0020 $[0.0015,0.0026]$ & 0.010 (0.032) & 129.9 / 154.7\\
\bottomrule
\end{tabular}%
}
\end{table}

\begin{figure}[!htbp]
\centering
\includegraphics[width=\linewidth]{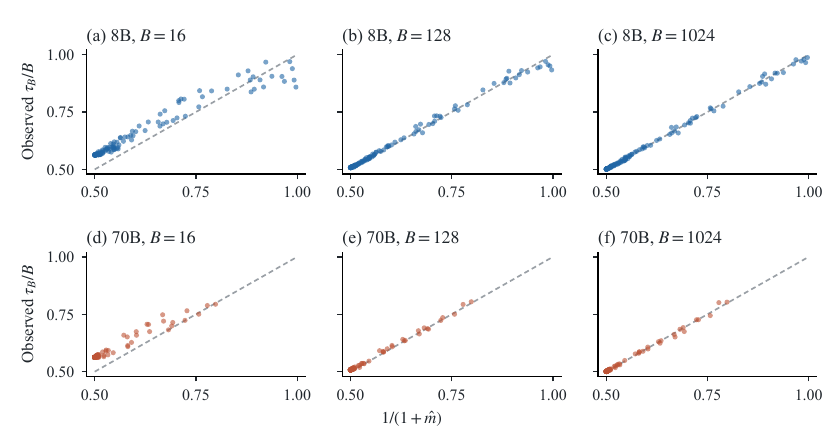}
\caption{Per problem on public GSM8K: observed $\tau_B/B$ (mean over the problem's trajectories) against $1/(1+\hat m)$ at $B=16$, $128$ and $1024$, for Llama-3-8B-Instruct (top) and Llama-3-70B-Instruct (bottom). \Cref{tab:lock-law-monkeys} reports all seven budgets. Deviations from the asymptote shrink as the budget grows.}
\label{fig:lock-law-scatter}
\end{figure}

\paragraph{A larger checkpoint and a harder public task.}
For Qwen3.5-27B under the forced-choice protocol (\cref{app:scope-scale}), predicting each sequence from the other with the MMLU-Pro procedure above gives observed savings of 38.78\% and 38.41\% against predictions of 38.90\% and 39.06\% ($-0.12$ $[-1.10,+0.80]$ and $-0.65$ $[-1.53,+0.20]$ points). The two distributional bounds are $0.048$ and $0.063$. Dirichlet$(1/2)$ smoothing under-predicts the savings by $7.39$ $[6.41,8.32]$ and $6.85$ $[5.91,7.75]$ points, because smoothing ten options estimated from 16 calls flattens the law. The 27B replay locks 75.0\% of trajectories by call 9, saves 38.6\% of calls and changes no output. On public MATH, the procedure of \cref{tab:lock-law-monkeys} predicts every raw-law saving within $0.13$ points, and every distributional bound is at most $0.043$ (\cref{tab:lock-law-math}). The savings are smaller than on GSM8K. No certified output differs from its fixed-budget output in any of the 253,440 trajectories, under either identity rule. The Dirichlet$(1/2)$, dispersed missing-mass and cleaned-string sensitivities stay within both tolerances, with point discrepancies of at most 0.49 points. A single concentrated unseen rival over-predicts the savings by 0.8 to 1.7 points, and its distributional bound exceeds $0.05$ in every cell.

\begin{table}[!htbp]
\centering\footnotesize\setlength{\tabcolsep}{3pt}
\caption{Large-budget validation on public MATH for 128 problems, with laws from samples 1--2,000, disjoint trajectories from samples 2,001--10,000 and the conventions of \cref{tab:lock-law-monkeys}. Mean $\hat m$ is 0.24 for 8B and 0.45 for 70B. The variance column uses problems with a confidently separated runner-up (98 and 86).}
\label{tab:lock-law-math}
\resizebox{\linewidth}{!}{%
\begin{tabular}{@{}lrrrrlrlll@{}}
\toprule
Model & $B$ & Traj. & Obs. & Pred. & Difference [95\% CI] & Asymptote & MAD [95\% CI] & KS (bound) & Var.\ obs./CLT\\
\midrule
Llama-3-8B & 16 & 64{,}000 & 13.28 & 13.19 & $+0.09$ $[-0.02,+0.21]$ & 15.75 & 0.0419 $[0.0375,0.0464]$ & 0.006 (0.010) & 1.4 / 3.3\\
 & 64 & 16{,}000 & 15.17 & 15.05 & $+0.13$ $[-0.03,+0.28]$ & & 0.0178 $[0.0159,0.0198]$ & 0.009 (0.018) & 9.3 / 13.1\\
 & 256 & 3{,}968 & 15.70 & 15.57 & $+0.12$ $[-0.07,+0.30]$ & & 0.0100 $[0.0086,0.0116]$ & 0.012 (0.029) & 46.5 / 52.3\\
 & 1024 & 896 & 15.82 & 15.72 & $+0.10$ $[-0.09,+0.30]$ & & 0.0088 $[0.0074,0.0101]$ & 0.017 (0.043) & 203.1 / 209.3\\
\midrule
Llama-3-70B & 16 & 64{,}000 & 22.97 & 22.85 & $+0.12$ $[+0.04,+0.21]$ & 26.25 & 0.0462 $[0.0417,0.0506]$ & 0.004 (0.008) & 1.5 / 3.1\\
 & 64 & 16{,}000 & 25.60 & 25.51 & $+0.09$ $[-0.02,+0.20]$ & & 0.0154 $[0.0135,0.0176]$ & 0.005 (0.013) & 8.7 / 12.3\\
 & 256 & 3{,}968 & 26.20 & 26.08 & $+0.12$ $[-0.01,+0.25]$ & & 0.0072 $[0.0060,0.0085]$ & 0.011 (0.023) & 42.1 / 49.2\\
 & 1024 & 896 & 26.33 & 26.22 & $+0.12$ $[-0.03,+0.25]$ & & 0.0057 $[0.0046,0.0069]$ & 0.017 (0.041) & 176.0 / 196.7\\
\bottomrule
\end{tabular}%
}
\end{table}

\subsection{Token Accounting in Thinking Mode}
\label{app:thinking-tokens}

The replays above count calls. With reasoning models most of a call's cost is its thinking tokens, whose number varies across calls, so a saving in calls need not equal a saving in tokens. We therefore generated fresh thinking-mode trajectories and replayed the certificate on them. Qwen3-14B (primary) and Qwen3-8B \citep{yang2025qwen3} answered 100 problems of MATH-500 \citep{hendrycks2021math,lightman2024lets}, 20 per difficulty level, in thinking mode. Generation used the native chat template with thinking enabled, temperature $0.6$, top-$p$ $0.95$, top-$k$ $20$ and at most 16,384 generated tokens. Each problem and model has two separately generated 32-call sequences, produced offline with vLLM 0.13.0 \citep{kwon2023vllm} and a fixed seed per call. A call is valid when its thinking block closes and a final boxed answer follows it. A call that reaches the length limit or has no final answer abstains and is still charged. Answer identities are the boxed strings after a type-aware normalization fixed on development problems before any test call. Merging numerically equal answers is a sensitivity analysis. The certificate is the open-universe rule of \cref{sec:stopping-replay}. The tokens it omits at budget $B$ are all generated tokens of the calls after the first certified prefix $\tau_B$, thinking and answer together.

The analysis was pre-specified before generation. Its three quantities are the pooled share of generated tokens omitted at $B=32$, the calibration of the lock-time law and the difference between the token share and the call share. The token share is total omitted over total generated tokens across all test sequences. For calibration, the law estimated from one sequence predicts the other sequence's lock times, evaluated with the tolerances of \cref{app:lock-time-validation} except that the distributional bound is the one-sided 95\% bootstrap percentile. If the (answer, length) pairs of a problem's calls are independent and identically distributed, the expected omitted tokens of a sequence equal $\mathbb{E}[L]\,\mathbb{E}[B-\tau_B]$, with $L$ the length of a call. This identity holds for expected totals and does not extend to the ratio of the two shares. Intervals resample whole problems, both sequences together, within difficulty levels (2,000 draws). On 12 separate development problems, batching had no material effect and concurrently generated calls showed no material dependence (largest excess total variation $0.0014$). Simulations of the full analysis at 100 problems gave interval coverage of 92--96\%, but the distributional tolerance had limited power. It was never met under a parametric law family, even with the true law, and was met in 57--86\% of datasets under the concentrated development laws. A miss on it is therefore read as inconclusive. The lock-time law and the length identity treat a problem's calls as independent (answer, length) draws. The replayed token counts do not rely on this assumption.

\paragraph{Results.}
Each model produced 6,400 calls. Of these, 2.0\% (14B) and 2.7\% (8B) reached the length limit, over 90\% of them on ten problems. The certified output equals the fixed-budget plurality in every sequence. At $B=32$ the certificate omits 41.9\% $[37.2, 45.6]$ of the generated tokens of Qwen3-14B and 40.6\% $[35.8, 44.6]$ of Qwen3-8B, about 54,000 and 59,000 tokens per 32-call sequence (\cref{tab:thinking-tokens}). The token share lies 3.2 $[0.7, 6.5]$ and 3.5 $[0.9, 6.6]$ points below the call share. The estimated gap is predominantly attributable to length weighting between sequences ($-3.3$ points $[-6.6, -0.7]$ for 14B). Problems with longer calls, often level-5 problems with truncations, lock later. The within-sequence component, which compares the omitted calls with the rest of their sequence, is $+0.1$ points $[-0.2, +0.4]$ and compatible with zero. In an exploratory breakdown by level, the certificate omits 47\% of the tokens at level 1 and 36\% at level 5. The mean prediction error stays within the $\pm3$-point tolerance for both models at every budget, law variant and identity scheme. In the primary cell (Qwen3-14B, $B=32$) the observed saving exceeds the prediction by $0.16$ points $[-0.03, +0.39]$, and the Kolmogorov distance is $0.009$ with bound $0.021$. Every distributional bound is below $0.05$ in \cref{tab:thinking-tokens} and in every sensitivity cell except the Dirichlet-smoothed law for 14B at $B=32$ (bound $0.051$ in both identity schemes). The answer laws are close to degenerate, however. Of the sequences, 91.5\% (14B) and 86.5\% (8B) lock at call 17, the earliest possible at $B=32$. This cohort therefore tests the law and the token accounting on concentrated answer laws, and problems with contested answers remain untested.

On such laws statistical stopping saves considerably more (\cref{tab:thinking-context}). ASC at $0.95$, recomputed post hoc with the posterior tail of \citet{aggarwal2023adaptive}, stops after four calls on most sequences. It omits 84.5\% of the calls and 78.4\% of the tokens of Qwen3-14B, and ESC with windows of eight omits 70.0\% and 62.0\%. Neither changed the endpoint of any of the 200 sequences per model (100 problems), but only the certificate guarantees this. Its accuracy therefore equals the fixed-budget accuracy (88.5\% and 87.5\% at $B=32$). All savings count generated tokens in replay and do not measure latency.

\begin{table}[!htbp]
\centering\footnotesize\setlength{\tabcolsep}{3pt}
\caption{Thinking-mode token accounting on 100 MATH-500 problems, with two 32-call sequences per problem and model (6,400 calls per model). Shares are percentages of the generated tokens or calls that the certificate omits at budget $B$. Token minus call and observed minus predicted saving are in points. Intervals are 95\% problem-bootstrap intervals within difficulty levels. KS is the Kolmogorov distance between the predicted and observed lock-time distributions with its one-sided 95\% bootstrap bound, the pre-specified level for this study. Predictions use the raw law of the other sequence, and answers are normalized boxed strings.}
\label{tab:thinking-tokens}
\resizebox{\linewidth}{!}{%
\begin{tabular}{@{}lrlllll@{}}
\toprule
Model & $B$ & Tokens omitted & Calls omitted & Token $-$ call & Obs.\ $-$ pred. & KS (bound)\\
\midrule
Qwen3-14B & 32 & 41.9 $[37.2, 45.6]$ & 45.1 $[43.6, 46.3]$ & $-3.2$ $[-6.5, -0.7]$ & $+0.16$ $[-0.03, +0.39]$ & 0.009 (0.021)\\
 & 24 & 41.3 $[37.0, 44.8]$ & 44.1 $[42.7, 45.3]$ & $-2.8$ $[-6.0, -0.4]$ & $+0.16$ $[-0.08, +0.43]$ & 0.011 (0.024)\\
 & 16 & 38.9 $[34.4, 42.6]$ & 41.9 $[40.3, 43.1]$ & $-3.0$ $[-6.2, -0.4]$ & $-0.08$ $[-0.50, +0.22]$ & 0.008 (0.026)\\
\midrule
Qwen3-8B & 32 & 40.6 $[35.8, 44.6]$ & 44.1 $[42.3, 45.6]$ & $-3.5$ $[-6.6, -0.9]$ & $-0.09$ $[-0.23, +0.03]$ & 0.011 (0.024)\\
 & 24 & 40.0 $[35.4, 43.8]$ & 43.3 $[41.4, 44.8]$ & $-3.3$ $[-6.3, -0.8]$ & $+0.05$ $[-0.21, +0.30]$ & 0.011 (0.029)\\
 & 16 & 38.4 $[34.0, 42.1]$ & 41.3 $[39.6, 42.7]$ & $-2.9$ $[-5.8, -0.5]$ & $+0.05$ $[-0.27, +0.37]$ & 0.009 (0.023)\\
\bottomrule
\end{tabular}%
}
\end{table}

\begin{table}[!htbp]
\centering\footnotesize\setlength{\tabcolsep}{4pt}
\caption{Exact locking and statistical stopping on the same thinking-mode sequences at $B=32$ (200 sequences per model). Shares are percentages of calls and generated tokens omitted. ASC uses the posterior tail of \citet{aggarwal2023adaptive}, computed post hoc. ESC uses nonoverlapping windows of eight calls, and a window with an abstention does not trigger. A change is an endpoint that differs from fixed-budget plurality. Only the certificate excludes changes by construction.}
\label{tab:thinking-context}
\begin{tabular}{@{}llrrrl@{}}
\toprule
Model & Rule & Calls omitted & Tokens omitted & Changed endpoints & Guarantee\\
\midrule
Qwen3-14B & Certificate & 45.1 & 41.9 & 0 & yes\\
 & ASC 0.95 & 84.5 & 78.4 & 0 & no\\
 & ESC, window 8 & 70.0 & 62.0 & 0 & no\\
\midrule
Qwen3-8B & Certificate & 44.1 & 40.6 & 0 & yes\\
 & ASC 0.95 & 83.0 & 77.1 & 0 & no\\
 & ESC, window 8 & 67.0 & 58.0 & 0 & no\\
\bottomrule
\end{tabular}
\end{table}

\subsection{Physical Execution and Baselines}
\label{app:physical-execution}
\label{app:stopping-quality}
\label{app:stopping-accounting}

The physical experiment compares fixed-16, exact locking, ASC, ESC, and Bayesian stopping on 32 MMLU-Pro items per checkpoint for Qwen3.5-2B and 4B, using microbatch sizes one and four (\cref{fig:physical-stopping,tab:physical-stopping-ci}). Request time includes generation and the stopping decision. The same-shape fixed-16 run is the latency baseline, while an independent batch-16 run provides a fully parallel reference.

\begin{figure}[!htbp]
\centering
\includegraphics[width=0.88\linewidth]{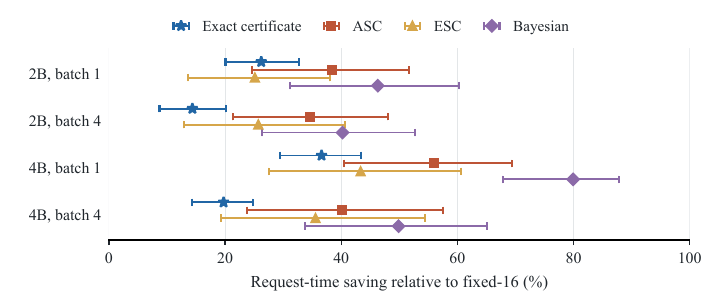}
\caption{Physical request-time savings on 32 paired MMLU-Pro items per model and batch size, relative to the same-batch fixed-16 baseline. Bars are paired item-bootstrap 95\% intervals, and each item has one timing. The larger output audit in \cref{tab:stopping-quality} uses a separate cohort.}
\label{fig:physical-stopping}
\end{figure}

\paragraph{Stopping baselines.}
ASC compares the two largest valid vote counts $a,b$ and stops when $\Pr[\mathrm{Beta}(a+1,b+1)>1/2]\geq0.95$, using the exact binomial integration corresponding to the criterion of \citet{aggarwal2023adaptive}. ESC uses consecutive nonoverlapping four-call windows and stops when all four answers in a complete window are the same valid answer \citep{li2024esc}. Windows containing abstentions do not trigger, and at the cap the method otherwise returns plurality. Bayesian stopping follows the categorical-prior procedure of \citet{huang2026bayesian} with threshold $0.95$ and an empirical prior fitted without gold on 32 separate development items with eight responses each. The prior parameter and lookup table are fixed before test generation. Stopping is evaluated only after a completed microbatch, and every issued call, invalid completion, and batch overshoot is charged. Bayesian stopping is used only for MMLU-Pro, whose answer alphabet is known and finite.

On the larger 1,200-item native MMLU-Pro replay, exact locking changes no endpoint by construction. For 4B, Bayesian stopping changes 22 endpoints, ASC changes 1, and ESC changes 6, with corresponding accuracy differences of $-0.75$, $-0.083$, and $-0.083$ points. For 2B, the mismatch counts are 11, 1, and 1. These comparisons illustrate the distinction between an exact certificate and statistical stopping rules that can save more calls while occasionally changing the fixed-budget output.

\begin{table}[!htbp]
\centering\small
\caption{Separate output audit: 1,200 native 4B MMLU-Pro items, sequential budget 16. Changes are relative to fixed-16 plurality, and accuracy differences are percentage points with pointwise 95\% intervals.}
\label{tab:stopping-quality}
\begin{tabular}{@{}lrrrr@{}}
\toprule
Method & Certificate & ASC & ESC & Bayesian\\
\midrule
Changed outputs & 0 & 1 & 6 & 22\\
Accuracy difference & $0$ (guaranteed) & $-0.083$ & $-0.083$ & $-0.750$\\
95\% interval & $[0,0]$ & $[-0.25,0]$ & $[-0.417,0.25]$ & $[-1.42,-0.17]$\\
\bottomrule
\end{tabular}
\end{table}

\begin{table}[!htbp]
\centering\footnotesize\setlength{\tabcolsep}{3pt}
\caption{Paired request-time savings relative to the same-shape fixed-16 baseline. Each model/shape has 32 items, and intervals are 95\% item-bootstrap intervals.}
\label{tab:physical-stopping-ci}
\begin{tabular}{llrrrr}
\toprule
Model & Batch & Certificate & ASC .95 & ESC4 & Bayesian\\
\midrule
2B & 1 & $26.23\ [20.07,32.73]$ & $38.35\ [24.61,51.66]$ & $25.14\ [13.67,38.03]$ & $46.29\ [31.21,60.31]$\\
2B & 4 & $14.38\ [8.71,20.11]$ & $34.69\ [21.35,47.99]$ & $25.72\ [12.86,40.64]$ & $40.23\ [26.40,52.77]$\\
4B & 1 & $36.61\ [29.49,43.45]$ & $55.94\ [40.51,69.37]$ & $43.33\ [27.55,60.63]$ & $79.93\ [67.79,87.77]$\\
4B & 4 & $19.74\ [14.38,24.80]$ & $40.20\ [23.85,57.53]$ & $35.57\ [19.35,54.50]$ & $49.88\ [33.71,65.11]$\\
\bottomrule
\end{tabular}
\end{table}

For the exact certificate, corresponding mean output-token savings are 25.94\% and 12.80\% for 2B at batch sizes one and four, and 35.88\% and 20.93\% for 4B. Fully parallel fixed-16 is faster in wall-clock time than the sequential and microbatched stopping methods in this experiment, emphasizing that omitted calls and end-to-end latency are distinct cost measures. On the 32 items for 2B, first-use Bayesian stopping consumes more input and output tokens than fixed-16 because of its prior construction, which later items can amortize. One timing per item on one backend limits transfer.

\paragraph{Fresh three-task execution.}
The fresh cohort of \cref{app:fresh-mechanism} adds 48 online inputs per task, executed with each task's development-selected arm and the necessary-condition selector (High temperature for Word16, Permuted for MMLU-Pro, Base for shortest paths). Each input receives fixed-16, the certificate, ASC at $0.95$ and $0.99$ and a four-call ESC window at microbatch four, plus fully parallel fixed-16. The first sixteen inputs per task also receive fixed and certificate runs at microbatch one. Input and method order are randomized, methods with the same input and batch size share batch seeds, and every issued call is charged. The pre-specified cost contrasts are the certificate-to-fixed ratios of mean batch-four request time, with 98.33\% paired input-bootstrap intervals (10,000 draws): $0.904$ $[0.858,0.947]$ on Word16, $0.784$ $[0.691,0.858]$ on MMLU-Pro and $0.845$ $[0.803,0.889]$ on shortest paths, all below one. All 192 certificate executions (144 at batch four, 48 at batch one) return their fixed-cap outputs, and the batch-one ratios are $0.76$, $0.71$ and $0.81$. ASC at $0.95$ is faster than the certificate on all three tasks, and none of the 432 ASC and ESC executions changed an observed output, although neither rule guarantees this (\cref{tab:fresh-online}). Fully parallel fixed-16 is 2.2 to 3.2 times faster than the certificate at batch four, on different batch trajectories.

\begin{table}[!htbp]
\centering\footnotesize\setlength{\tabcolsep}{4pt}
\caption{Fresh physical execution, 48 inputs per task: accuracy (\%), mean calls and mean request time (s) per input. Methods other than parallel fixed-16 run at microbatch four.}
\label{tab:fresh-online}
\begin{tabular}{@{}lrrrrrrrrr@{}}
\toprule
 & \multicolumn{3}{c}{Word16} & \multicolumn{3}{c}{MMLU-Pro} & \multicolumn{3}{c}{Shortest path}\\
\cmidrule(lr){2-4}\cmidrule(lr){5-7}\cmidrule(l){8-10}
Method & Acc. & Calls & Time & Acc. & Calls & Time & Acc. & Calls & Time\\
\midrule
Fixed-16 & 95.83 & 16.00 & 100.43 & 85.42 & 16.00 & 113.09 & 93.75 & 16.00 & 342.62\\
Certificate & 95.83 & 14.50 & 90.82 & 85.42 & 11.75 & 88.67 & 93.75 & 13.50 & 289.50\\
ASC 0.95 & 95.83 & 13.50 & 86.19 & 85.42 & 6.33 & 58.00 & 93.75 & 10.92 & 235.03\\
ASC 0.99 & 95.83 & 15.17 & 94.94 & 85.42 & 10.33 & 82.78 & 93.75 & 13.00 & 279.66\\
ESC window 4 & 95.83 & 15.42 & 97.23 & 85.42 & 6.92 & 65.20 & 93.75 & 13.58 & 292.62\\
Parallel fixed-16 & 97.92 & 16.00 & 35.71 & 85.42 & 16.00 & 39.74 & 91.67 & 16.00 & 89.39\\
\bottomrule
\end{tabular}
\end{table}

\subsection{Plug-in Stochastic Curtailment}
\label{app:curtailment}

The exact certificate coincides with the $\epsilon=0$ member of a family of rules that stop at the first prefix at which the estimated probability $\rho_N$ that the fixed-budget winner differs from the current winner is at most $\epsilon$, provided the estimated law gives every answer positive probability, as the Dirichlet laws below do. Without full support, an estimate can assign zero probability to a continuation that changes the output. Under the true iid law such a rule changes the fixed-budget output with probability at most $\epsilon$. With a plug-in law $\hat p_T$ the bound becomes $\min\{1,\epsilon+\mathbb{E}[(B-T)\,\mathrm{TV}(p,\hat p_T)]\}$, which is loose. On the 2,000 forced-choice MMLU-Pro trajectories (known option universe, no abstentions), the law is the Dirichlet$(\alpha)$ posterior mean from the prefix over the item's options ($\alpha=1/2$, with $\alpha\in\{0.1,1\}$ as sensitivities), and $\rho_N$ is computed exactly by the coefficient extraction of \cref{app:endpoint-laws} with the current winner in the role of gold. Re-implemented ASC, ESC, a generic Dirichlet posterior-mode rule and a majority lock are evaluated after every call and return fixed-16 plurality at the cap.

Only the certificate and the majority lock guarantee an unchanged output, although ASC at $0.99$ and ESC with eight-call windows also changed no output in this replay (\cref{fig:curtailment,tab:curtailment}). Among these zero-change rules the certificate saves the most: 28.4\% of calls, against 25.4\% for ASC at $0.99$, 22.3\% for the majority lock, which is exact and so by \cref{rem:earliest-exact-stopping} can never stop earlier, and 15.3\% for eight-call ESC. The ASC and ESC orderings are empirical: in the fresh online study, ASC at $0.99$ used fewer calls than the certificate on two of three tasks, also without an observed change (\cref{tab:fresh-online}). The plug-in family extends the certificate along the same trade-off, from 55.8\% saved with 2.1\% changed outputs at $\epsilon=0.005$ to 83.4\% saved with 17.6\% changed at $\epsilon=0.2$. It dominates the posterior-mode rule and the majority lock in this comparison but not ASC or ESC, whose $0.95$ and four-call settings save 40.8\% and 37.2\% of calls while changing 0.30\% and 0.45\% of outputs. The nominal $\epsilon$ is not a calibrated risk for this estimator: realized change rates exceed $\epsilon$ at every tested threshold up to $0.05$. A stronger prior ($\alpha=1$) halves the change rate at $\epsilon=0.005$, and a weak one ($\alpha=0.1$) stops after the first call on every trajectory, because a single vote already puts the estimated risk below every tested $\epsilon$. These are retrospective trade-offs on one benchmark, and within the plug-in family only the $\epsilon=0$ member, the exact certificate, needs no estimated law.

\begin{figure}[!htbp]
\centering
\includegraphics[width=\linewidth]{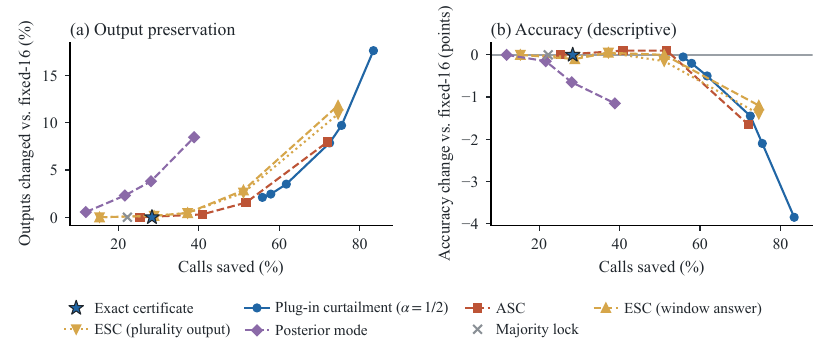}
\caption{Calls saved against changed outputs (a) and against accuracy change (b), relative to fixed-16 plurality, on 2,000 forced-choice MMLU-Pro trajectories: the plug-in curtailment family over $\epsilon$ ($\alpha=1/2$), re-implemented ASC and ESC, a Dirichlet$(1/2)$ posterior-mode rule and a majority lock. The two ESC curves stop at the same call and return either the answer of the unanimous window or the plurality of all calls so far. The exact certificate is the $\epsilon=0$ member of the family. Point estimates, with intervals in \cref{tab:curtailment} for the settings it lists.}
\label{fig:curtailment}
\end{figure}

\begin{table}[!htbp]
\centering\footnotesize\setlength{\tabcolsep}{4pt}
\caption{Stopping rules on 2,000 forced-choice MMLU-Pro trajectories, $B=16$: calls saved, changed outputs and accuracy change relative to fixed-16 plurality, with joint item-bootstrap 95\% intervals (at zero observed changes, the upper bound is the conservative cluster-level bound $1.83\%$).}
\label{tab:curtailment}
\resizebox{\linewidth}{!}{%
\begin{tabular}{@{}llrlll@{}}
\toprule
Rule & Parameter & Mean stop & Saved \% & Changed \% & Accuracy change (points)\\
\midrule
Exact certificate & $\epsilon=0$ & 11.46 & 28.35 $[27.28,29.33]$ & 0.00 $[0.00,1.83]$ & 0.00\\
Majority lock & $>B/2$ & 12.44 & 22.25 $[20.86,23.57]$ & 0.00 $[0.00,1.83]$ & 0.00\\
Plug-in curtailment, $\alpha=1/2$ & $\epsilon=0.005$ & 7.07 & 55.80 $[53.84,57.69]$ & 2.10 $[1.45,2.80]$ & $-0.05$ $[-0.55,+0.45]$\\
 & $\epsilon=0.02$ & 6.12 & 61.76 $[59.99,63.47]$ & 3.50 $[2.70,4.35]$ & $-0.50$ $[-1.10,+0.10]$\\
 & $\epsilon=0.05$ & 4.40 & 72.51 $[71.03,73.96]$ & 7.85 $[6.55,9.15]$ & $-1.45$ $[-2.40,-0.60]$\\
 & $\epsilon=0.2$ & 2.65 & 83.42 $[82.55,84.28]$ & 17.60 $[15.80,19.60]$ & $-3.85$ $[-5.35,-2.45]$\\
Plug-in curtailment, $\alpha=1$ & $\epsilon=0.005$ & 7.97 & 50.19 $[48.39,51.87]$ & 1.05 $[0.65,1.50]$ & $0.00$ $[-0.30,+0.30]$\\
Plug-in curtailment, $\alpha=0.1$ & every tested $\epsilon>0$ & 1.00 & 93.75 & 30.70 $[28.30,33.10]$ & $-5.60$ $[-7.60,-3.75]$\\
ASC & $\theta=0.90$ & 7.72 & 51.73 $[49.50,53.78]$ & 1.55 $[1.05,2.15]$ & $+0.10$ $[-0.30,+0.55]$\\
 & $\theta=0.95$ & 9.47 & 40.83 $[38.49,43.03]$ & 0.30 $[0.10,0.55]$ & $+0.10$ $[0.00,+0.25]$\\
 & $\theta=0.99$ & 11.94 & 25.35 $[23.18,27.48]$ & 0.00 $[0.00,1.83]$ & 0.00\\
ESC & $w=4$ & 10.05 & 37.21 $[34.71,39.55]$ & 0.45 $[0.20,0.75]$ & $+0.05$ $[-0.10,+0.25]$\\
 & $w=8$ & 13.55 & 15.32 $[13.57,17.05]$ & 0.00 $[0.00,1.83]$ & 0.00\\
Posterior mode, Dirichlet$(1/2)$ & $\theta=0.95$ & 12.54 & 21.62 $[19.12,24.10]$ & 2.30 $[1.60,3.05]$ & $-0.15$ $[-0.70,+0.40]$\\
\bottomrule
\end{tabular}%
}
\end{table}

\FloatBarrier

\section{Reproducibility Details}
\label{app:checkpoints}

The five checkpoints use official BF16 weights with text-only inputs and SDPA attention: Qwen3.5 \citep{qwen2026qwen35}, Gemma 4 \citep{gemmateam2026gemma4}, and Ministral 3 \citep{liu2026ministral3}. Qwen3.5 disables thinking, Gemma 4 omits its thinking token, and Ministral 3 uses its instruction checkpoint. Unless stated otherwise, in-house generation used Transformers 5.14.1, Accelerate 1.14.0, and \texttt{mistral-common} 1.11.5. \Cref{tab:checkpoint-identities} lists the pinned checkpoint revisions.

\paragraph{Generation settings.}
GSM-Symbolic uses native nonthinking chat templates with temperature $0.7$, top-$p=0.9$, top-$k=20$, repetition penalty $1.05$, and a 1,024-token generation cap. Native MMLU-Pro and Word16 use the same temperature, top-$p$, top-$k$, and repetition penalty, with no presence penalty and a maximum of 4,096 new tokens. The diversification arms inherit the native settings except that High temperature uses temperature $1.0$. The physical stopping runs use the native nonthinking MMLU-Pro protocol on one RTX PRO 6000 Blackwell GPU with BF16, Transformers/PyTorch SDPA, and the torch hybrid-attention fallback.

\begin{table}[htbp]
\centering\scriptsize\setlength{\tabcolsep}{4pt}
\caption{Checkpoint identities and pinned revisions.}
\label{tab:checkpoint-identities}
\begin{tabularx}{\linewidth}{>{\raggedright\arraybackslash}p{0.46\linewidth}X}
\toprule
Checkpoint & Revision\\
\midrule
\path{Qwen/Qwen3.5-0.8B} & \texttt{2fc06364} \texttt{715b967f} \texttt{1860aea9} \texttt{cf387788} \texttt{75588b17}\\
\path{Qwen/Qwen3.5-2B} & \texttt{15852e8c} \texttt{16360a2f} \texttt{ea060d61} \texttt{5a32b452} \texttt{70f8a8fc}\\
\path{Qwen/Qwen3.5-4B} & \texttt{851bf6e8} \texttt{06efd8d0} \texttt{a36b00dd} \texttt{f55e13cc} \texttt{b7b8cd0a}\\
\path{google/gemma-4-E2B-it} & \texttt{3e22461f} \texttt{65e89153} \texttt{144f8adb} \texttt{70e3b8c2} \texttt{cc9845a7}\\
\path{mistralai/Ministral-3-3B-Instruct-2512-BF16} & \texttt{b6d637be} \texttt{f2393152} \texttt{b3da2b2f} \texttt{de72eecd} \texttt{ee30557e}\\
\path{Qwen/Qwen3.5-27B} (judge and forced-choice replay, \cref{app:selector-intervention,app:scope-scale}) & \texttt{fc05daec} \texttt{18b0a78c} \texttt{049392ed} \texttt{2e771dde} \texttt{82bdf654}\\
{\path{Qwen/Qwen3-14B} (thinking mode, \cref{app:thinking-tokens})} & \texttt{40c06982} \texttt{4f4251a9} \texttt{1eefaf28} \texttt{1ebe4c54} \texttt{4efd3e18}\\
{\path{Qwen/Qwen3-8B} (thinking mode, \cref{app:thinking-tokens})} & \texttt{b968826d} \texttt{9c46dd60} \texttt{66d109ea} \texttt{bc625518} \texttt{8de91218}\\
\bottomrule
\end{tabularx}
\end{table}

\paragraph{Forced-choice MMLU-Pro prompt.}
The system instruction is: ``You are one sampled reasoning agent in a controlled multiple-choice study. Solve the problem independently, keep the rationale concise, and follow the requested output schema exactly.'' The user message requests a concise rationale of at most 40 plain-text words and specifies that a separate forced-choice scoring step determines the option ordering. No candidate pool or previous response is appended.

\paragraph{Parsers and invalid responses.}
Forced-choice MMLU-Pro scores the answer options after rationale processing. GSM-Symbolic parses the last boxed exact rational. Native MMLU-Pro parses a canonical option letter, and Word16 parses a whitespace-normalized word sequence. In every protocol, an invalid completion consumes a call and contributes no vote.

\paragraph{Later studies and public data.}
The effect-forecast study and the fresh native cohort use Qwen3.5-4B with the native settings above. The judge study keeps the forced-choice prompt and scoring for direct solving, generates judge rationales with at most 96 new tokens and scores one letter over the full option set. The public GSM8K samples are the Llama-3-8B-Instruct and Llama-3-70B-Instruct completions released by \citet{brown2024largelanguagemonkeys} (dataset revision \texttt{a9f8f73b}). We generated none of them. The public MATH samples come from the same release (Llama-3-8B-Instruct and Llama-3-70B-Instruct, dataset revision \texttt{a9f8f73b}). The Qwen3.5-27B forced-choice sequences were generated for the judge study. The fresh MMLU-Pro temperature study (\cref{app:temp-perm}) uses Qwen3.5-4B with the native settings and reaches the 4,096-token cap by staged continuation. The thinking-mode study (\cref{app:thinking-tokens}) runs Qwen3-14B and Qwen3-8B in vLLM 0.13.0 \citep{kwon2023vllm}.

\FloatBarrier

\end{document}